\newtheorem{thm}{Theorem}[section]
\newtheorem{lemma}[thm]{Lemma}
\newtheorem{corollary}[thm]{Corollary}
\newtheorem{proposition}[thm]{Proposition}
\def\onedot{.\xspace}
\def\eg{\emph{e.g}\onedot} 
\def\ie{\emph{i.e}\onedot}
\def\eqref#1{equation~\ref{#1}}
\def\1{\bm{1}}
\DeclareMathAlphabet{\mathsfit}{\encodingdefault}{\sfdefault}{m}{sl}
\SetMathAlphabet{\mathsfit}{bold}{\encodingdefault}{\sfdefault}{bx}{n}
\newcommand{\ve}[1]{\boldsymbol{#1}}
\newcommand\adamn{AdamP\xspace}
\newcommand\sgdn{SGDP\xspace}
\colorlet{algorithmemphcolor}{teal}
\title{AdamP: slowing down the slowdown for momentum optimizers on scale-invariant weights}
\author{
{ Byeongho Heo}$^{*,1}$ 
\quad { Sanghyuk Chun}$^{*,1,2}$ 
\quad { Seong Joon Oh}$^{1}$ 
\quad { Dongyoon Han}$^{1}$\\%
{\bf  Sangdoo Yun}$^{1}$
\quad {\bf  Gyuwan Kim}$^{1,2}$
\quad {\bf  Youngjung Uh}$^{3,\dagger}$
\quad {\bf  Jung-Woo Ha}$^{1,2}$ \\% 
\vspace{0.2em} \\%
{\small Naver AI Lab}$^{1}$, {\small Naver Clova}$^{2}$ \\%
{\small Applied Information Engineering, Yonsei University}$^{3}$ \\%
{\small Equal contribution}$^{*}$,
{\small Works done at Naver AI Lab}$^{\dagger}$
}
\definecolor{darkergreen}{RGB}{21, 152, 56}
\newcommand\greenp[1]{\textcolor{darkergreen}{(#1)}}
\definecolor{red2}{RGB}{252, 54, 65}
\newcommand\redp[1]{\textcolor{red2}{(#1)}}
\newcommand\greenpscript[1]{\scriptsize\greenp{#1}}
\newcommand\redpscript[1]{\scriptsize\redp{#1}}
\begin{document}

\maketitle

\vspace{-.8em}
\begin{abstract}
Normalization techniques, such as batch normalization (BN), are a boon for modern deep learning. They let weights converge more quickly with often better generalization performances. It has been argued that the normalization-induced scale invariance among the weights provides an advantageous ground for gradient descent (GD) optimizers: the effective step sizes are automatically reduced over time, stabilizing the overall training procedure.
It is often overlooked, however, that the additional introduction of \textit{momentum} in GD optimizers results in a far more rapid reduction in effective step sizes for scale-invariant weights, a phenomenon that has not yet been studied and may have caused unwanted side effects in the current practice. 
This is a crucial issue because arguably the vast majority of modern deep neural networks consist of (1) momentum-based GD (\eg SGD or Adam) and (2) scale-invariant parameters (\eg more than 90\% of the weights in ResNet are scale-invariant due to BN).
In this paper, we verify that the widely-adopted combination of the two ingredients lead to the premature decay of effective step sizes and sub-optimal model performances. 
We propose a simple and effective remedy, \sgdn and \adamn: get rid of the radial component, or the norm-increasing direction, at each optimizer step. Because of the scale invariance, this modification only alters the effective step sizes without changing the effective update directions, thus enjoying the original convergence properties of GD optimizers.
Given the ubiquity of momentum GD and scale invariance in machine learning, we have evaluated our methods against the baselines on 13 benchmarks. They range from vision tasks like classification (\eg ImageNet), retrieval (\eg CUB and SOP), and detection (\eg COCO) to language modelling (\eg WikiText) and audio classification (\eg DCASE) tasks. We verify that our solution brings about uniform gains in performances in those benchmarks. Source code is available at \url{https://github.com/clovaai/adamp}.
\end{abstract}

\vspace{-.8em}
\section{Introduction}
\label{sec:introduction}

Normalization techniques, such as batch normalization (BN)~\citep{batchnorm}, layer normalization (LN)~\citep{layernorm}, instance normalization (IN)~\citep{instancenorm}, and group normalization (GN)~\citep{groupnorm}, have become standard tools for training deep neural network models.
Originally proposed to reduce the internal covariate shift~\citep{batchnorm}, normalization methods have proven to encourage several desirable properties in deep neural networks, such as better generalization~\citep{how_batch_norm} and the scale invariance~\citep{hoffer2018norm}.

Prior studies have observed that the normalization-induced scale invariance of weights stabilizes the convergence for the neural network training~\citep{hoffer2018norm,arora2018theoretical,kohler2019exponential,dukler2020optimization}. We provide a sketch of the argument here. Given weights $\ve{w}$ and an input $\ve{x}$, we observe that the normalization makes the weights become scale-invariant:%
\begin{equation}
\label{eq:scaleinvariance}
    \mathrm{Norm}(\ve{w}^\top \ve{x}) = \mathrm{Norm}(c\ve{w}^\top \ve{x})
    \quad
    \forall c>0.
\end{equation}
The resulting equivalence relation among the weights lets us consider the weights only in terms of their $\ell_2$-normalized vectors $\widehat{\ve{w}}:=\frac{\ve{w}}{\|\ve{w}\|_2}$ on the sphere $\mathbb{S}^{d-1}=\{\ve{v}\in\mathbb{R}^{d}\,:\,\|v\|_2=1\}$. We refer to $\mathbb{S}^{d-1}$ as the \textit{effective space}, as opposed to the nominal space $\mathbb{R}^{d}$ where the actual optimization algorithms operate. The mismatch between these spaces results in the discrepancy between the gradient descent steps on $\mathbb{R}^d$ and their effective steps on $\mathbb{S}^{d-1}$. Specifically, for the gradient descent updates, the \textit{effective step sizes} $\|\Delta\widehat{\ve{w}}_{t+1}\|_2:=\|\widehat{\ve{w}}_{t+1}-\widehat{\ve{w}}_{t}\|_2$ are the scaled versions of the nominal step sizes $\|\Delta\ve{w}_{t+1}\|_2:=\|\ve{w}_{t+1}-\ve{w}_{t}\|_2$ by the factor $\frac{1}{\|\ve{w}_t\|_2}$~\citep{hoffer2018norm}.
Since $\|\ve{w}_t\|_2$ increases during training~\citep{soudry2018implicit,arora2018theoretical}, the effective step sizes $\|\Delta\widehat{\ve{w}}_t\|_2$ decrease as the optimization progresses. The automatic decrease in step sizes stabilizes the convergence of gradient descent algorithms applied on models with normalization layers: even if the nominal learning rate is set to a constant, the theoretically optimal convergence rate is guaranteed~\citep{arora2018theoretical}.

\begin{wrapfigure}{r}{0.3\linewidth}
    \vspace{-1em}
    \centering
    \includegraphics[width=\linewidth]{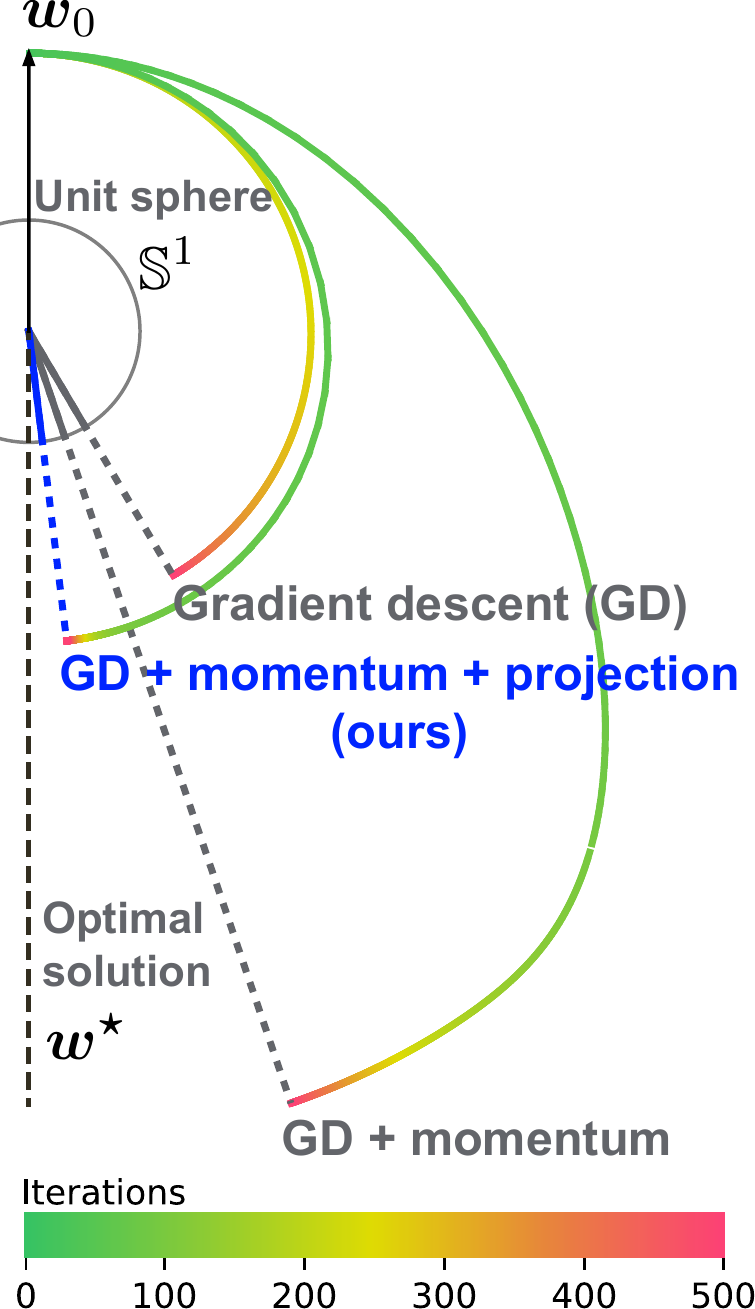}
    \caption{\small \textbf{Optimizer trajectories.} Shown is the $\bm{w}_t$ for the optimization problem $\max_{\bm{w}}\frac{\bm{w}^\top\bm{w}^\star}{\| \bm{w} \|_2\|\bm{w}^\star\|_2}s$. Trajectories start from $\bm{w}_0$ towards the optimal solution $\bm{w^\star}$. The problem is invariant to the scale of $\bm{w}$. Video version in the attached code. %
    }
    \label{fig:teaser}
    \vspace{-1em}
\end{wrapfigure}

In this work, we show that the widely used \textit{momentum}-based gradient descent optimizers (\eg SGD and Adam~\citep{adam}) decreases the effective step size $\Delta\widehat{\ve{w}}_t$ even more rapidly than the momentum-less counterparts considered in \citep{arora2018theoretical}.
This leads to a slower effective convergence for $\widehat{\ve{w}}_t$ and potentially sub-optimal model performances. We illustrate this effect on a 2D toy optimization problem in Figure~\ref{fig:teaser}. 
Compared to ``GD'', ``GD+momentum'' is much faster in the nominal space $\mathbb{R}^2$, but the norm growth slows down the effective convergence in $\mathbb{S}^1$, reducing the acceleration effect of momentum.
This phenomenon is not confined to the toy setup, for example, 95.5\% and 91.8\% of the parameters of the widely-used ResNet18 and ResNet50~\citep{resnet} are scale-invariant due to BN. The majority of deep models nowadays are trained with SGD or Adam with momentum. And yet, our paper is first to delve into the issue in the widely-used combination of scale-invariant parameters and momentum-based optimizers.

We propose a simple solution to slow down the decay of effective step sizes while maintaining the step directions of the original optimizer in the effective space.
At each iteration of a momentum-based gradient descent optimizer, we propose to project out the radial component (\ie component parallel to $\ve{w}$) from the update, thereby reducing the increase in the weight norm over time.
Because of the scale invariance, the procedure does not alter the update direction in the effective space; it only changes the effective step sizes.
We can observe the benefit of our optimizer in the toy setting in Figure~\ref{fig:teaser}. ``Ours'' suppresses the norm growth
and thus slows down the effective learning rate decay,
allowing the momentum-accelerated convergence in $\mathbb{R}^2$ to be transferred to the actual space $\mathbb{S}^1$. ``Ours'' converges most quickly and achieves the best terminal objective value.
We do not discourage the use of momentum-based optimizers; momentum is often an indispensable ingredient that enables best performances by deep neural networks. Instead, we propose to use our method that helps momentum realize its full potential by letting the acceleration operate on the effective space, rather than squandering it on increasing norms to no avail.

The projection algorithm is simple and readily applicable to various optimizers for deep neural networks. We apply this technique on SGD and Adam (\sgdn and \adamn, respectively) and verify the slower decay of effective learning rates as well as the resulting performance boosts over a diverse set of practical machine learning tasks including image classification, image retrieval, object detection, robustness benchmarks, audio classification, and language modelling.

As a side note, we have identified certain similarities between our approaches and \citet{cho2017riemannian}.  \citet{cho2017riemannian} have considered performing the optimization steps for the scale-invariant parameters on the spherical manifold. We argue that our approaches are conceptually different, as ours operate on the ambient Euclidean space, and are more practical. See Appendix \S\ref{subsec:cho2017riemannian} for a more detailed argumentation based on conceptual and empirical comparisons.

\vspace{-.5em}
\section{Problem}
\label{sec:problem}
\vspace{-.5em}

Widely-used {normalization} techniques~\citep{batchnorm,weightnorm,layernorm,instancenorm,groupnorm} in deep networks result in the scale invariance for weights. We show that the introduction of {momentum} in gradient-descent (GD) optimizers, when applied on such scale-invariant parameters, decreases the effective learning rate much more rapidly. 
This phenomenon has not yet been studied in literature, despite its ubiquity. 
We suspect the resulting early convergence may have introduced sub-optimality in many SGD and Adam-trained models across machine learning tasks. 
The analysis motivates our optimizer in \S\ref{sec:method}.

\subsection{Normalization layer and scale invariance}
\label{subsec:normalization}

For a tensor $x\in\mathbb{R}^{n_1\times\cdots\times n_r}$ of rank $r$, we define the normalization operation along the axes $\ve{k}\in\{0,1\}^{\{1,\cdots,r\}}$ as $\mathrm{Norm}_{\ve{k}}(x)=\frac{x-\mu_{\ve{k}}(x)}{\sigma_{\ve{k}}(x)}$ where $\mu_{\ve{k}},\sigma_{\ve{k}}$ are the mean and standard deviation functions along the axes $\ve{k}$, without axes reduction (to allow broadcasted operations with $x$). Depending on $\ve{k}$, $\mathrm{Norm}_{\ve{k}}$ includes special cases like batch normalization (BN)~\citep{batchnorm}.

For a function $g(\ve{u})$, we say that $g$ is \textbf{scale invariant} if $g(c\ve{u})=g(\ve{u})$ for any $c>0$. We then observe that $\mathrm{Norm}(\cdot)$ is scale invariant. In particular, under the context of neural networks,
\begin{equation}
    \mathrm{Norm}(\ve{w}^\top \ve{x})=
    \mathrm{Norm}((c\ve{w})^\top \ve{x})
\label{eq:direct_scale_invariance}
\end{equation}
for any $c>0$, leading to the scale invariance against the weights $\ve{w}$ preceding the normalization layer. The norm of such weights $\|\ve{w}\|_2$ does not affect the forward $f_{\ve{w}}(\ve{x})$ or the backward $\nabla_{\ve{w}} f_{\ve{w}}(\ve{x})$ computations of a neural network layer $f_{\ve{w}}$ parameterized by $\ve{w}$. We may represent the \textbf{scale-invariant weights} via their $\ell_2$-normalized vectors $\widehat{\ve{w}}:=\frac{\ve{w}}{\|\ve{w}\|_2}\in\mathbb{S}^{d-1}$ (\ie $c=\frac{1}{\|\ve{w}\|_2}$).

\subsection{Notations for the optimization steps}

\begin{wrapfigure}{r}{0.24\linewidth}
    \vspace{-1em}
    \includegraphics[width=.95\linewidth]{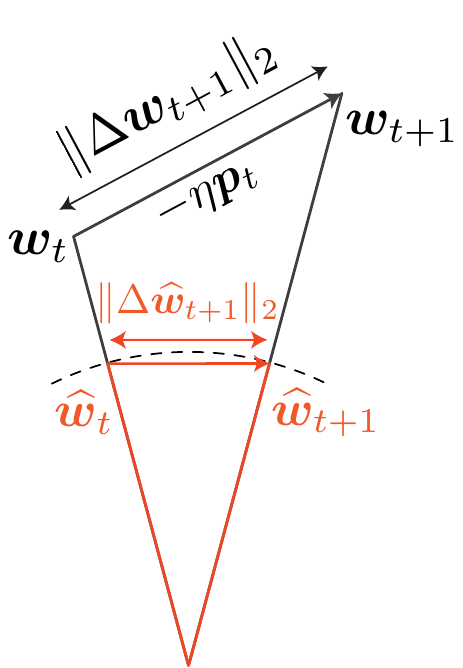}
    \vspace{-1em}
\end{wrapfigure}

See the illustration on the right for the summary of notations describing an optimization step. We write a gradient descent (GD) algorithm as:
\begin{equation}
    \label{eq:optimization-step}
    \ve{w}_{t+1}\leftarrow \ve{w}_{t}-\eta \ve{p}_t
\end{equation}
where $\eta>0$ is the user-defined \textbf{learning rate}. The norm of the difference $\|\Delta {\ve{w}}_{t+1}\|_2:=\|{\ve{w}}_{t+1}-{\ve{w}}_{t}\|_2=\eta \|\ve{p}_t\|_2$ is referred to as the \textbf{step size}. When $\ve{p}=\nabla_{\ve{w}}f(\ve{w})$, \eqref{eq:optimization-step} is the vanilla GD algorithm. Momentum-based variants have more complex forms for $\ve{p}$.

In this work, we study the optimization problem in terms of the $\ell_2$ normalized weights in $\mathbb{S}^{d-1}$, as opposed to the nominal space $\mathbb{R}^d$. As the result of \eqref{eq:optimization-step}, an \textbf{effective optimization step} takes place in $\mathbb{S}^{d-1}$: $\Delta\widehat{\ve{w}}_{t+1}:=\widehat{\ve{w}}_{t+1}-\widehat{\ve{w}}_t$. We refer to the \textbf{effective step size} $\|\Delta\widehat{\ve{w}}_{t+1}\|_2$.

\subsection{Effective step sizes for vanilla gradient descent (GD)}

We approximate the effective step sizes for the scale-invariant $\ve{w}$ under the vanilla GD algorithm. We observe that the scale invariance $f(c\ve{w})\equiv f(\ve{w})$ leads to the orthogonality:
\begin{equation}
    0=\frac{\partial f(c\ve{w})}{\partial c}=\ve{w}^\top \nabla_{\ve{w}} f(\ve{w}).
    \label{eq:weight-gradient-orthogonality}
\end{equation}
For example, the vanilla GD update step $\ve{p}=\nabla_{\ve{w}} f(\ve{w})$ is always perpendicular to $\ve{w}$.
Based on this, we establish the effective step size for $\ve{w}$ on $\mathbb{S}^{d-1}$:
\begin{equation}
    \label{eq:effective-step-size}
    \|\Delta\widehat{\ve{w}}_{t+1}\|_2:=
    \left\|\frac{\ve{w}_{t+1}}{\|\ve{w}_{t+1}\|_2}-\frac{\ve{w}_{t}}{\|\ve{w}_{t}\|_2}\right\|_2 \approx
    \left\|\frac{\ve{w}_{t+1}}{\|\ve{w}_{t+1}\|_2}-\frac{\ve{w}_{t}}{\|\ve{w}_{t+1}\|_2}\right\|_2=
    \frac{\|\Delta\ve{w}_{t+1}\|_2}{\|\ve{w}_{t+1}\|_2}
\end{equation}
where the approximation assumes $\frac{1}{\|\ve{w}_{t+1}\|_2}-\frac{1}{\|\ve{w}_{t}\|_2}=o(\eta)$, which holds when $\ve{p}_t \perp \ve{w}_t$ as in the vanilla GD. We have thus derived that the effective step size on $\mathbb{S}^{d-1}$ is inversely proportional to the weight norm, in line with the results in \citet{hoffer2018norm}.

Having established the relationship between the effective step sizes and the weight norm of a scale-invariant parameters (\textit{s.i.p}), we derive the formula for its growth under the vanilla GD optimization. 
\begin{lemma}[Norm growth by GD, Lemma 2.4 in \citet{arora2018theoretical}]
\label{lemma:vanilla-gd-norm-growth}
For a s.i.p. $\ve{w}$ and the vanilla GD, where ${\ve{p}}_t = \nabla_{\ve{w}} f({\ve{w}}_{t})$,
\begin{equation}
    \|\ve{w}_{t+1}\|_2^2 = \|\ve{w}_t\|_2^2 + \eta^2 \| {\ve{p}}_t \|_2^2.
\end{equation}
\end{lemma}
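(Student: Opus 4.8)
The plan is to compute $\|\ve{w}_{t+1}\|_2^2$ directly by substituting the vanilla GD update rule \eqref{eq:optimization-step} and expanding the resulting squared Euclidean norm. Writing $\ve{w}_{t+1}=\ve{w}_t-\eta\ve{p}_t$ with $\ve{p}_t=\nabla_{\ve{w}}f(\ve{w}_t)$, the expansion of the inner product $\langle\ve{w}_{t+1},\ve{w}_{t+1}\rangle$ produces three terms: the original squared norm $\|\ve{w}_t\|_2^2$, a cross term $-2\eta\,\ve{w}_t^\top\ve{p}_t$, and the step contribution $\eta^2\|\ve{p}_t\|_2^2$. This is a one-line algebraic identity, so no approximation is needed here (unlike in the effective-step-size derivation \eqref{eq:effective-step-size}).

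The only substantive step is showing that the cross term vanishes exactly. I would invoke the orthogonality relation \eqref{eq:weight-gradient-orthogonality}, which is a direct consequence of the scale invariance $f(c\ve{w})\equiv f(\ve{w})$: differentiating in $c$ and evaluating at $c=1$ yields $\ve{w}^\top\nabla_{\ve{w}}f(\ve{w})=0$. Since the update direction for vanilla GD is precisely $\ve{p}_t=\nabla_{\ve{w}}f(\ve{w}_t)$, this gives $\ve{w}_t^\top\ve{p}_t=0$, so the middle term drops out and the stated identity follows immediately. Concretely, I would present the chain
\begin{equation}
    \|\ve{w}_{t+1}\|_2^2
    =\|\ve{w}_t-\eta\ve{p}_t\|_2^2
    =\|\ve{w}_t\|_2^2-2\eta\,\ve{w}_t^\top\ve{p}_t+\eta^2\|\ve{p}_t\|_2^2
    =\|\ve{w}_t\|_2^2+\eta^2\|\ve{p}_t\|_2^2,
\end{equation}
where the final equality uses $\ve{w}_t^\top\ve{p}_t=0$.

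There is essentially no obstacle: the result is a Pythagorean identity in disguise, with the ``legs'' being $\ve{w}_t$ and $-\eta\ve{p}_t$, and the orthogonality of the gradient to the weight (guaranteed by scale invariance) is exactly what makes the hypotenuse relation hold. The only place demanding care is ensuring that the orthogonality \eqref{eq:weight-gradient-orthogonality} is applied to the true gradient at $\ve{w}_t$, which is valid since the lemma specializes to vanilla GD with $\ve{p}_t=\nabla_{\ve{w}}f(\ve{w}_t)$; for momentum-based optimizers the update direction need not be orthogonal to $\ve{w}_t$, which is precisely why the norm growth analyzed later in the paper differs from this clean quadratic recurrence.
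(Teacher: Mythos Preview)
Your proof is correct and matches the paper's approach exactly: the paper simply states that the lemma follows from the orthogonality in \eqref{eq:weight-gradient-orthogonality}, and your expansion of $\|\ve{w}_t-\eta\ve{p}_t\|_2^2$ together with $\ve{w}_t^\top\ve{p}_t=0$ is precisely that argument spelled out.
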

The lemma follows from the orthogonality in \eqref{eq:weight-gradient-orthogonality}.
It follows that the norm of a scale-invariant parameter $\|\ve{w}\|_2$ is monotonically increasing and consequently decreases the effective step size for $\ve{w}$. %
\citet{arora2018theoretical} has further shown that GD with the above adaptive step sizes converges to a stationary point at the theoretically optimal convergence rate $O(T^{-1/2})$ under a fixed learning rate. %

\vspace{-.3em}
\subsection{Rapid decay of effective step sizes for momentum-based GD}
\label{sec:2.4_momentum_norm_growth}

Momentum is designed to accelerate the convergence of gradient-based optimization by letting $\ve{w}$ escape high-curvature regions and cope with small and noisy gradients. It has become an indispensable ingredient for training modern deep neural networks. A momentum update follows:
\begin{align}
\label{eq:momentum}
\begin{split}
\ve{w}_{t+1} \leftarrow \ve{w}_t - \eta \ve{p}_t, \qquad \ve{p}_t \leftarrow \beta \ve{p}_{t-1} + \nabla_{\ve{w}_{t}} f(\ve{w}_{t})
\end{split}
\end{align}
for steps $t\geq 0$, where $\beta\in(0,1)$ and $\ve{p}_{-1}$ is initialized at $\ve{0}$. Note that the step direction $\ve{p}_t$ and the parameter $\ve{w}_t$ may not be perpendicular anymore.
We show below that momentum increases the weight norm under the scale invariance, even more so than does the vanilla GD.
\begin{lemma}[Norm growth by momentum]
\label{lemma:momentum-gd-norm-growth}
For a s.i.p. $\ve{w}$ 
updated via \eqref{eq:momentum}, we have
\vspace{-.2em}
\begin{equation}
\label{eq:norm_growth_momentum}
    \| \ve{w}_{t+1} \|_2^2 = \| \ve{w}_{t} \|_2^2 + \eta^2 \| {\ve{p}}_t \|_2^2 + 2 \eta^2 \sum_{k=0}^{t-1} \beta^{t-k}  \|{\ve{p}}_k\|^2_2.
\end{equation}
\end{lemma}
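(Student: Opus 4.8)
The plan is to expand the squared norm of the momentum update and reduce the entire lemma to evaluating a single cross term. Writing out the update in \eqref{eq:momentum},
\[
\|\ve{w}_{t+1}\|_2^2 = \|\ve{w}_t - \eta\ve{p}_t\|_2^2 = \|\ve{w}_t\|_2^2 - 2\eta\,\ve{w}_t^\top\ve{p}_t + \eta^2\|\ve{p}_t\|_2^2,
\]
so all the content lies in showing that $\ve{w}_t^\top\ve{p}_t = -\eta\sum_{k=0}^{t-1}\beta^{t-k}\|\ve{p}_k\|_2^2$; substituting this back yields \eqref{eq:norm_growth_momentum} directly, since $-2\eta\,\ve{w}_t^\top\ve{p}_t = 2\eta^2\sum_{k=0}^{t-1}\beta^{t-k}\|\ve{p}_k\|_2^2$. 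Note that for vanilla GD this cross term vanishes by orthogonality and one recovers Lemma~\ref{lemma:vanilla-gd-norm-growth}; with momentum $\ve{p}_t$ and $\ve{w}_t$ are no longer perpendicular, and quantifying the resulting cross term is the heart of the argument.

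To evaluate it I would set $a_t := \ve{w}_t^\top\ve{p}_t$ and derive a recursion. Using $\ve{p}_t = \beta\ve{p}_{t-1} + \nabla_{\ve{w}_t}f(\ve{w}_t)$ together with the scale-invariance orthogonality \eqref{eq:weight-gradient-orthogonality}, namely $\ve{w}_t^\top\nabla_{\ve{w}_t}f(\ve{w}_t)=0$, the gradient term drops out and $a_t = \beta\,\ve{w}_t^\top\ve{p}_{t-1}$. Substituting the previous update $\ve{w}_t = \ve{w}_{t-1} - \eta\ve{p}_{t-1}$ into the remaining inner product gives $\ve{w}_t^\top\ve{p}_{t-1} = a_{t-1} - \eta\|\ve{p}_{t-1}\|_2^2$, hence the one-step recursion
\[
a_t = \beta\,a_{t-1} - \beta\eta\,\|\ve{p}_{t-1}\|_2^2 .
\]
The initialization is clean: since $\ve{p}_{-1}=\ve{0}$ we have $\ve{p}_0 = \nabla_{\ve{w}_0}f(\ve{w}_0)$, so $a_0 = \ve{w}_0^\top\nabla_{\ve{w}_0}f(\ve{w}_0)=0$, again by orthogonality.

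The final step is to unroll this linear recursion from $a_0=0$, giving $a_t = -\eta\sum_{k=0}^{t-1}\beta^{t-k}\|\ve{p}_k\|_2^2$; a short induction on $t$ confirms the exponent $\beta^{t-k}$ and the index range (the sum is empty at $t=0$, consistent with $a_0=0$). Plugging back into the norm expansion completes the proof. I do not anticipate a genuinely hard step: the only subtlety is the bookkeeping of the geometric weights $\beta^{t-k}$ and the summation limits, which is easily pinned down by checking $t=1,2$ against the closed form. The single conceptual ingredient, that the gradient contribution to $\ve{w}_t^\top\ve{p}_t$ vanishes, is exactly the orthogonality already established in \eqref{eq:weight-gradient-orthogonality}, and everything else is elementary algebra on the momentum recursion.
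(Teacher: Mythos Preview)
Your proposal is correct and follows essentially the same route as the paper: expand $\|\ve{w}_{t+1}\|_2^2$, reduce to computing the cross term $\ve{w}_t^\top\ve{p}_t$, and establish the closed form $-\eta\sum_{k=0}^{t-1}\beta^{t-k}\|\ve{p}_k\|_2^2$ by induction using the orthogonality \eqref{eq:weight-gradient-orthogonality} and the update rules. The only cosmetic difference is that you name the quantity $a_t$ and phrase the induction as unrolling a linear recursion, whereas the paper writes the inductive step directly; the mathematical content is identical.
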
%
Proof is in the Appendix~\S\ref{appendixsec:proofs}. Comparing Lemma~\ref{lemma:vanilla-gd-norm-growth} and \ref{lemma:momentum-gd-norm-growth}, we notice that the formulation is identical, except for the last term on the right hand side of Lemma~\ref{lemma:momentum-gd-norm-growth}. This term is not only non-negative, but also is an accumulation of the past updates. This additional term results in the significantly accelerated increase of weight norms when the momentum is used. We derive a more precise asymptotic ratio of the weight norms for the GD with and without momentum below.
\begin{corollary}[Asymptotic norm growth comparison]
\label{corollary:asymptotic-norm}
Let $\|\ve{w}_t^\mathrm{GD}\|_2$ and $\|\ve{w}_t^\mathrm{GDM}\|_2$ be the weight norms at step $t\geq 0$, following the recursive formula in Lemma~\ref{lemma:vanilla-gd-norm-growth} and \ref{lemma:momentum-gd-norm-growth}, respectively. We assume that the norms of the updates $\|\ve{p}_t\|_2$ for GD with and without momentum are identical for every $t\geq 0$. We further assume that the sum of the update norms is non-zero and bounded: $0<\sum_{t\geq 0}\|\ve{p}_t\|_2^2 < \infty$.
Then, the asymptotic ratio between the two norms is given by:
\begin{equation}
    \frac{\|\ve{w}_t^\mathrm{GDM}\|_2^2-\|\ve{w}_0\|_2^2}{\|\ve{w}_t^\mathrm{GD}\|_2^2-\|\ve{w}_0\|_2^2}\longrightarrow 
    1+\frac{2\beta}{1-\beta}
    \quad\mathrm{as}\quad
    t\rightarrow \infty.
\end{equation}
\end{corollary}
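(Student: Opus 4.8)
The plan is to collapse both recursions into exact, closed-form expressions for $\|\ve{w}_t\|_2^2-\|\ve{w}_0\|_2^2$, assemble the ratio at finite $t$, and only then let $t\to\infty$. Throughout I write $a_k:=\|\ve{p}_k\|_2^2\ge 0$ and $S:=\sum_{k\ge 0}a_k\in(0,\infty)$, the latter being finite and positive by hypothesis. First I would telescope over the transitions $s=0,\dots,t-1$. Summing Lemma~\ref{lemma:vanilla-gd-norm-growth} gives
\[
\|\ve{w}_t^\mathrm{GD}\|_2^2-\|\ve{w}_0\|_2^2=\eta^2\sum_{s=0}^{t-1}a_s,
\]
while summing Lemma~\ref{lemma:momentum-gd-norm-growth} produces the same leading term together with the accumulated double sum $2\eta^2\sum_{s=0}^{t-1}\sum_{k=0}^{s-1}\beta^{s-k}a_k$.

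The second step is to make that double sum explicit. Exchanging the order of summation and evaluating the inner geometric series,
\[
\sum_{s=0}^{t-1}\sum_{k=0}^{s-1}\beta^{s-k}a_k=\sum_{k=0}^{t-2}a_k\sum_{j=1}^{t-1-k}\beta^{j}=\sum_{k=0}^{t-2}a_k\,\frac{\beta\bigl(1-\beta^{\,t-1-k}\bigr)}{1-\beta}.
\]
Substituting into the quotient, the common factor $\eta^2$ cancels and I obtain the exact identity
\[
\frac{\|\ve{w}_t^\mathrm{GDM}\|_2^2-\|\ve{w}_0\|_2^2}{\|\ve{w}_t^\mathrm{GD}\|_2^2-\|\ve{w}_0\|_2^2}=1+\frac{2\beta}{1-\beta}\cdot\frac{\sum_{k=0}^{t-2}a_k\bigl(1-\beta^{\,t-1-k}\bigr)}{\sum_{s=0}^{t-1}a_s},
\]
valid for all $t$ large enough that the denominator is positive (guaranteed since $S>0$).

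The final step is the limit $t\to\infty$. The denominator $\sum_{s=0}^{t-1}a_s\to S$, and I split the numerator factor as $\sum_{k=0}^{t-2}a_k\bigl(1-\beta^{\,t-1-k}\bigr)=\sum_{k=0}^{t-2}a_k-\sum_{k=0}^{t-2}a_k\beta^{\,t-1-k}$; the first piece tends to $S$. The main obstacle is showing the correction $\sum_{k=0}^{t-2}a_k\beta^{\,t-1-k}\to 0$: because the geometric weights $\beta^{\,t-1-k}$ concentrate near $k=t$, one cannot pass the limit inside termwise, so summability must be used. I would handle it with an $\epsilon$-$N$ splitting argument: given $\epsilon>0$, pick $N$ with $\sum_{k\ge N}a_k<\epsilon$ (possible since $S<\infty$), then for $t>N$ split the sum at $k=N$. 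The head $\sum_{k<N}a_k\beta^{\,t-1-k}$ has finitely many terms, each tending to $0$ as $t\to\infty$ because $\beta\in(0,1)$; the tail is bounded by $\sum_{k\ge N}a_k<\epsilon$ since $\beta^{\,t-1-k}\le 1$. Hence the correction vanishes, the numerator fraction converges to $S/S=1$, and the ratio tends to $1+\frac{2\beta}{1-\beta}$, as claimed. Everything except this boundary estimate is routine algebra; the estimate is exactly where the assumption $\sum_{t\ge 0}\|\ve{p}_t\|_2^2<\infty$ earns its keep.
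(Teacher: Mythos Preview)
Your proposal is correct and follows essentially the same approach as the paper: telescope the two recursions, reduce to showing the weighted ratio $\frac{\sum_k a_k\sum_{l=1}^{t-1-k}\beta^l}{\sum_k a_k}\to\frac{\beta}{1-\beta}$, and then run an $\epsilon$-$N$ splitting that uses summability of $(a_k)$ for the tail and geometric decay for the head. The only cosmetic difference is that you first evaluate the inner geometric sum in closed form, isolating the single residual $\sum_{k}a_k\beta^{\,t-1-k}$, whereas the paper keeps the partial geometric sums and tracks three residual terms $r_1,r_2,r_3$; your packaging is a bit cleaner but the argument is the same.
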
%
Proof in the Appendix~\S\ref{appendixsec:proofs}. While the identity assumption for $\|\ve{p}_t\|_2$ between GD with and without momentum is strong, the theory is designed to illustrate an approximate norm growth ratios between the algorithms. For a popular choice of $\beta=0.9$, the factor is as high as $1+2\beta/(1-\beta)=19$.
Our observations are also applicable to Nesterov momentum and momentum-based adaptive optimizers like Adam.
We later verify that the momentum induce the increase in weight norms and thus rapidly reduces the effective learning rates in many realistic setups of practical relevance (\S\ref{subsec:ours-norm-growth} and \S\ref{sec:experiments}).
\section{Method}
\label{sec:method}

We have studied the accelerated decay of effective learning rates for scale-invariant weights (\eg those preceding a normalization layer) under the momentum. In this section, we propose a projection-based solution that prevents the momentum-induced effective step size decreases while not changing the update directions in the effective weight space $\mathbb{S}^{d-1}$.

\subsection{Our method: Projected updates}
\label{subsec:our-method}

\begin{wrapfigure}{r}{0.305\linewidth}
\vspace{-3.8em}
    \centering
    \includegraphics[width=\linewidth]{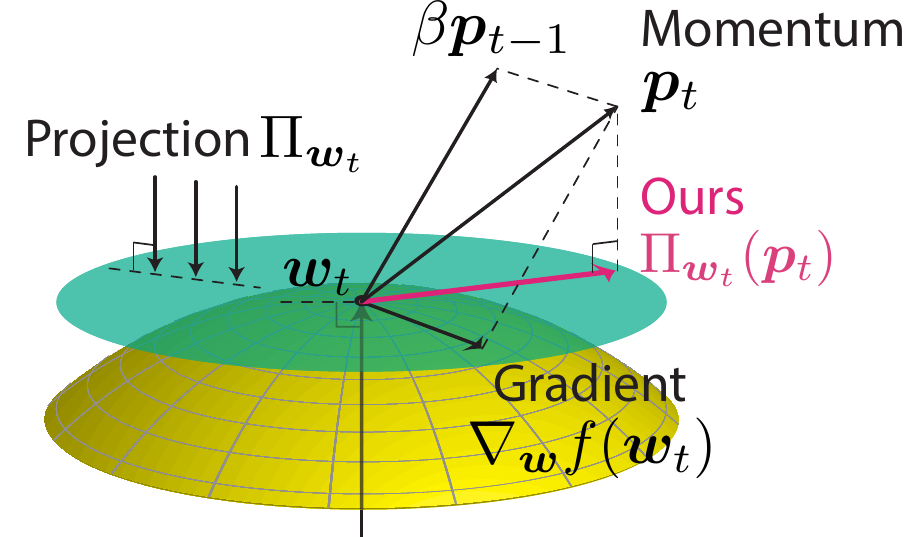}
    \caption{\small Vector directions of the gradient, momentum, and ours.}
    \label{fig:step_comp}
\end{wrapfigure}

We remove the accumulated error term in Lemma~\ref{lemma:momentum-gd-norm-growth}, while retaining the benefits of momentum, through a simple modification.
Let $\Pi_{\ve{w}} (\cdot)$ be a projection onto the tangent space of $\ve{w}$:
\begin{equation}
    \label{eq:projection}
    \Pi_{\ve{w}} (\ve{x}) := \ve{x} - (\widehat{\ve{w}} \cdot \ve{x}) \widehat{\ve{w}}.
\end{equation}
We apply $\Pi_{\ve{w}} (\cdot)$ to the momentum update $\ve{p}$ (\eqref{eq:momentum}) to remove the radial component, which accumulates the weight norms without contributing to the optimization. Our modified update rule is:
\begin{align}
\begin{split}
    \label{eq:modified_ours}
    &\ve{w}_{t+1} = \ve{w}_t - \eta \ve{q}_t,\\
    &\ve{q}_{t} =
    \begin{cases}
    \Pi_{\ve{w}_{t}} (\ve{p}_t) \qquad \,\,\,\text{if } \mathrm{cos}(\ve{w}_t,\nabla_{\ve{w}} f(\ve{w}_t))  < \delta / \sqrt{\text{dim}(\ve{w})}\\
    \ve{p}_t \qquad\qquad\quad \text{otherwise}
\end{cases}
\end{split}
\end{align}
where $\mathrm{cos}(\ve{a},\ve{b}):=\frac{|\ve{a}^\top \ve{b}|}{\|\ve{a}\| \| \ve{b}\|}$ is the cosine similarity.
Instead of manually registering weights preceding normalization layers, our algorithm automatically detects scale invariances with the cosine similarity for user convenience.
In all experiments considered, we found $\delta=0.1$ to be sufficiently small to precisely detect orthogonality and sufficiently large to recall all scale-invariant weights (Appendix~\S\ref{appendixsec:appendix-delta}): we suggest future users to use the same value.
The proposed update rule makes a scale-invariant parameter $\ve{w}$ perpendicular to its update step $\ve{q}$. It follows then that the rapid weight norm accumulation shown in Lemma~\ref{lemma:momentum-gd-norm-growth} is alleviated back to the vanilla gradient descent growth rate in Lemma~\ref{lemma:vanilla-gd-norm-growth} due to the orthogonality:
\begin{equation}
\label{eq:norm_growth_ours}
\| \ve{w}_{t+1} \|_2^2 = \| \ve{w}_{t} \|_2^2 + \eta^2 \| {\ve{q}}_t \|_2^2  \leq \| \ve{w}_{t} \|_2^2 + \eta^2 \| {\ve{p}}_t \|_2^2
\end{equation}
where inequality follows from the fact that $\ve{q}_t=\Pi_{\ve{w}_t}(\ve{p}_t)$ and $\Pi_{\ve{w}_t}$ is a projection operation. Although the updates $\ve{p}_t$ are not identical between \eqref{eq:norm_growth_momentum} and \eqref{eq:norm_growth_ours}, we observe that after our orthogonal projection, the updates no longer get accumulated as in the last term of equation~\eqref{eq:norm_growth_momentum}.
We emphasize that this modification only alters the effective learning rate while not changing the effective update directions, as shown in the below proposition.
\begin{proposition}[Effective update direction after projection]
\label{prop:update-direction-unchanged}
Let $\ve{w}_{t+1}^{\mathrm{o}}:=\ve{w}_t-\eta\ve{p}_t$ and $\ve{w}_{t+1}^{\mathrm{p}}:=\ve{w}_t-\eta\Pi_{\ve{w}_t}(\ve{p}_t)$ be original and projected updates, respectively. Then, the effective update after the projection $\widehat{\ve{w}_{t+1}^{\mathrm{p}}}$ lies on the geodesic on $\mathbb{S}^{d-1}$ defined by $\widehat{\ve{w}_t}$ and $\widehat{\ve{w}_{t+1}^{\mathrm{o}}}$.
\end{proposition}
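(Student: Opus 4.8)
The plan is to reduce the statement to a one-line membership claim in linear algebra, exploiting the fact that a geodesic (great circle) on $\mathbb{S}^{d-1}$ through two non-antipodal points is exactly the intersection of the sphere with the two-dimensional linear subspace their position vectors span. Concretely, I would first note that the geodesic defined by $\widehat{\ve{w}_t}$ and $\widehat{\ve{w}_{t+1}^{\mathrm{o}}}$ is the great circle $P\cap\mathbb{S}^{d-1}$, where $P:=\mathrm{span}(\widehat{\ve{w}_t},\widehat{\ve{w}_{t+1}^{\mathrm{o}}})$. Since $\ell_2$-normalization is multiplication by a positive scalar, $P=\mathrm{span}(\ve{w}_t,\ve{w}_{t+1}^{\mathrm{o}})$, and because $\ve{w}_{t+1}^{\mathrm{o}}=\ve{w}_t-\eta\ve{p}_t$ we further get $P=\mathrm{span}(\ve{w}_t,\ve{p}_t)$. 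So the whole proposition collapses to showing $\widehat{\ve{w}_{t+1}^{\mathrm{p}}}\in P$.

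Next I would decompose the momentum step into its radial and tangential parts with respect to $\widehat{\ve{w}_t}$, writing $\ve{p}_t=\ve{p}_\parallel+\ve{p}_\perp$ with $\ve{p}_\parallel:=(\widehat{\ve{w}_t}\cdot\ve{p}_t)\widehat{\ve{w}_t}$ and $\ve{p}_\perp:=\Pi_{\ve{w}_t}(\ve{p}_t)$. Substituting the definition of the projected update gives $\ve{w}_{t+1}^{\mathrm{p}}=\ve{w}_t-\eta\ve{p}_\perp=\ve{w}_t-\eta\ve{p}_t+\eta(\widehat{\ve{w}_t}\cdot\ve{p}_t)\widehat{\ve{w}_t}$, which is manifestly a linear combination of $\ve{w}_t$ and $\ve{p}_t$, hence an element of $P$. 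As $\widehat{\ve{w}_{t+1}^{\mathrm{p}}}$ is just a positive rescaling of $\ve{w}_{t+1}^{\mathrm{p}}$, it is a unit vector lying in $P$, i.e., it lies on the great circle $P\cap\mathbb{S}^{d-1}$. This is precisely the geodesic through $\widehat{\ve{w}_t}$ and $\widehat{\ve{w}_{t+1}^{\mathrm{o}}}$, which completes the argument.

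To make contact with the paper's stronger informal claim that the \emph{effective update direction} is genuinely unchanged (not merely that the point lands on the same great circle), I would additionally compare the initial tangent vectors at $\widehat{\ve{w}_t}$. The tangent direction toward an endpoint $\ve{w}_{t+1}^{\bullet}$ is proportional to its projection onto the tangent space, $\Pi_{\ve{w}_t}(\ve{w}_{t+1}^{\bullet})$. Using $\Pi_{\ve{w}_t}(\ve{w}_t)=\ve{0}$ and the idempotence of $\Pi_{\ve{w}_t}$, both the original and projected updates yield $\Pi_{\ve{w}_t}(\ve{w}_{t+1}^{\mathrm{o}})=\Pi_{\ve{w}_t}(\ve{w}_{t+1}^{\mathrm{p}})=-\eta\ve{p}_\perp$, so the two geodesics leave $\widehat{\ve{w}_t}$ in the same direction and with the same sign; only the effective step size differs.

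The calculation itself is routine; the two points I would be most careful about are conceptual. First, the identification of ``the geodesic defined by two points'' with the two-dimensional span $P$ should be stated explicitly, since this is what turns the problem into linear algebra. Second, I would flag the degenerate case $\ve{p}_t\parallel\ve{w}_t$ (pure radial $\ve{p}_t$), in which $\ve{p}_\perp=\ve{0}$, the span $P$ collapses to a line, and $\widehat{\ve{w}_{t+1}^{\mathrm{o}}}=\pm\widehat{\ve{w}_t}$ so that no geodesic is determined. This regime is exactly the one excluded by the orthogonality of scale-invariant weights (Eq.~\ref{eq:weight-gradient-orthogonality}) and by the cosine-similarity branch in Eq.~\ref{eq:modified_ours}: whenever the projection is actually applied, $\ve{p}_\perp\neq\ve{0}$ and $P$ is a genuine plane, so the statement is well posed.
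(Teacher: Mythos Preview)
Your proposal is correct and follows essentially the same approach as the paper: identify the geodesic as $\mathbb{S}^{d-1}\cap\mathrm{span}(\ve{w}_t,\ve{p}_t)$ and then verify that $\widehat{\ve{w}_{t+1}^{\mathrm{p}}}$ lies in this span because $\Pi_{\ve{w}_t}(\ve{p}_t)=\ve{p}_t-(\widehat{\ve{w}_t}\cdot\ve{p}_t)\widehat{\ve{w}_t}\in\mathrm{span}(\ve{w}_t,\ve{p}_t)$. Your additional tangent-direction check and the discussion of the degenerate case $\ve{p}_t\parallel\ve{w}_t$ go beyond the paper's brief argument and usefully justify the stronger informal claim that the effective update \emph{direction} (not just the great circle) is preserved.
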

Proof in Appendix. As such, we expect that our algorithm inherits the convergence guarantees of GD. 
As to the convergence rate, we conjecture that a similar analysis by \citet{arora2018theoretical} holds.

The proposed method is readily adaptable to existing gradient-based optimization algorithms like SGD and Adam. Their modifications, \sgdn and \adamn, are shown in Algorithms~\ref{alg:sgdn} and \ref{alg:adamn}, respectively (Modifications are {\color{algorithmemphcolor}colorized}).
In practice, we consider two types of scale-invariance: layer-wise (\eg, by LN) and channel-wise (\eg, by BN, IN) invariance. Hence, the weight projection in \eqref{eq:modified_ours} are performed either channel- or layer-wise.
Our optimizers incur only 8\% extra training time on top of the baselines (ResNet18 on ImageNet classification).

\begin{minipage}[t]{.48\textwidth}
\vspace{-1em}
    \begin{algorithm}[H]
    \centering
    \caption{SGD{\color{algorithmemphcolor}P}}\label{alg:sgdn}
    \begin{algorithmic}[1]
    \REQUIRE Learning rate $\eta>0$, momentum $\beta>0$, thresholds $\delta,\varepsilon>0$.
    \WHILE{$\ve{w}_t$ not converged}
    \STATE $\ve{p}_t \gets \beta \ve{p}_{t-1} + \nabla_{\ve{w}}f_t(\ve{w}_{t})$
    {\color{algorithmemphcolor}
    \STATE Compute $\ve{q_t}$ with \eqref{eq:modified_ours}.
    }
    \STATE $\ve{w}_{t+1} \gets \ve{w}_t - \eta \ve{q}_t$
    \ENDWHILE
    \end{algorithmic}
    \end{algorithm}
\end{minipage}\hfill
\begin{minipage}[t]{.48\textwidth}
\vspace{-1em}
    \begin{algorithm}[H]
    \centering
    \caption{Adam{\color{algorithmemphcolor}P}}\label{alg:adamn}
    \begin{algorithmic}[1]
    \REQUIRE Learning rate $\eta>0$, momentum $0<\beta_1,\beta_2<1$, thresholds $\delta,\varepsilon>0$.
    \WHILE{$\ve{w}_t$ not converged}
    \STATE $\ve{m}_t \gets \beta_1 \ve{m}_{t-1} + (1-\beta_1)\nabla_{\ve{w}} f_t(\ve{w}_{t}) $
    \STATE $\ve{v}_t \gets \beta_2 \ve{v}_{t-1} + (1-\beta_2) (\nabla_{\ve{w}} f_t(\ve{w}_{t}))^2$
    \STATE $\ve{p}_t \gets \ve{m}_t / (\sqrt{\ve{v}_t} + \varepsilon)$
    {\color{algorithmemphcolor}
    \STATE Compute $\ve{q_t}$ with \eqref{eq:modified_ours}.
    }
    \STATE $\ve{w}_{t+1} \gets \ve{w}_t - \eta \ve{q}_t$
    \ENDWHILE
    \end{algorithmic}
    \end{algorithm}
\end{minipage}

\vspace{-1em}
\subsection{Empirical analysis of effective step sizes and the proposed projection}
\label{subsec:ours-norm-growth}

So far, we have studied the problem (\S\ref{sec:problem}), namely that the momentum accelerates effective step sizes for scale-invariant weights, as well as the corresponding solution (\S\ref{subsec:our-method}) only at a conceptual level.
Here, we verify that the problem does indeed exist in practice and that our method successfully addresses it on both synthetic and real-world optimization problems.

\paragraph{Synthetic simulation.}
While examples of scale-invariant optimization problems abound in modern deep learning (\eg BN), they are scant in popular toy optimization objectives designed for sanity checks. We use Rosenbrock function~\citep{rosenbrock1960automatic}: $h(x_1,x_2)=(1-x_1)^2+300(x_2-x_1^2)^2$~ (Figure~\ref{fig:spherical-toy}).
As our verification requires scale invariance, we define a 3D Rosenbrock function by adding a redundant radial axis $r$, while treating the original coordinates $(x_1,x_2)$ as the polar angles $(\psi,\phi)$ of the spherical coordinates, resulting in the function $\widetilde{h}(r,\psi,\phi)=h(\psi,\phi)$. $\widetilde{h}$ is optimized in the 3D space with the Cartesian coordinates. %
We describe the full details in Appendix~\S\ref{appendixsec:toy_details}.

\begin{figure}[h]
    \centering
    \small
    \includegraphics[width=.3\linewidth]{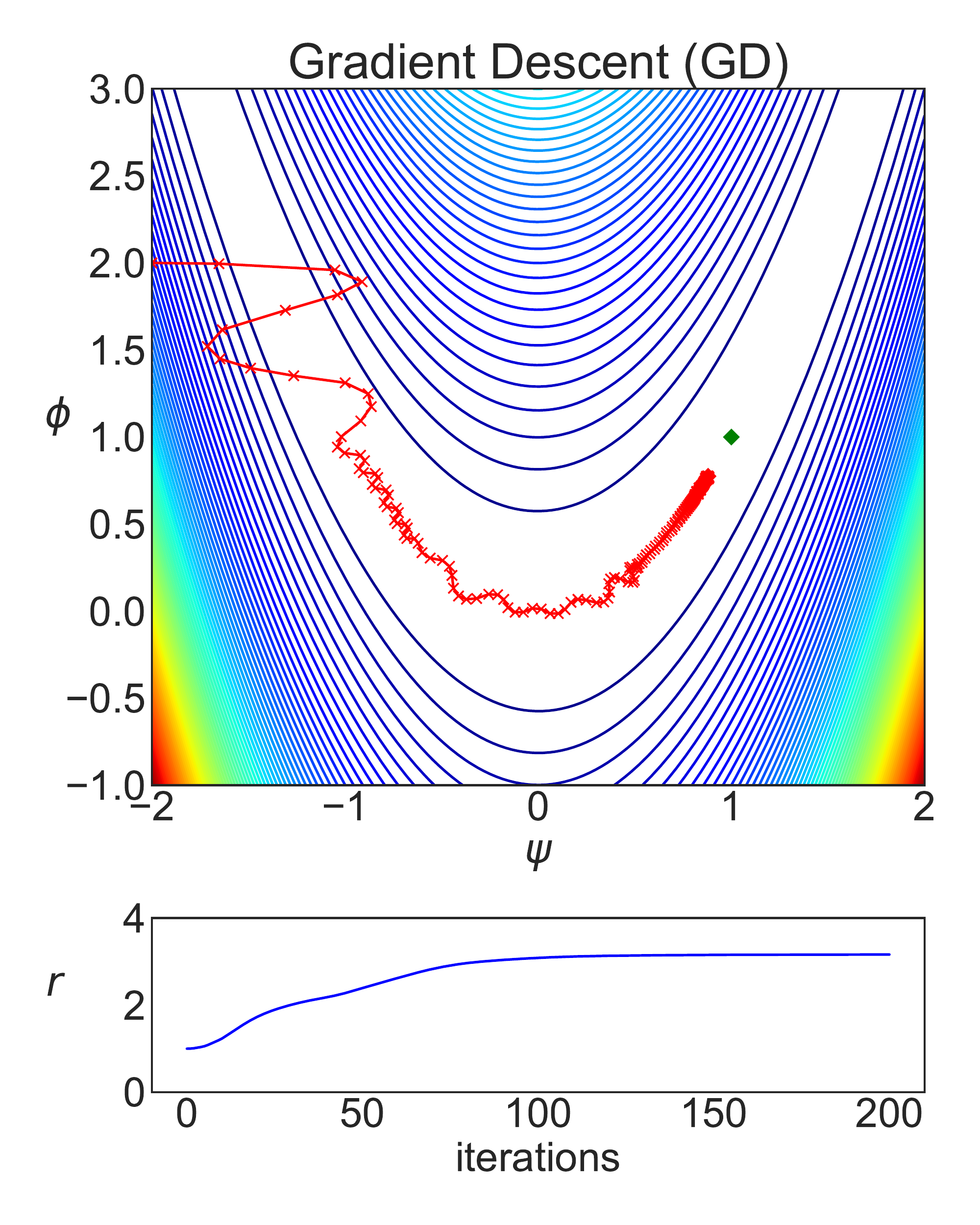}%
    \includegraphics[width=.3\linewidth]{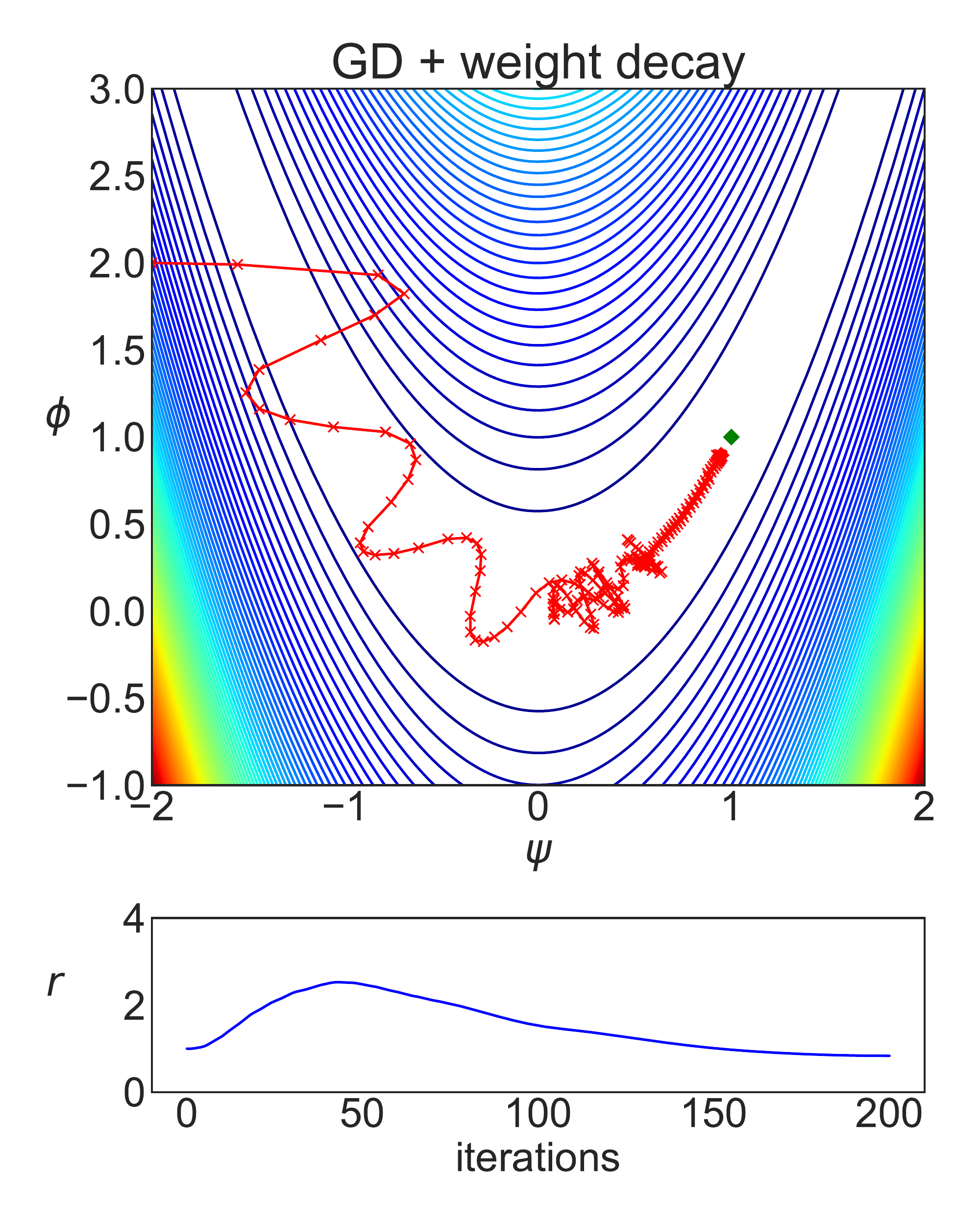}%
    \includegraphics[width=.3\linewidth]{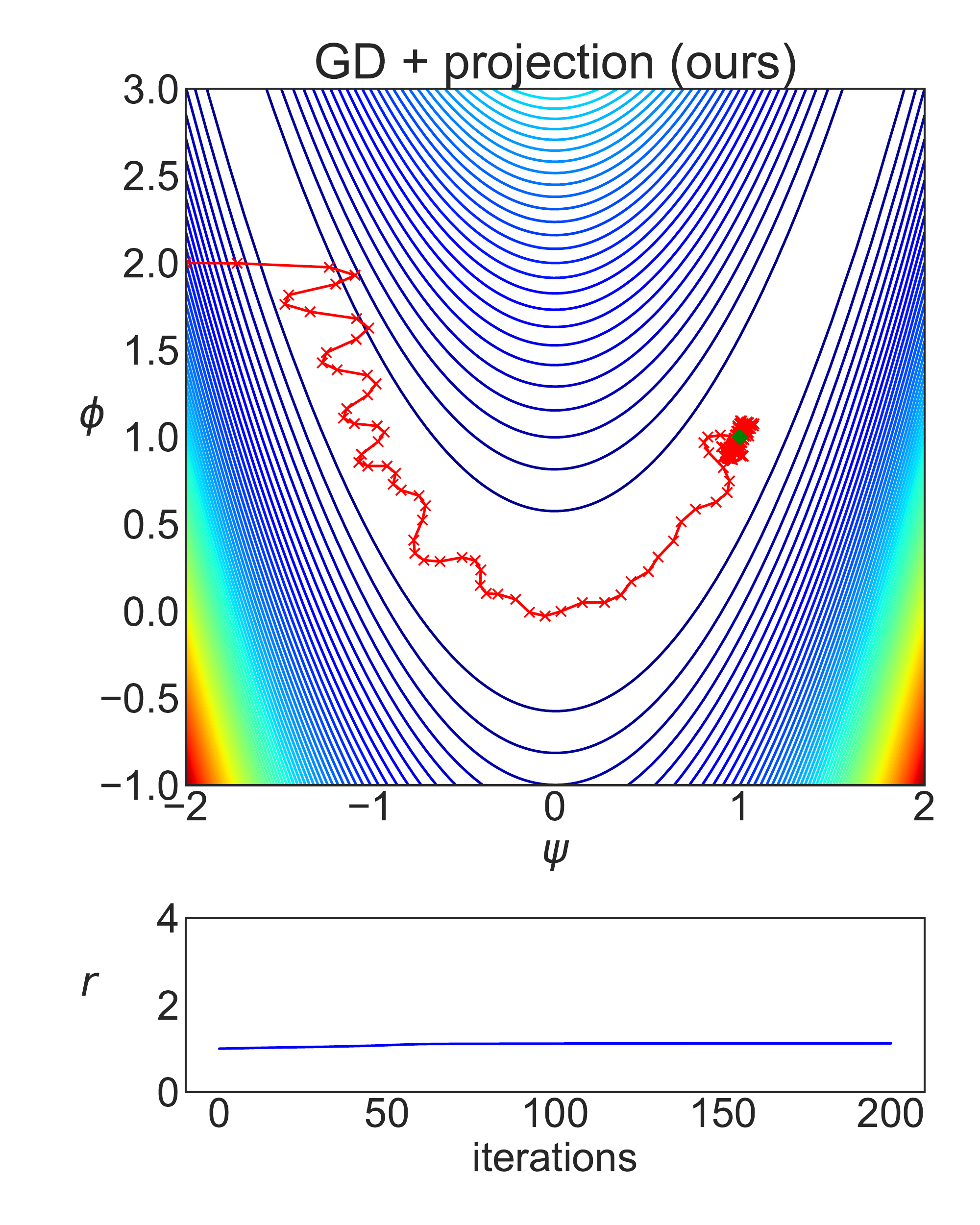}
    \vspace{-1em}
    \caption{\small \textbf{3D scale-invariant Rosenbrock.}
    Three optimization algorithms are compared. Upper row: loss surface and optimization steps. Lower row: norm $r$ of parameters over the iterations. Results for Adam variants in Appendix \S\ref{subsec:3d-for-adam}.
}
    \vspace{-.5em}
    \label{fig:spherical-toy}
\end{figure}

In Figure~\ref{fig:spherical-toy}, we compare the trajectories for optimizers on the spherical coordinates $(\psi,\phi)$. We compare the baseline momentum GD and our projection solution. We additionally examine the impact of weight decay (WD), %
since a careful choice of WD is another way to regularize the norm growth. %
We observe that the momentum GD does not converge sufficiently to the optimum. The slowdown is explained by the decreased effective step sizes on $\mathbb S^2$ due to the increase in the parameter norm ($r=1.0 \rightarrow 3.2$).
Careful tuning of WD partially addresses the slow convergence (momentum GD + WD trajectory) by regularizing the norm growth, but WD is still unable to preclude the initial surge in the weight norm ($r=1.0 \rightarrow 2.53$).
In practice, addressing the problem with WD is even less attractive because WD is often a sensitive hyperparameter (see the following experiments).
On the other hand, our projection solution successfully subdues the weight norm growth ($r=1.0 \rightarrow 1.12$), ensuring undiminished effective step sizes and a faster convergence to the optimum.

\begin{wraptable}{r}{0.6\linewidth}
\setlength{\tabcolsep}{3pt}
\small
\centering
\vspace{-1.5em}
\caption{\small {\bf Analysis of optimizers on real-world tasks.} The norm values and final performances for different tasks and optimizers are shown. Norm$^1$: norm at first epoch.  Norm$^\text{last}$: norm at last epoch. Score: accuracy for ImageNet and AUC for music tagging.}
\label{table:realworld-normgrowth}
\vspace{-1em}
\begin{tabular}{@{}cccccc@{}}
\toprule
Task          & Optimizer & WD   & Norm$^1$ & Norm$^\text{last}$       & Score {\scriptsize ($\uparrow$)} \\ \midrule
              & SGD       & 0    & 1.35     & 4.34 \redpscript{+2.99}  & 67.94       \\
              & SGD       & $10^{-6}$ & 1.35     & 4.08 \redpscript{+2.73}  & 67.89       \\
              & SGD       & $10^{-5}$ & 1.33     & 2.70 \redpscript{+1.36}  & 69.06       \\
ImageNet      & SGD       & $10^{-4}$ & 1.21     & 1.02 \greenpscript{-0.19}  & 70.43       \\
ResNet-18     & SGDP      & 0    & 1.25     & 2.10 \redpscript{+0.85}  & 70.21       \\
              & SGDP      & $10^{-6}$    & 1.25     & 1.96 \redpscript{+0.72}  & 70.27       \\
              & SGDP      & $10^{-5}$    & 1.25     & 1.19 \greenpscript{-0.04}  & \textbf{70.57}       \\ \cmidrule(l){2-6} 
              & AdamW      & 0    & 1.70     & 10.41 \redpscript{+8.71} & 68.38       \\
              & AdamP     & 0    & 1.29     & 3.02 \redpscript{+1.73}  & \textbf{70.55}       \\
              \midrule
Music tagging & AdamW      & 0    & 7.35     & 11.56 \redpscript{+4.21} & 90.86       \\
HCNN          & AdamP     & 0    & 7.30     & 7.62 \redpscript{+0.32}  & \textbf{91.19} \\ \bottomrule
\end{tabular}
\end{wraptable}

\paragraph{Real-world experiments.}
We verify the surge of weight norm and sub-optimality of model performances in momentum-trained deep networks on real-world datasets: ImageNet classification with ResNet-18 and music tagging~\citep{mtat} with Harmonic CNN~\citep{won2020harmonic}.
See Table~\ref{table:realworld-normgrowth} for the analysis.
In all experiments, our projection solutions (\sgdn, \adamn) restrain the weight norm growth much better than the vanilla momentum methods. For example, Adam-induced norm increase (+4.21) is 13.2 times greater than that of \adamn (+0.32) in the music tagging task.
In ImageNet SGD experiments, we observe that a careful choice of weight decay (WD) mitigates the norm increases, but the final norm values and performances are sensitive to the WD value. On the other hand, our \sgdn results in stable final norm values and performances across different WD values, even at WD$=0$. We observe that under the same learning setup models with smaller terminal weight norms tend to obtain improved performances. Though it is difficult to elicit a causal relationship, we verify in \S\ref{sec:experiments} that \sgdn and \adamn bring about performance gains in a diverse set of real-world tasks.
More analysis around the norm growth and the learning curves are in Appendix~\S\ref{appendixsec:lr_and_wd}. We also conduct analysis on the momentum coefficient in Appendix~\S\ref{appendixsec:momentum_coefficient}

\section{Experiments}
\label{sec:experiments}

In this section, we demonstrate the effectiveness of our projection module for training scale-invariant weights with momentum-based optimizers. We experiment over various real-world tasks and datasets. From the image domain, we show results on ImageNet classification~(\S\ref{sec:exp-imgclassification}, \S\ref{sec:exp-efficientnet}, \S\ref{sec:exp-large-batch}), object detection~(\S\ref{sec:exp-imgdetection}), and robustness benchmarks~(\S\ref{sec:exp-advtr}, \S\ref{appendixsec:rebias}). From the audio domain, we study music tagging, speech recognition, and sound event detection~(\S\ref{sec:exp-audio}).
We further show the results when the scale invariance is artificially introduced to a network with no scale-invariant parameters (\eg Transformer-XL~\citep{dai2019transformer}) in \S\ref{sec:exp-language}.
To diversify the root cause of scale invariances, we consider the case where it stems from the $\ell_2$ projection of the features, as opposed to the statistical normalization done in \eg BN, in the image retrieval experiments~(\S\ref{sec:exp-retrieval}).
In the above set of experiments totaling $>10$ setups, our proposed modifications (\sgdn and \adamn) bring about consistent performance gains against the baselines (SGD~\citep{nag} and AdamW~\citep{adamw}).
We provide the implementation details in the Appendix~\S\ref{appendixsec:settings} and the standard deviation values for the experiments in Appendix~\S\ref{subsec:std-dev}.

\subsection{Image classification}
\label{sec:exp-imgclassification}

Batch normalization (BN) and momentum-based optimizer are standard techniques to train state-of-the-art image classification models~\citep{resnet,pyramidnet,mobilenetv2,tan2019efficientnet,han2020rexnet}.
We evaluate the proposed method with ResNet~\citep{resnet}, one of the most popular and powerful architectures on ImageNet, and MobileNetV2~\citep{mobilenetv2}, a relatively lightweight model with ReLU6 and depthwise convolutions, on the ImageNet-1K benchmark~\citep{imagenet}. For ResNet, we employ the training hyperparameters in \citep{resnet}. For MobileNetV2, we have searched for the best hyperparameters, as it is generally difficult to train it with the usual settings. Recent researches have identified better training setups~\citep{cubuk2020randaugment} where the cosine-annealed learning rates and larger training epochs (100 epochs or 150 epochs) than 90 epochs~\citep{resnet} are used. We use those setups for all experiments in this subsection.

Our optimizers are compared against their corresponding baselines in Table~\ref{table:classification}. Note that \adamn is compared against AdamW~\citep{adamw}, which has closed the gap between Adam and SGD performances on large-scale benchmarks. Across the spectrum of network sizes, our optimizers outperform the baselines. Even when the state-of-the-art CutMix~\citep{cutmix} regularization is applied, our optimizers introduce further gains.
We provide three additional experiments on EfficientNet~\citep{tan2019efficientnet} (\S\ref{sec:exp-efficientnet}), the large-batch training scenario (\S\ref{sec:exp-large-batch}) and the comparison in the same computation cost (\S\ref{appendixsec:computation_cost}) to demonstrate the benefit of \adamn on diverse training setups. There, again, our methods outperform the baselines.

\begin{table}[h]
\centering
\small
\vspace{-.5em}
\caption{\small \textbf{ImageNet classification.} Accuracies of state-of-the-art networks trained with \sgdn and \adamn.}
\label{table:classification}
\vspace{-1em}
\begin{tabular}{@{}llccccc@{}}
\toprule
Architecture                  & \# params & SGD & \sgdn (ours) & Adam & AdamW & \adamn (ours)\\ \midrule
MobileNetV2 &  3.5M     & 71.55 & \textbf{72.09 \greenpscript{+0.54}}  & 69.32  & 71.21 & \textbf{72.45 \greenpscript{+1.24}} \\ 
ResNet18 &  11.7M    & 70.47 & \textbf{70.70 \greenpscript{+0.23}}  & 68.05  & 70.39 & \textbf{70.82 \greenpscript{+0.43}} \\ 
ResNet50 &  25.6M    & 76.57 & \textbf{76.66 \greenpscript{+0.09}}  & 71.87  & 76.54 & \textbf{76.92 \greenpscript{+0.38}} \\
ResNet50 + CutMix &  25.6M    & 77.69 & \textbf{77.77 \greenpscript{+0.08}}  & 76.35  & 78.04 & \textbf{78.22 \greenpscript{+0.18}} \\ \bottomrule
\end{tabular}
\vspace{-.5em}
\end{table}

\subsection{Object detection}
\label{sec:exp-imgdetection}

\begin{wraptable}{r}{0.47\linewidth}
\setlength{\tabcolsep}{4pt}
\centering
\small
\vspace{-1.6em}
\caption{\small\textbf{MS-COCO object detection.} Average precision (AP) scores of CenterNet~\citep{zhou2019objects} and SSD~\citep{liu2016ssd} trained with Adam and \adamn optimizers.}
\label{table:obj_det}
\begin{tabular}{@{}lccccc@{}}
\toprule
Model            & Initialize & Adam   & \adamn (ours)  \\ \midrule
CenterNet & Random & 26.57  & \textbf{27.11 \greenpscript{+0.54}} \\
CenterNet & ImageNet   & 28.29  & \textbf{29.05 \greenpscript{+0.76}} \\
SSD & Random & 27.10 & \textbf{27.97 \greenpscript{+0.87}} \\
SSD & ImageNet & 28.39 & \textbf{28.67 \greenpscript{+0.28}} \\
\bottomrule
\end{tabular}
\vspace{-1em}
\end{wraptable}

Object detection is another widely-used real-world task where the models often include normalization layers and are trained with momentum-based optimizers. We study the two detectors CenterNet~\citep{zhou2019objects} and SSD~\citep{liu2016ssd} to verify that the proposed optimizers are also applicable to various objective functions beyond the classification task.
The detectors are either initialized with the ImageNet-pretrained network (official PyTorch models) or trained from scratch, in order to separate the effect of our method from that of the pretraining.
ResNet18~\citep{resnet} and VGG16 BN~\citep{VGG} are used for the CenterNet and SSD backbones, respectively. In Table~\ref{table:obj_det}, we report average precision performances based on the MS-COCO~\citep{mscoco} evaluation protocol. We observe that \adamn boosts the performance against the baselines. It demonstrates the versatility of our optimizers.

\subsection{Robustness}
\label{sec:exp-advtr}

\begin{wrapfigure}{r}{0.45\linewidth}
    \vspace{-1.6em}
    \includegraphics[width=\linewidth]{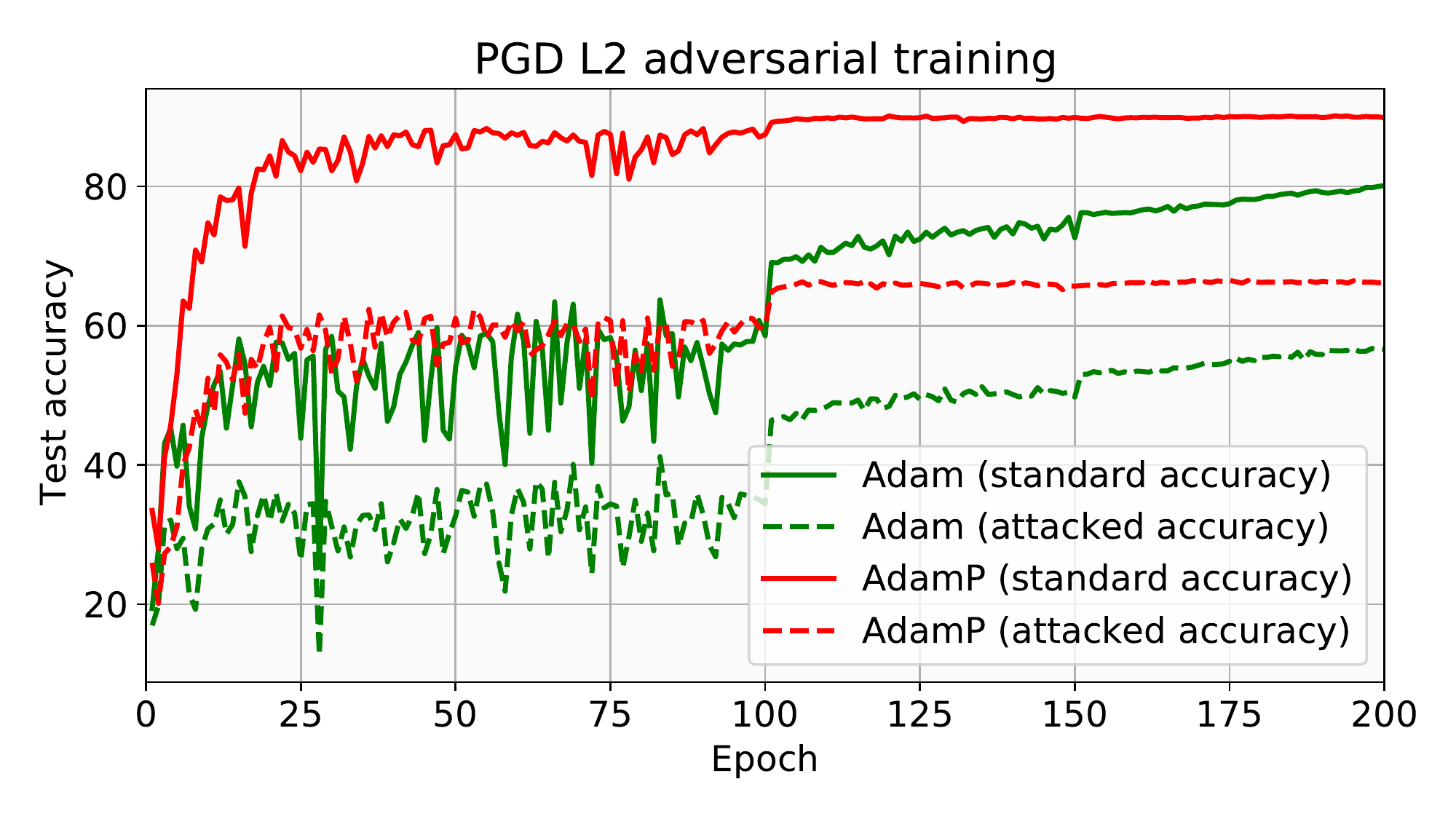}
    \includegraphics[width=\linewidth]{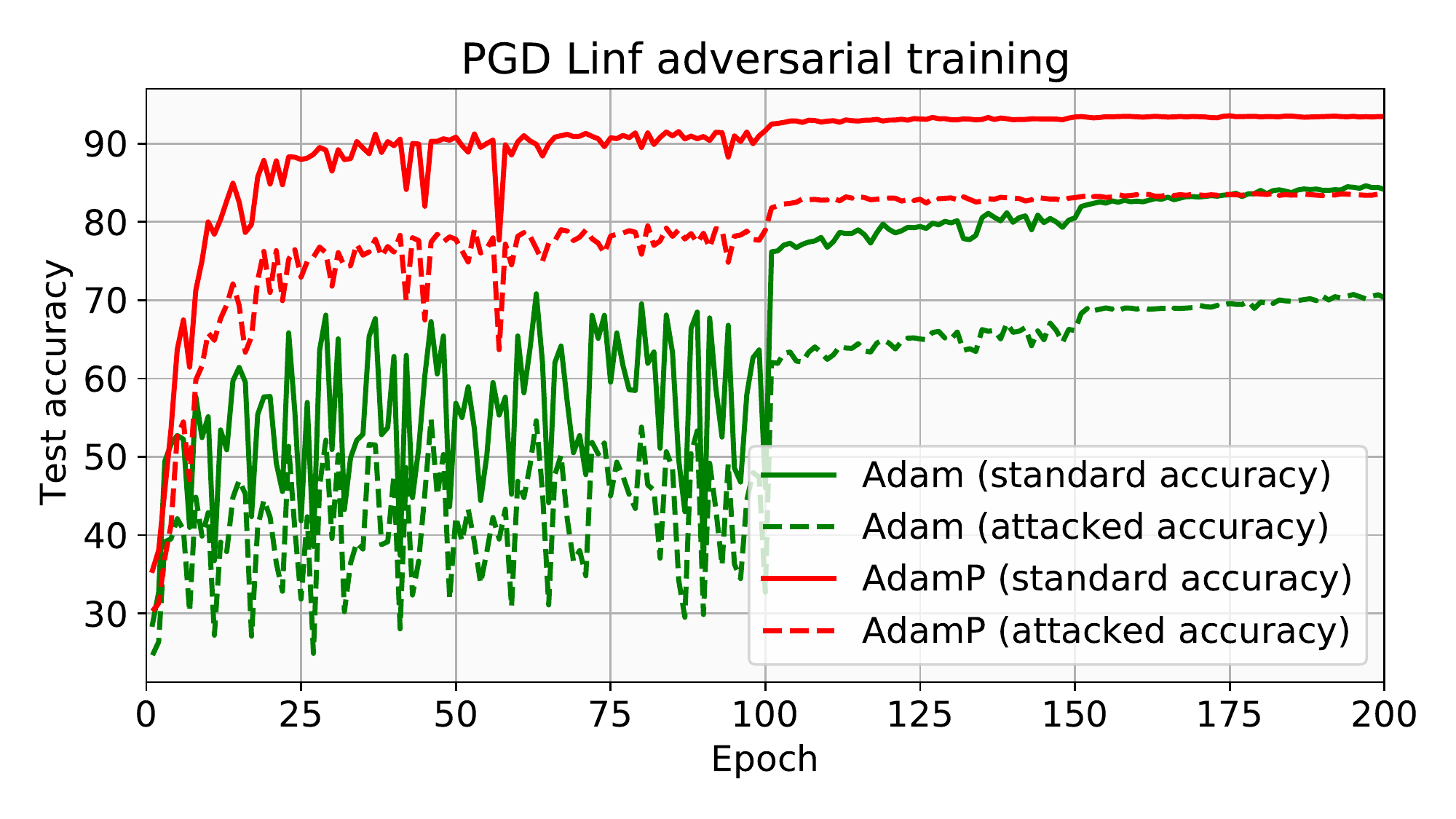}
    \vspace{-2em}
    \caption{\small \textbf{Adversarial training.} Learning curves by Adam and \adamn.}
    \vspace{-2em}
    \label{fig:advtr_adam_vs_adamn}
\end{wrapfigure}

Model robustness is an emerging problem in the real-world applications and the training of robust models often involve complex optimization problems (\eg minimax). We examine how our optimizers stabilize the complex optimization.
We consider two types of robustness: adversarial (below) and cross-bias~\citep{bahng2019rebias} (Appendix~\S\ref{appendixsec:rebias}).

Adversarial training alternatively optimizes a minimax problem where the inner optimization is an adversarial attack and the outer optimization is the standard classification problem. Adam is commonly employed, in order to handle the complexity of the optimization~\citep{tsipras2018robustness,chun2019icmlw}.

We consider solving the minimax optimization problem with our proposed optimizers. We train Wide-ResNet~\citep{wideresnet} with the projected gradient descent (PGD)~\citep{madry2018towards} on CIFAR-10. We use 10 inner PGD iterations and $\varepsilon=80/255$ for the $L_2$ PGD and $\varepsilon=4/255$ for the $L_\infty$ PGD.
Figure~\ref{fig:advtr_adam_vs_adamn} shows the learning curves of Adam and \adamn. 
By handling the effective step sizes, \adamn achieves a faster convergence than Adam (less than half the epochs required). \adamn brings more than $+9.3$ pp performance gap in all settings. We also performed $\varepsilon=8/255$ and the results are reported in Appendix~\S\ref{appendixsec:adv_training}.

\subsection{Audio classification}
\label{sec:exp-audio}

We evaluate the proposed optimizer on three audio classification tasks with different physical properties: music clips, verbal audios, and acoustic signals.
For automatic music tagging, we use the MagnaTagATune (MTAT) benchmark~\citep{mtat} with 21k samples and 50 tags. Each clip contains multiple tags.
We use the Speech Commands dataset~\citep{speechcommands} for the keyword spotting task (106k samples, 35 classes, single label).
For acoustic signals, we use the DCASE sound event detection benchmark~\citep{dcase} (53k samples, 17 tags, multi-labeled).

We train the Harmonic CNN~\citep{won2020harmonic} on the three benchmarks. Harmonic CNN consists of data-driven harmonic filters and stacked convolutional filters with BN.
Audio datasets are usually smaller than the image datasets and are multi-labeled, posing another set of challenges for the optimization.
\citet{won2020harmonic} have trained the network with a mixture of Adam and SGD~\citep{won2019attention}.
Instead of the mixture solution, we have searched the best hyperparameters for the Adam and \adamn on a validation set.
The results are given in Table~\ref{table:audio}. \adamn shows better performances than the baselines, without having to adopt the complex mixture solution.
The results signify the superiority of \adamn for training scale-invariant weights on the audio domain.

\begin{table}[ht!]
\centering
\small
\vspace{0em}
\caption{\small \textbf{Audio classification.} Results on three audio tasks with Harmonic CNN~\citep{won2020harmonic}. }
\label{table:audio}
\vspace{-1em}
\begin{tabular}{@{}lcccc@{}}
\toprule
\multirow{2}{*}{Optimizer} & \multicolumn{2}{c}{Music Tagging}    & Keyword Spotting & Sound Event Tagging \\ \cmidrule(l){2-5} 
& ROC-AUC       & PR-AUC                & Accuracy         & F1 score \\ \midrule
Adam + SGD~\citep{won2019attention} & 91.27 & 45.67 & 96.08 & 54.60 \\
AdamW & 91.12 & 45.61 & 96.47 & 55.24 \\
\adamn (ours) & \textbf{91.35 \greenp{+0.23}} & \textbf{45.79 \greenp{+0.18}} & \textbf{96.89 \greenp{+0.42}} & \textbf{56.04 \greenp{+0.80}} \\ \bottomrule
\end{tabular}
\vspace{-.5em}
\end{table}

\subsection{Language modeling}
\label{sec:exp-language}
\begin{wraptable}{r}{0.47\linewidth}
\setlength{\tabcolsep}{4pt}
\centering
\small
\vspace{-1em}
\caption{\small\textbf{Language Modeling.} Perplexity on WikiText103. Lower is better.}
\label{table:language}
\vspace{-1em}
\begin{tabular}{@{}lcc@{}}
\toprule
Model        & AdamW   & \adamn (ours)  \\ \midrule
Transformer-XL & 23.38  & \textbf{23.26 \greenpscript{-0.12}} \\
Transformer-XL + WN  & 23.96  & \textbf{22.77 \greenpscript{-1.19}} \\
\bottomrule
\end{tabular}
\vspace{-1em}
\end{wraptable}

Models in language domains, such as the Transformer~\citep{vaswani2017attention}, often do not have any scale-invariant weight. The use of layer normalization (LN) in the Transformer does not result in the scale invariance because LN is applied right after the skip connection $f(\ve{w},\ve{x}):=\mathrm{LN}(g_{\ve{w}}(\ve{x})+\ve{x})$ (note the difference from \eqref{eq:direct_scale_invariance}). The skip connection makes $f$ scale-\textit{variant} with respect to $\ve{w}$.
To allow our optimizers to effectively operate on Transformer, we introduce scale invariance artificially through the weight normalization (WN)~\citep{weightnorm}.

We have trained Transformer-XL~\citep{dai2019transformer} on WikiText-103~\citep{merity2016pointer}.
As shown in Table~\ref{table:language}, \adamn does not significantly outperform AdamW (23.33$\rightarrow$23.26) without the explicit enforcement of scale invariance. WN on its own harms the performance (23.33$\rightarrow$23.90 for Adam). When \adamn is applied on Transformer-XL + WN, it significantly improves over the AdamW baseline (23.90$\rightarrow$22.73), beating the original performance 23.33 by Adam on the vanilla network.

\subsection{Retrieval}
\label{sec:exp-retrieval}

In the previous experiments, we have examined the scale invariances induced by statistical normalization methods (\eg BN). Here, we consider another source of scale invariance, $\ell_2$ projection of features, which induces a scale invariance in the preceding weights. It is widely used in retrieval tasks for more efficient distance computations and better performances. 
We fine-tune the ImageNet-pretrained ResNet-50 network on CUB~\citep{cub}, Cars-196~\citep{cars}, In-Shop Clothes~\citep{inshop}, and Stanford Online Products (SOP)~\citep{sop} benchmarks with the triplet~\citep{schroff2015facenet} and the ProxyAnchor~\citep{kim2020proxy} losses.
In Table~\ref{table:retrieval}, we observe that \adamn outperforms Adam over all four image retrieval datasets. The results support the superiority of \adamn for networks with $\ell_2$ normalized embeddings.

\begin{table}[h]
\setlength{\tabcolsep}{4pt}
\centering
\small
\vspace{0em}
\caption{\small \textbf{Image retrieval.} Optimizers are tested on networks with $\ell_2$-induced scale invariant weights. Recall@1.}
\label{table:retrieval}
\vspace{-1em}
\begin{tabular}{@{}lcccccccc@{}}
\toprule
\multicolumn{1}{c}{\multirow{2}{*}{Optimizer}}   & \multicolumn{2}{c}{CUB} & \multicolumn{2}{c}{Cars-196} & \multicolumn{2}{c}{InShop} & \multicolumn{2}{c}{SOP} \\ \cmidrule(l){2-9} 
 & Triplet  & PA  & Triplet   & PA  & Triplet    & PA   & Triplet  & PA  \\ \midrule
AdamW     & 57.9    & 69.3        & 59.8     & 86.7        & 62.7       & 85.2          & 62.0	 & 76.5         \\
\adamn (ours)     & \textbf{58.2 \greenpscript{+0.3}}   & \textbf{69.5 \greenpscript{+0.2}}        & \textbf{59.9 \greenpscript{+0.2}} & \textbf{86.9 \greenpscript{+0.2}}  & \textbf{62.8 \greenpscript{+0.0}}    & \textbf{87.4 \greenpscript{+2.2}}          & \textbf{62.6 \greenpscript{+0.6}}     & \textbf{78.0 \greenpscript{+1.5}}         \\ \bottomrule
\end{tabular}
\vspace{-.5em}
\end{table}
\section{Conclusion}

Momentum-based optimizers induce an excessive growth of the scale-invariant weight norms. The growth of weight norms prematurely decays the effective optimization steps, leading to sub-optimal performances. The phenomenon is prevalent in many commonly-used setup. Momentum-based optimizers (\eg SGD and Adam) are used for training the vast majority of deep models. The widespread use of normalization layers make a large proportion of network weights scale-invariant (\eg ResNet). We propose a simple and effective solution: project out the radial component from the optimization updates. The resulting \sgdn and \adamn successfully suppress the weight norm growth and train a model at an unobstructed speed. Empirically, our optimizers have demonstrated their superiority over the baselines on more than 10 real-world learning tasks.

\section*{Acknowledgement}
We thank NAVER AI LAB colleagues for discussion and advice, especially Junsuk Choe for the internal review.
Naver Smart Machine Learning (NSML) platform~\citep{nsml} has been used in the experiments.

\bibliography{iclr2021_conference}
\bibliographystyle{iclr2021_conference}

\clearpage
\appendix
\section*{Appendix}
\appendix
\numberwithin{equation}{section}
\numberwithin{figure}{section}
\numberwithin{table}{section}

This document provides additional materials for the main paper. Content includes the proofs (\S\ref{appendixsec:proofs}), detailed experimental setups (\S\ref{appendixsec:toy_details} and \S\ref{appendixsec:settings}), and the additional analysis on the learning rate scheduling and weight decay (\S\ref{appendixsec:lr_and_wd}).

\section{Proofs for the claims}
\label{appendixsec:proofs}

We provide proofs for Lemma~\ref{lemma:momentum-gd-norm-growth}, Corollary~\ref{corollary:asymptotic-norm}, and Proposition~\ref{prop:update-direction-unchanged} in the main paper.

\begin{lemma}[Monotonic norm growth by the momentum]
\label{lemma:momentum-gd-norm-growth-sup}
For a scale-invariant parameter $\ve{w}$ updated via \eqref{eq:momentum}, we have
\begin{equation}
    \| \ve{w}_{t+1} \|_2^2 = \| \ve{w}_{t} \|_2^2 + \eta^2 \| {\ve{p}}_t \|_2^2 + 2 \eta^2 \sum_{k=0}^{t-1} \beta^{t-k}  \|{\ve{p}}_k\|^2_2.
\end{equation}
\end{lemma}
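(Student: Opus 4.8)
The plan is to reduce the squared-norm recursion to the computation of a single inner product and then to resolve that inner product by exploiting the orthogonality \eqref{eq:weight-gradient-orthogonality} together with the momentum recursion. First I would expand the update $\ve{w}_{t+1}=\ve{w}_t-\eta\ve{p}_t$ directly:
\begin{equation}
\label{eq:plan-expand}
\|\ve{w}_{t+1}\|_2^2=\|\ve{w}_t\|_2^2-2\eta\,\ve{w}_t^\top\ve{p}_t+\eta^2\|\ve{p}_t\|_2^2.
\end{equation}
Comparing \eqref{eq:plan-expand} with the target identity, everything except the cross term $\ve{w}_t^\top\ve{p}_t$ already matches the statement, so the whole lemma hinges on showing that $\ve{w}_t^\top\ve{p}_t=-\eta\sum_{k=0}^{t-1}\beta^{t-k}\|\ve{p}_k\|_2^2$. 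Note that unlike the vanilla GD case (Lemma~\ref{lemma:vanilla-gd-norm-growth}), this cross term is \emph{not} zero, because $\ve{p}_t$ accumulates past gradients and is no longer orthogonal to $\ve{w}_t$.

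Next I would set $a_t:=\ve{w}_t^\top\ve{p}_t$ and write $\ve{g}_t:=\nabla_{\ve{w}_t}f(\ve{w}_t)$. Using the momentum recursion $\ve{p}_t=\beta\ve{p}_{t-1}+\ve{g}_t$ and the orthogonality $\ve{w}_t^\top\ve{g}_t=0$ from \eqref{eq:weight-gradient-orthogonality}, the current-step gradient contribution vanishes, leaving $a_t=\beta\,\ve{w}_t^\top\ve{p}_{t-1}$. Then substituting $\ve{w}_t=\ve{w}_{t-1}-\eta\ve{p}_{t-1}$ converts the inner product with the old momentum into quantities from the previous step:
\begin{equation}
\label{eq:plan-recursion}
a_t=\beta\bigl(\ve{w}_{t-1}^\top\ve{p}_{t-1}-\eta\|\ve{p}_{t-1}\|_2^2\bigr)=\beta\,a_{t-1}-\beta\eta\|\ve{p}_{t-1}\|_2^2.
\end{equation}
The base case is $a_0=\ve{w}_0^\top\ve{p}_0=\ve{w}_0^\top\ve{g}_0=0$, since $\ve{p}_{-1}=\ve{0}$ makes $\ve{p}_0=\ve{g}_0$ and orthogonality applies.

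Finally I would solve the linear recursion \eqref{eq:plan-recursion}, either by unrolling it or by a short induction, to obtain the closed form $a_t=-\eta\sum_{k=0}^{t-1}\beta^{t-k}\|\ve{p}_k\|_2^2$. Substituting this into \eqref{eq:plan-expand} turns the $-2\eta\,a_t$ term into the positive accumulation $+2\eta^2\sum_{k=0}^{t-1}\beta^{t-k}\|\ve{p}_k\|_2^2$, which is exactly the extra term distinguishing this lemma from Lemma~\ref{lemma:vanilla-gd-norm-growth}, completing the proof. I expect the only genuinely delicate step to be the bookkeeping in deriving and solving \eqref{eq:plan-recursion}; the conceptual core is simply recognizing that orthogonality kills the \emph{gradient} part of the cross term but not the \emph{momentum} part, and that the surviving momentum part telescopes into a geometrically weighted sum of past update norms.
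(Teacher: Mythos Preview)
Your proposal is correct and is essentially the same argument as the paper's: both expand $\|\ve{w}_{t+1}\|_2^2$, reduce the lemma to computing $\ve{w}_t^\top\ve{p}_t$, and establish the closed form $-\eta\sum_{k=0}^{t-1}\beta^{t-k}\|\ve{p}_k\|_2^2$ via the orthogonality $\ve{w}_t^\top\ve{g}_t=0$ and the momentum and update recursions. The only cosmetic difference is that the paper proves the closed form for the cross term directly by induction, whereas you first derive the scalar recursion $a_t=\beta a_{t-1}-\beta\eta\|\ve{p}_{t-1}\|_2^2$ and then solve it; these are the same inductive computation organized in two orders.
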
%
\begin{proof}%
From \eqref{eq:momentum}, we have
\begin{equation}
\label{eq:proof_a}
    \| \ve{w}_{t+1} \|_2^2 = \| \ve{w}_t \|_2^2 + \eta^2 \| \ve{p}_t \|_2^2 - 2 \eta \ve{w}_{t} \cdot \ve{p}_t
\end{equation}
It remains to prove that $\ve{w}_{t} \cdot \ve{p}_t=-\eta \sum_{k=0}^{t-1} \beta^{t-k}  \|{\ve{p}}_k\|^2_2$. We prove by induction on $t\geq 0$. 

First, when $t=0$, we have $\ve{w}_{0}\cdot\ve{p}_0=\ve{w}_{0}\cdot\nabla_{\ve{w}} f(\ve{w}_{0})=0$ because of \eqref{eq:weight-gradient-orthogonality}.

Now, assuming that $\ve{w}_{\tau} \cdot \ve{p}_\tau=-\eta \sum_{k=0}^{\tau-1} \beta^{\tau-k}  \|{\ve{p}}_k\|^2_2$, we have
\begin{align}
    \ve{w}_{\tau+1} \cdot \ve{p}_{\tau+1}
    &=\ve{w}_{\tau+1} \cdot (\beta \ve{p}_{\tau} + \nabla_{\ve{w}} f(\ve{w}_{\tau+1}))
    =\beta \ve{w}_{\tau+1} \cdot \ve{p}_{\tau}
    =\beta (\ve{w}_{\tau}-\eta \ve{p}_\tau) \cdot \ve{p}_{\tau} \\
    &= -\beta \eta \sum_{k=0}^{\tau-1} \beta^{\tau-k}  \|{\ve{p}}_k\|^2_2 - \beta \eta \|\ve{p}_\tau\|_2^2 
    = -\eta \sum_{k=0}^{\tau} \beta^{\tau-k+1}  \|{\ve{p}}_k\|^2_2
\end{align}
which completes the proof.
\end{proof}

\begin{corollary}[Asymptotic norm growth comparison]
\label{corollary:asymptotic-norm-sup}
Let $\|\ve{w}_t^\mathrm{GD}\|_2$ and $\|\ve{w}_t^\mathrm{GDM}\|_2$ be the weight norms at step $t\geq 0$, following the vanilla gradient descent growth (Lemma~\ref{lemma:vanilla-gd-norm-growth}) and momentum-based gradient descent growth (Lemma~\ref{lemma:momentum-gd-norm-growth}), respectively. We assume that the norms of the updates $\|\ve{p}_t\|_2$ for GD with and without momentum are identical for every $t\geq 0$. We further assume that the sum of the update norms is non-zero and bounded: $0<\sum_{t\geq 0}\|\ve{p}_t\|_2^2 < \infty$.
Then, the asymptotic ratio between the two norms is given by:
\begin{equation}
    \frac{\|\ve{w}_t^\mathrm{GDM}\|_2^2-\|\ve{w}_0\|_2^2}{\|\ve{w}_t^\mathrm{GD}\|_2^2-\|\ve{w}_0\|_2^2}\longrightarrow 
    1+\frac{2\beta}{1-\beta}
    \quad\mathrm{as}\quad
    t\rightarrow \infty.
\end{equation}
\end{corollary}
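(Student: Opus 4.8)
The plan is to telescope both recursions into closed-form partial sums, form the ratio, and then evaluate the limit by swapping the order of a double summation and invoking dominated convergence. First I would set $a_k:=\|\ve{p}_k\|_2^2\geq 0$ and, using the shared-update-norm assumption, observe that the \emph{same} sequence $(a_k)$ drives both recursions. Telescoping Lemma~\ref{lemma:vanilla-gd-norm-growth} over steps $0,\dots,t-1$ gives $\|\ve{w}_t^{\mathrm{GD}}\|_2^2-\|\ve{w}_0\|_2^2=\eta^2\sum_{s=0}^{t-1}a_s$, while telescoping Lemma~\ref{lemma:momentum-gd-norm-growth} gives the identical single sum plus the accumulated term $2\eta^2\sum_{s=0}^{t-1}\sum_{k=0}^{s-1}\beta^{s-k}a_k$.

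Next I would form the ratio; the $\eta^2$ factors and the common single sum cancel, leaving
\[
\frac{\|\ve{w}_t^{\mathrm{GDM}}\|_2^2-\|\ve{w}_0\|_2^2}{\|\ve{w}_t^{\mathrm{GD}}\|_2^2-\|\ve{w}_0\|_2^2}=1+\frac{2\sum_{s=0}^{t-1}\sum_{k=0}^{s-1}\beta^{s-k}a_k}{\sum_{s=0}^{t-1}a_s}.
\]
I would then swap the summation order in the numerator: for each fixed $k$ the index $s$ runs over $k+1,\dots,t-1$, so the double sum equals $\sum_{k=0}^{t-2}a_k\sum_{j=1}^{t-1-k}\beta^{j}$, where the inner geometric sum is $\frac{\beta(1-\beta^{t-1-k})}{1-\beta}$ and is bounded above by $\frac{\beta}{1-\beta}$ for every $k$ and $t$.

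Finally I would take $t\to\infty$. The denominator converges to $S:=\sum_{k\geq 0}a_k$, which the hypothesis guarantees is finite and strictly positive, so the limit is well defined. For the numerator I would view $\sum_{k=0}^{t-2}a_k\sum_{j=1}^{t-1-k}\beta^{j}$ as a sum over $k$ (against counting measure) whose terms converge, for each fixed $k$, to $a_k\frac{\beta}{1-\beta}$ and are dominated by the summable envelope $a_k\frac{\beta}{1-\beta}$; dominated convergence then yields the limit $\frac{\beta}{1-\beta}S$. Substituting gives $1+\frac{2(\beta/(1-\beta))S}{S}=1+\frac{2\beta}{1-\beta}$, as claimed. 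I expect the only genuine obstacle to be justifying this interchange of limit and summation: the correction factor $\beta^{t-1-k}$ couples the summand to the moving upper limit $t$, so one must either appeal to dominated convergence as above or split the sum at a fixed index $K$ and control the tail using $a_k\to 0$ (which follows from summability) together with the uniform geometric bound $\sum_{j\geq 1}\beta^{j}=\frac{\beta}{1-\beta}$.
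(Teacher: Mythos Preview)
Your proposal is correct and follows essentially the same route as the paper: telescope both recursions, form the ratio, swap the double sum to obtain $\sum_{k}a_k\sum_{j=1}^{t-1-k}\beta^j$, and pass to the limit. The only cosmetic difference is in the last step---you invoke dominated convergence (Tannery's theorem), whereas the paper writes out the equivalent explicit $\epsilon$-splitting argument you yourself mention as the alternative.
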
%
\begin{proof}%
From Lemma~\ref{lemma:vanilla-gd-norm-growth} and Lemma~\ref{lemma:momentum-gd-norm-growth}, we obtain
\begin{align}
    \|\ve{w}_t^\mathrm{GD}\|_2^2-\|\ve{w}_0\|_2^2&=\eta^2 \sum_{k=0}^{t-1} \|\ve{p}_k\|_2^2\\
    \|\ve{w}_t^\mathrm{GDM}\|_2^2-\|\ve{w}_0\|_2^2&=\eta^2 \sum_{k=0}^{t-1} \|\ve{p}_k\|_2^2
    + 2\eta^2\sum_{k=0}^{t-1}\left(\sum_{l=1}^{t-1-k}\beta^l\right)\|\ve{p}_k\|_2^2.
\end{align}
Thus, the corollary boils down to the claim that
\begin{align}
F_t:={\frac{\sum_{k=0}^{t}\left(\sum_{l=1}^{t-k}\beta^l\right)A_k}{\sum_{k=0}^{t} A_k}}
\longrightarrow
\frac{\beta}{1-\beta}
\quad\mathrm{as}\quad
t\rightarrow\infty
\end{align}
where $A_k:=\|\ve{p}_k\|_2^2$. 

Let $\epsilon>0$. We will find a large-enough $t$ that bounds $F_t$ around $\frac{\beta}{1-\beta}$ by a constant multiple of $\epsilon$.

We first let $T$ be large enough such that
\begin{equation}
    \sum_{k\geq T+1}A_k \leq \epsilon
    \label{eq:asymptote-proof-bound1}
\end{equation}
which is possible because $\sum_{t\geq 0}A_t < \infty$. We then let $T^\prime$ be large enough such that
\begin{equation}
    \frac{\beta}{1-\beta}-\sum_{l=1}^{T^\prime}\beta^l\leq \frac{\epsilon}{T\max_k A_k}
    \label{eq:asymptote-proof-bound2}
\end{equation}
which is possible due to the convergence of the geometric sum and the boundedness of $A_k$ (because its infinite sum is bounded).

We then define $t=T+T^\prime$ and break down the sums in $F_t$ as follows:
\begin{align}
F_t
&=\frac{\sum_{k=0}^{T}\left(\sum_{l=1}^{T+T^\prime-k}\beta^l\right)A_k+\sum_{k=T+1}^{T+T^\prime}\left(\sum_{l=1}^{T+T^\prime-k}\beta^l\right)A_k}{\sum_{k=0}^{T} A_k +\sum_{k=T+1}^{T+T^\prime} A_k}\\
&=\frac{\sum_{k=0}^{T}\left(\frac{\beta}{1-\beta}+r_1(\epsilon)\right)A_k+r_2(\epsilon)}{\sum_{k=0}^{T} A_k +r_3(\epsilon)}\\
&\leq\frac{\frac{\beta}{1-\beta}\sum_{k=0}^{T}A_k+T\max_k A_k r_1(\epsilon)+r_2(\epsilon)}{\sum_{k=0}^{T} A_k +r_3(\epsilon)}
\end{align}
where $r_1$, $r_2$, and $r_3$ are the residual terms that are bounded as follows:
\begin{equation}
    |r_1(\epsilon)|\leq \frac{\epsilon}{T\max_k A_k}
\end{equation}
by \eqref{eq:asymptote-proof-bound2} and
\begin{equation}
    |r_2(\epsilon)|\leq \frac{(1-\beta)\epsilon}{\beta}
    \quad\text{and}\quad
    |r_3(\epsilon)|\leq\epsilon
\end{equation}
by \eqref{eq:asymptote-proof-bound1}.

It follows that
\begin{align}
\left|F_t-\frac{\beta}{1-\beta}\right|
&\leq\left|\frac{-\frac{\beta}{1-\beta}r_3(\epsilon)+T\max_k A_k r_1(\epsilon)+r_2(\epsilon)}{\sum_{k=0}^{T} A_k +r_3(\epsilon)}\right|\\
&\leq\frac{1}{\sum_{k=0}^{T}A_k} \left(\frac{\beta}{1-\beta}+\frac{1-\beta}{\beta}+1\right)\epsilon\\
&\leq\frac{1}{M} \left(\frac{\beta}{1-\beta}+\frac{1-\beta}{\beta}+1\right)\epsilon
\end{align}
due to the triangular inequality and the positivity of $r_3$. $M>0$ is a suitable constant independent of $T$.
\end{proof}

\begin{proposition}[Effective update direction after projection]
Let $\ve{w}_{t+1}^{\mathrm{o}}:=\ve{w}_t-\eta\ve{p}_t$ and $\ve{w}_{t+1}^{\mathrm{p}}:=\ve{w}_t-\eta\Pi_{\ve{w}_t}(\ve{p}_t)$ be original and projected updates, respectively. Then, the effective update after the projection $\widehat{\ve{w}_{t+1}^{\mathrm{p}}}$ lies on the geodesic on $\mathbb{S}^{d-1}$ defined by $\widehat{\ve{w}_t}$ and $\widehat{\ve{w}_{t+1}^{\mathrm{o}}}$.
\end{proposition}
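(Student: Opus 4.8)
The plan is to exploit the fact that geodesics on the round sphere $\mathbb{S}^{d-1}$ are great circles, each of which is cut out by a two-dimensional linear subspace of $\mathbb{R}^d$ through the origin. Concretely, the geodesic determined by two linearly independent unit vectors is the intersection of $\mathbb{S}^{d-1}$ with the plane they span. Since $\widehat{\ve{w}_t}$ and $\widehat{\ve{w}_{t+1}^{\mathrm{o}}}$ are positive rescalings of $\ve{w}_t$ and $\ve{w}_{t+1}^{\mathrm{o}}$, the relevant plane is $\mathrm{span}\{\ve{w}_t,\ve{w}_{t+1}^{\mathrm{o}}\}$. The whole claim therefore reduces to showing that the projected iterate $\ve{w}_{t+1}^{\mathrm{p}}$ lies in this plane; its normalization $\widehat{\ve{w}_{t+1}^{\mathrm{p}}}$ will then automatically lie on the great circle, being both on the sphere and in the spanning plane.

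The heart of the argument is a one-line algebraic manipulation. First I would expand the projection via its definition, writing $\Pi_{\ve{w}_t}(\ve{p}_t)=\ve{p}_t-(\widehat{\ve{w}_t}\cdot\ve{p}_t)\,\widehat{\ve{w}_t}$, so that the radial component removed is simply a scalar multiple of $\ve{w}_t$. Substituting into the definition of $\ve{w}_{t+1}^{\mathrm{p}}$ and recognizing $\ve{w}_t-\eta\ve{p}_t=\ve{w}_{t+1}^{\mathrm{o}}$ yields
\[
\ve{w}_{t+1}^{\mathrm{p}}=\ve{w}_{t+1}^{\mathrm{o}}+\eta\,\frac{\widehat{\ve{w}_t}\cdot\ve{p}_t}{\|\ve{w}_t\|_2}\,\ve{w}_t,
\]
which exhibits $\ve{w}_{t+1}^{\mathrm{p}}$ explicitly as a linear combination of $\ve{w}_{t+1}^{\mathrm{o}}$ and $\ve{w}_t$. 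This is exactly the membership in $\mathrm{span}\{\ve{w}_t,\ve{w}_{t+1}^{\mathrm{o}}\}$ that I need; notably, no use of the orthogonality relation \eqref{eq:weight-gradient-orthogonality} is required here. To close the argument I would observe that $\widehat{\ve{w}_{t+1}^{\mathrm{p}}}=\ve{w}_{t+1}^{\mathrm{p}}/\|\ve{w}_{t+1}^{\mathrm{p}}\|_2$ is a positive multiple of $\ve{w}_{t+1}^{\mathrm{p}}$, hence stays in the same two-dimensional subspace while having unit norm by construction, and therefore lies on the great circle through $\widehat{\ve{w}_t}$ and $\widehat{\ve{w}_{t+1}^{\mathrm{o}}}$, which is the asserted geodesic.

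I expect the only genuine subtlety to be the degenerate case in which $\ve{w}_t$ and $\ve{w}_{t+1}^{\mathrm{o}}$ are linearly dependent, so that they span only a line and the geodesic through the corresponding effective points is not uniquely determined. This occurs precisely when $\ve{p}_t$ is parallel to $\ve{w}_t$; I would dispose of it by noting that linear dependence then places $\ve{w}_t$, $\ve{w}_{t+1}^{\mathrm{o}}$, and, by the displayed identity, $\ve{w}_{t+1}^{\mathrm{p}}$ on a common line through the origin, on which the claim holds vacuously. Away from this case the computation is completely routine, and the single conceptual ingredient is the identification of a spherical geodesic with the intersection of the sphere and a two-dimensional linear subspace.
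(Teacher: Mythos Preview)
Your proposal is correct and follows essentially the same approach as the paper: identify the geodesic as $\mathbb{S}^{d-1}$ intersected with a two-dimensional linear subspace, then verify that $\ve{w}_{t+1}^{\mathrm{p}}$ lies in that subspace by expanding the definition of $\Pi_{\ve{w}_t}$. The only cosmetic difference is that the paper writes the plane as $\mathrm{span}(\ve{w}_t,\ve{p}_t)$ rather than your $\mathrm{span}\{\ve{w}_t,\ve{w}_{t+1}^{\mathrm{o}}\}$, which is the same subspace since $\ve{w}_{t+1}^{\mathrm{o}}=\ve{w}_t-\eta\ve{p}_t$; your treatment of the degenerate case and the explicit linear-combination identity are welcome additions the paper omits.
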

\begin{proof}
The geodesic defined by $\widehat{\ve{w}_t}$ and $\widehat{\ve{w}_{t+1}^{\mathrm{o}}}$ can be written as $\mathbb{S}^{d-1}\cap \mathrm{span}(\ve{w}_t,\ve{p}_t)$. Thus, it suffices to show that $\widehat{\ve{w}_{t+1}^{\mathrm{p}}}\in \mathrm{span}(\ve{w}_t,\ve{p}_t)$.
Indeed, we observe that $\widehat{\ve{w}_{t+1}^\mathrm{p}}:=\frac{\ve{w}_{t}-\eta\Pi_{\ve{w}_t}(\ve{p}_t)}{\|\ve{w}_{t}-\eta\Pi_{\ve{w}_t}(\ve{p}_t)\|_2}\in\mathrm{span}(\ve{w}_t,\ve{p}_t)$ because $\Pi_{\ve{w}_t}(\ve{p}_t)=\ve{p}_t-(\widehat{\ve{w}_t}\cdot \ve{p}_t)\widehat{\ve{w}_t}$.
\end{proof}

\section{Toy example details}
\label{appendixsec:toy_details}

\paragraph{2D toy example in Figure~\ref{fig:teaser}}
We describe the details of the toy example in Figure~\ref{fig:teaser}. We solve the following optimization problem:
\begin{equation}
\label{eq:toy}
    \min_{\ve{w}} \frac{\ve{w}}{\| \ve{w} \|_2} \cdot \frac{\ve{w^\star}}{\| \ve{w^\star} \|_2}
\end{equation}
where $\ve{w}$ and $\ve{w^\star}$ are 2-dimensional vectors.
The problem is identical to the maximization of the cosine similarity between $\ve{w}$ and $\ve{w^\star}$. We set the $\ve{w^\star}$ to $(0, -1)$ and the initial $\ve{w}$ to $(0.001, 1)$.

This toy example has two interesting properties. First, the normalization term makes the optimal $\ve{w}$ for the problem not unique: if $\ve{w}^\star$ is optimal, then $c \ve{w^\star}$ is optimal for any $c > 0$. In fact, the cost function is scale-invariant. Second, the cost function is not convex.

As demonstrated in Figure~\ref{fig:teaser} and videos attached in our submitted code, the momentum gradient method fails to optimize \eqref{eq:toy} because of the excessive norm increases.
In particular, our simulation results show that a larger momentum induces a larger norm increase (maximum norm 2.93 when momentum is 0.9, and 27.87 when momentum is 0.99), as we shown in the main paper \S~\ref{sec:2.4_momentum_norm_growth}.
On the other hand, our method converges most quickly, among the compared methods, by taking advantage of the momentum-induced accelerated convergence, while avoiding the excessive norm increase.

\paragraph{3D spherical toy example in Figure~\ref{fig:spherical-toy}}
We employ 3D Rosenbrock function~\citep{rosenbrock1960automatic}: $\widetilde h(r, \psi, \phi)=(1-\psi)^2+300(\phi-\psi^2)^2$ in the spherical coordinate $(r, \psi, \phi)$.
Since we are mapping 2D space to a spherical surface, $(\psi, \phi)$ is defined on the hemisphere.
Thus, $ -\pi / 2\leq (\psi, \phi) \leq \pi / 2$. Based on the required range of problem $f$, scalar $c$ can be multiplied to each angles $f(c\psi, c\phi)$ to adjust angle scale to problem scale, where we choose $c=1.5$ in the experiments.
The initial point is $(c\psi,c\phi)=(-2,2)$ above the unit sphere $r=1$, and the minimum point is $(c\psi,c\phi)=(1,1)$.

Instead of optimizing $h$ in the spherical coordinate, we optimize the toy example in the Cartesian coordinate $(x, y, z)$ by computing $\min_{x, y, z} \widetilde h(r, \psi, \phi) = \min_{x, y, z} \widetilde h(T(x, y, z))$. We employ the spherical transform $T: (x,y,z) \rightarrow (r, \psi, \phi)$ as follows:
\begin{align}
\begin{split}
    \label{eq:inv_spherical_transform}
    r = \sqrt{x^2 + y^2 + z^2}, \quad \psi = \mathrm{cos}^{-1} (x / z), \quad \phi = \mathrm{sin}^{-1} (y / r).
\end{split}
\end{align}
For all optimizers, we set the momentum to 0.9, and we have exhaustively searched the optimal initial learning rates between 0.001 and 0.1. The learning rates are decayed by linearly at every iteration.

\begin{wraptable}{r}{0.47\linewidth}
\setlength{\tabcolsep}{4pt}
\centering
\small
\vspace{-2.0em}
\caption{\small\textbf{$\delta$ sensitivity.} We measured scale-variant and scale-invariant parameter detection performance of \adamn. \adamn consistently shows high performance over a wide range of $\delta$ values.}
\label{table:delta-sensitivity}
\begin{tabular}{@{}lccccc@{}}
\toprule
$\delta$ & \begin{tabular}[c]{@{}c@{}}Scale-variant\\ detection\end{tabular} & \begin{tabular}[c]{@{}c@{}}Scale-invariant\\ detection\end{tabular} & Accuracy \\ \midrule
1 & 86.64\% & 100\% & 70.74 \\
0.2   & 100\% & 100\%   & 70.83    \\
0.1   & 100\% & 100\%   & 70.81    \\
0.05  & 100\% & 100\%   & 70.78    \\
0.02  & 100\% & 99.94\% & 70.81    \\
0.01  & 100\% & 97.73\% & 70.66    \\
\bottomrule
\end{tabular}
\vspace{-1em}
\end{wraptable}
\section{$\delta$ sensitivity}
\label{appendixsec:appendix-delta}
Based on ImageNet training with ResNet18, we analyzed the cosine similarity of scale-invariant parameter and scale-variant parameter, and verified the sensitivity of $\delta$.
We first measured the cosine similarity between the gradient and weights (Eq.~\ref{eq:modified_ours}).
As a results, the cosine similarities for scale-variant weights are 
$[0.0028, 0.5660]$ 
, compared to 
$[5.5\times 10^{-10}, 4.4\times 10^{-6}]$
for scale-invariant ones (99\% confidence intervals). 
Because of this large gap, our methods are stable on a wide range of $\delta$ values.
We also measured scale-variant and scale-invariant parameter detection accuracy based on Eq.~\ref{eq:modified_ours} with various $\delta$ values.
The results are shown in Table~\ref{table:delta-sensitivity}.
In a wide range of delta values, \adamn perfectly discriminates the scale-variant and scale-invariant parameters. 
Also, \adamn consistently shows high performance.
Therefore, \adamn is not sensitive to the $\delta$ value and $\delta=0.1$ is suitable to separate scale-variant and scale-invariant parameters.

\section{Additional Experiments}

\subsection{Cross-bias Robustness}
\label{appendixsec:rebias}
Cross-bias generalization problem~\citep{bahng2019rebias} tackles the scenario where the training and test distributions have different real-world biases. This often occurs when the training-set biases provide an easy shortcut to solve the problem. \citet{bahng2019rebias} has proposed the ReBias scheme based on the minimax optimization, where the inner problem maximizes the independence between an intentionally biased representation and the target model of interest and the outer optimization solves the standard classification problem. As for the adversarial training, \citet{bahng2019rebias} has employed the Adam to handle the complex optimization. 

We follow the two cross-bias generalization benchmarks proposed by \citet{bahng2019rebias}. The first benchmark is the Biased MNIST, the dataset synthesized by injecting colors on the MNIST background pixels. Each sample is colored according to a pre-defined class-color mapping with probability $\rho$. The color is selected at random with $1-\rho$ chance. For example, $\rho=1.0$ leads a completely biased dataset and $\rho=0.1$ leads to an unbiased dataset. Each model is trained on the $\rho$-biased MNIST and tested on the unbiased MNIST. We train a stacked convolutional network with BN and ReLU. 
The second benchmark is the 9-Class ImageNet representing the real-world biases, such as textures~\citep{stylised_imagenet}. The unbiased accuracy is measured by pseudo-labels generated by the texture clustering. We also report the performance on ImageNet-A~\citep{imagenet_a}, the collection of failure samples of existing CNNs.

In Table~\ref{tab:rebias}, we observe that \adamn outperforms Adam in all the benchmarks. AdamP is a good alternative to Adam for difficult optimization problems applied on scale-invariant parameters.

\begin{table}[ht!]
\setlength{\tabcolsep}{3.5pt}
\centering
\caption{\small \textbf{Real-world bias robustness.} Biased MNIST and 9-Class ImageNet benchmarks with ReBias~\citep{bahng2019rebias}.}
\label{tab:rebias}
\vspace{0em}
\small
\begin{tabular}{@{}lccccccccc@{}}
\toprule
\multirow{2}{*}{Optimizer} & \multicolumn{5}{c}{Biased MNIST Unbiased acc. at $\rho$} & & \multicolumn{3}{c}{9-Class ImageNet} \\ \cmidrule(l){2-6} \cmidrule(l){8-10}
      & .999 & .997 & .995 & .990 & avg. & & Biased     & UnBiased     & ImageNet-A     \\ \midrule
Adam  & 22.9           & 63.0           & 74.9           & 87.0       & 61.9          & & 93.8      & 92.6        & 31.2    \\
\adamn (ours) & \textbf{30.5 \greenpscript{+7.5}}           & \textbf{70.9 \greenpscript{+7.9}}           & \textbf{80.9 \greenpscript{+6.0}}           & \textbf{89.6 \greenpscript{+2.6}}  & \textbf{68.0 \greenpscript{+6.0}}    &      & \textbf{95.2 \greenpscript{+1.4}}      & \textbf{94.5 \greenpscript{+1.8}}        & \textbf{32.9 \greenpscript{+1.7}}    \\ \bottomrule
\end{tabular}
\end{table}

\subsection{Training with various techniques}
\label{sec:exp-efficientnet}
EfficientNet~\citep{tan2019efficientnet} and ReXNet~\citep{han2020rexnet} are recently proposed high-performance networks that is trained using various techniques such as data augmentation~\citep{cubuk2018autoaugment,cubuk2020randaugment}, stochastic depth~\citep{huang2016stodepth}, and dropout~\citep{srivastava2014dropout}.
We measured the performance of \adamn on EfficientNets and ReXNets to verify it can be used with other training techniques.
The experiment were conducted on the well-known image classification codebase\footnote{\url{https://github.com/rwightman/pytorch-image-models}}.
Table~\ref{table:efficientnet} shows performance of the original paper, our reproduced results, and \adamn.
The results show that \adamn is still effective in training with various techniques, and can contribute to improving the best performance of the network.

\begin{table}[h]
\centering
\small
\caption{\small \textbf{Training with various techniques.} EfficientNet and ReXNet were trained using the latest training techniques, and the result shows that \adamn can also contribute to learning with these techniques.}
\label{table:efficientnet}
\begin{tabular}{@{}lccccc@{}}
\toprule
Network         & Image size & \# of params & Paper & Reproduce & \adamn      \\ \midrule
EfficientNet-B0 & 224        & 5.3M        & 77.1  & 77.7      & \textbf{78.1 \greenpscript{+0.4}} \\
EfficientNet-B1 & 240        & 7.8M        & 79.1  & 78.7      & \textbf{79.9 \greenpscript{+1.2}} \\
EfficientNet-B2 & 260        & 9.1M        & 80.1  & 80.4      & \textbf{80.5 \greenpscript{+0.1}} \\
EfficientNet-B3 & 300        & 12.2M       & 81.6  & 81.5      & \textbf{81.9 \greenpscript{+0.4}} \\ \midrule
ReXNet-×1.0 & 224        & 4.8M        & 77.9  & 77.9      & \textbf{78.1 \greenpscript{+0.2}} \\
ReXNet-×1.3 & 224        & 6.4M        & 79.5  & 79.5      & \textbf{79.7 \greenpscript{+0.2}} \\
\bottomrule
\end{tabular}
\end{table}

\begin{wraptable}{r}{0.47\linewidth}
\setlength{\tabcolsep}{4pt}
\centering
\small
\vspace{-2.0em}
\caption{\small\textbf{Large-batch training.} ResNet50~\citep{resnet} is trained over 100 epochs with various batch-size on ImageNet~\citep{imagenet}. AdamW~\citep{adamw} and \adamn are used as the optimizer.}
\label{table:large_batch}
\begin{tabular}{@{}lccccc@{}}
\toprule
Batch-size  & AdamW   & \adamn (ours)  \\ \midrule
4k  & 72.41 & \textbf{73.75 \greenpscript{+1.34}} \\
8k  & 69.74 & \textbf{71.36 \greenpscript{+1.62}} \\
16k & 63.79 & \textbf{66.89 \greenpscript{+3.1}} \\
\bottomrule
\end{tabular}
\vspace{-2em}
\end{wraptable}

\subsection{Large-batch training}
\label{sec:exp-large-batch}

In order to efficiently use multiple machines and huge computational resources, large-batch training is essential.
However, a general optimizer suffers from a significant performance decrease in a large-batch training, so large-batch training is another challenge for a deep learning optimizer.~\citep{lars,lamb,goyal2017accurate}
We conducted experiments to verify the performance of \adamn in such large-batch training, and the results are shown in Table~\ref{table:large_batch}.
The performance improvement of \adamn in large-batch training is greater than that of regular batch-size (Table~\ref{table:classification}).
Therefore, the decrease of effective learning rate due to momentum can be considered as one of the causes of performance degradation in large-batch training.
However, \adamn does not show as much performance as the large-batch optimizers~\citep{lars,lamb,goyal2017accurate}, and therefore, applying \adamn to the large-batch optimizer should be studied as a future work.

\section{Experiments settings}
\label{appendixsec:settings}

We describe the experimental settings in full detail for reproducibility.

\subsection{Common settings}

All experiments are conducted based on PyTorch. 
\sgdn and \adamn are implemented to handle channel-wise (\eg batch normalization~\citep{batchnorm} and instance normalization~\citep{instancenorm}) and layer-wise normalization~\citep{layernorm}.
Based on the empirical measurement of the inner product between the weight vector and the corresponding gradient vector for scale-invariant parameters (they are supposed to be orthogonal), we set the $\delta$ in Algorithms~\ref{alg:sgdn} and~\ref{alg:adamn} to 0.1. 
We use the decoupled weight decay~\citep{adamw} for \sgdn and \adamn in order to separate the gradient due to the weight decay from the gradient due to the loss function.
Please refer to the attached codes: \texttt{sgdp.py} and \texttt{adamp.py} for further details.

\begin{table}[t]
\caption{\small {\bf Dataset statistics.} Summary of the dataset specs used in the experiments. ImageNet-1k~\citep{imagenet}, MS-COCO~\citep{mscoco}, CIFAR-10~\citep{cifar}, Biased-MNIST~\citep{mnist, bahng2019rebias}, ImageNet-A~\citep{imagenet_a}, 9-Class ImageNet~\citep{bahng2019rebias}, 9-Class ImageNet-A~\citep{bahng2019rebias}, MagnaTagATune~\citep{mtat}, Speech Commands~\citep{speechcommands}, DCASE 2017 task 4~\citep{dcase}, CUB~\citep{cub}, In-Shop Clothes~\citep{inshop} and SOP~\citep{sop} are used in experiments.}
\label{tab:sup_dataset_overview}
\small
\setlength{\tabcolsep}{.4em}
\begin{tabular}{@{}lllll@{}}
\toprule
Task                                  & Dataset & \#\ignorespacesafterend classes & \#\ignorespacesafterend samples & Note \\ \midrule
Image classification                  & ImageNet-1k & 1,000     & $\approx$ 1.33M   & \\ \midrule
Object detection                      & MS-COCO & 80        & $\approx$ 123k    & \\ \midrule
\multirow{4}{*}{Robustness}           & CIFAR-10 & 10        & $\approx$ 60k     & \\
                                      & Biased-MNIST & 10        & $\approx$ 60k     & colors are injected to be biased \\
                                      & 9-Class ImageNet & 9         & $\approx$ 57k     & a subset of ImageNet-1k \\
                                      & 9-Class ImageNet-A & 9         & 617     & a subset of ImageNet-A \\\midrule
\multirow{3}{*}{Audio classification} & MagnaTagATune & 50        & $\approx$ 21k     & mutl-labeled dataset \\
                                      & Speech Commands & 35        & $\approx$ 106k    & \\
                                      & DCASE 2017 task 4 & 17        & $\approx$ 53k     & mutl-labeled dataset \\ \midrule
Language Modeling & WikiText-103 & - & $\approx$ 103M tokens & vocabulary size (267,735) \\ \midrule
\multirow{4}{*}{Image retrieval}      & CUB & 200       & $\approx$ 12k     & tr classes (100), te classes (100) \\
                                      & Cars-196 & 196       & $\approx$ 16k     & tr classes (98), te classes (98)  \\
                                      & In-Shop Clothes & 7,982     & $\approx$ 53k     & tr classes (3,997), te classes (3985) \\
                                      & SOP & 22,634    & $\approx$ 120k    & tr classes (11,318), te classes (11,316) \\ \bottomrule
\end{tabular}
\end{table}

\subsection{Image classification}
\label{appendixsec:classification}

Experiments on ResNet~\citep{resnet} are conducted based on {the standard settings}
: learning rate $0.1$, weight decay $10^{-4}$, batch-size $256$, momentum $0.9$ with Nesterov~\citep{nag} for SGD and \sgdn.
For Adam series, we use the learning rate $0.001$, weight decay $10^{-4}$, batch-size $256$, $\beta_1$ $0.9$, $\beta_2$ $0.999$, $\epsilon$ $10^{-8}$.

For training MobileNetV2~\citep{mobilenetv2}, we have additionally used label-smoothing and large batch size $1024$, and have searched the best learning rates and weight decay values for each optimizer.

The training sessions are run for 100 epochs (ResNet18, ResNet50) or 150 epochs (MobileNetV2, ResNet50 + CutMix) with the cosine learning rate schedule~\citep{loshchilov2016sgdr} on a machine with four NVIDIA V100 GPUs.

\subsection{Object detection}

Object detection performances have been measured on the MS-COCO dataset~\citep{mscoco} with two popular object detectors: CenterNet~\citep{zhou2019objects} and SSD~\citep{liu2016ssd}.
We adopt the CenterNet with ResNet18~\citep{resnet} backbone and the SSD with VGG16 BN~\citep{VGG} backbone as baseline detectors.
CenterNet has been trained for $140$ epochs with learning rate $2.5 \times 10^{-4}$, weight decay $10^{-5}$, batch size $64$, and the cosine learning rate schedule.
SSD has been trained for $110$ epochs with learning rate $10^{-4}$, weight decay $10^{-5}$, batch size $64$, and the step learning rate schedule which decays learning rates by $1/10$ at $70\%$ and $90\%$ of training.

\subsection{Robustness}

\subsubsection{Adversarial training}
\label{appendixsec:adv_training}

Adversarial robustness benchmark results have been reproduced using the unofficial PyTorch implementation of the adversarial training of Wide-ResNet~\citep{wideresnet}\footnote{\url{https://github.com/louis2889184/pytorch-adversarial-training}} for the CIFAR-10 attack challenge\footnote{\url{https://github.com/MadryLab/cifar10_challenge}}.
Projected gradient descent (PGD) attack variants~\citep{madry2018towards} have been used as the threat model for the all the experiments. We employed 10 inner PGD iterations and $\varepsilon = 80/255$ for the $L_2$ PGD attack and $\varepsilon = 4/255$ for the $L_\infty$ PGD attack.
We additionally test stronger threat models, $L_\infty$ PGD with $\varepsilon = 8/255$ and $20$ iterations as \citet{madry2018towards}. Following \cite{madry2018towards}, we employ 7 inner PGD iterations for the training threat model, and 20 inner PGD iterations for the test threat model.
In all the experiments, Wide-ResNet-34-10 have been trained with the PGD threat model. The models have been trained for $200$ epochs with learning rate $0.01$, weight decay $0.0002$, batch size $128$, and the step learning rate schedule which decays learning rates by $1/10$ at epochs $100$ and $150$.
Table~\ref{tab:sup_advtr} shows the detailed results.

\begin{table}[h]
\centering
\small
\caption{\small {\bf Adversarial training.} Standard and attacked accuracies of PGD-adversarially trained Wide-ResNet on CIFAR-10. For each threat model scenario, we report the perturbation size $\varepsilon$ and the number of PGD iterations $n$. $*$ denotes results from the original paper.}
\label{tab:sup_advtr}
\vspace{.5em}
\begin{tabular}{@{}lllll@{}}
\toprule
Attack Method (train) & Attack Method (test) & Optimizer & Standard Acc & Attacked Acc \\ \midrule
\multirow{4}{*}{$\ell_\infty$ ($\varepsilon=4/255, n=10$)} & \multirow{2}{*}{$\ell_\infty$ ($\varepsilon=4/255, n=10$)} & Adam & 80.12 & 56.58 \\
 & & \adamn & \textbf{89.85 \greenp{+9.73}} & \textbf{66.28 \greenp{+9.70}} \\ \cmidrule{2-5}
 & \multirow{2}{*}{$\ell_\infty$ ($\varepsilon=8/255, n=20$)} & Adam & 80.12 & 29.00 \\
 & & \adamn & \textbf{89.85 \greenp{+9.73}} & \textbf{35.76 \greenp{+6.76}} \\ 
 
 \midrule
\multirow{2}{*}{$\ell_2$ ($\varepsilon=80/255, n=10$)} & \multirow{2}{*}{$\ell_2$ ($\varepsilon=80/255, n=10$)} &  Adam & 84.14 & 70.33 \\
 & & \adamn & \textbf{93.46 \greenp{+9.32}} & \textbf{83.59 \greenp{+13.26}} \\ \midrule
\multirow{3}{*}{$\ell_\infty$ ($\varepsilon=8/255, n=7$)} & \multirow{3}{*}{$\ell_\infty$ ($\varepsilon=8/255, n=20$)} & Adam & 69.76 & 39.48 \\
&& \adamn & \textbf{85.74 \greenp{+15.98}} & \textbf{42.42 \greenp{+2.94}} \\ \cmidrule{3-5}
&& \citet{madry2018towards} & - & 45.8$^*$ \\ \bottomrule
\end{tabular}
\end{table}

\subsubsection{Robustness against real-world biases}

{
We follow the two cross-bias generalization benchmarks proposed by \citep{bahng2019rebias}. We refer \citep{bahng2019rebias} for interested readers.
For all experiments, the batch size is $256$ and $128$ for Biased MNIST and 9-Class ImageNet, respectively. For Biased MNIST, the initial learning rate is $0.001$, decayed by factor $0.1$ every $20$ epochs. For 9-Class ImageNet, the learning rate is $0.001$, decayed by cosine annealing. We train the fully convolutional network and ResNet18 for $80$ and $120$ epochs, respectively.
The weight decay is $10^{-4}$ for all experiments.
}

\subsection{Audio classification}

\paragraph{Dataset.}
Three datasets with different physical properties are employed as the audio benchmarks. We illustrate the statistics in Table~\ref{tab:sup_dataset_overview}.
The {\bf music tagging} is a multi-label classification task for the prediction of user-generated tags, \eg, genres, moods, and instruments.
We use a subset of MagnaTagATune (MTAT) dataset~\citep{mtat} which contains $\approx$21k audio clips and 50 tags. 
The average of tag-wise Area Under Receiver Operating Characteristic Curve (ROC-AUC) and Area Under Precision-Recall Curve (PR-AUC) are used as the evaluation metrics.
{\bf Keyword spotting} is a primitive speech recognition task where an audio clip containing a keyword is categorized among a list of limited vocabulary. We use the Speech Commands dataset~\citep{speechcommands} which contains $\approx$ 106k samples and 35 command classes such as ``yes'', ``no'', ``left'', ``right''. The accuracy metric is used for the evaluation.
{\bf Acoustic sound detection} is a multi-label classification task with non-music and non-verbal audios. We use the ``large-scale weakly supervised sound event detection for smart cars'' dataset used for the DCASE 2017 challenge~\citep{dcase}. It has $\approx$53k audio clips with 17 events such as ``Car'', ``Fire truck'', and ``Train horn''. For evaluation, we use the F1-score by setting the prediction threshold as $0.1$.

\paragraph{Training setting.}
We use the 16kHz sampling rate for the all experiments, and all hyperparameters, \eg, the number of harmonics, trainable parameters, are set to the same as in \citep{won2020harmonic}.
The official implementation by \citep{won2020harmonic}\footnote{\url{https://github.com/minzwon/data-driven-harmonic-filters}} is used for all the experiments.
We compare three different optimizers, Adam, \adamn (ours), and the complex mixture of Adam and SGD proposed by \citep{won2019attention}.

{For the mixture of Adam and SGD, we adopt the same hyperparameters as in the previous papers~\citep{won2019attention, won2020harmonic}. The mixed optimization algorithm first runs Adam for $60$ epochs with learning rate $10^{-4}$. After $60$ epochs, the model with the best validation performance is selected as the initialization for the second phase. During the second phase, the model is trained using SGD for $140$ epochs with the learning rate $10^{-4}$, decayed by 1/10 at epochs $20$ and $40$. We use the weight decay $10^{-4}$ for the optimizers.}

To show the effectiveness of our method, we have searched the best hyperparameters for the Adam optimizer on the MTAT validation dataset and have transferred them to \adamn experiments.
As the result of our search, we set the weight decay as $0$ and the initial learning rate as $0.0001$ decayed by the cosine annealing scheduler. The number of training epochs are set to $100$ for MTAT dataset and $30$ for SpeechCommand and DCASE dataset.
{As a result, we observe that \adamn shows superior performances compared to the complex mixture, with a fewer number of training epochs ($200\rightarrow30$).}

\subsection{Retrieval}

\paragraph{Dataset.}
We use four retrieval benchmark datasets. For the CUB~\citep{cub} dataset which contains bird images with 200 classes, we use 100 classes for training and the rest for evaluation.
For evaluation, we query every test image to the test dataset, and measure the recall@1 metric.
The same protocol is applied to Cars-196~\citep{cars} (196 classes) and SOP~\citep{sop} (22,634 classes) datasets.
For InShop~\citep{inshop} experiments, we follow the official benchmark setting proposed by \citep{inshop}.
We summarize the dataset statistics in Table~\ref{tab:sup_dataset_overview}

\paragraph{Training setting.}
For the all experiments, we use the same backbone network and the same training setting excepting the optimizer and the loss function. The official implementation by \citep{kim2020proxy}\footnote{\url{https://github.com/tjddus9597/Proxy-Anchor-CVPR2020}} is used for the all experiments.

We use the Pytorch official ImageNet-pretrained ResNet50 model as the initialization. During the training, we freeze the BN statistics as the ImageNet statistics (\texttt{eval} mode in PyTorch). We replace the global average pooling (GAP) layer of ResNet with the summation of GAP and global max pooling layer as in the implementation provided by \citep{kim2020proxy}. Pooled features are linearly mapped to the 512 dimension embedding space and $\ell_2$-normalized.

We set the initial learning rate $10^{-4}$, decayed by the factor $0.5$ for every 5 epochs. Every mini-batch contains $120$ randomly chosen samples. For the better stability, we train only the last linear layer for the first $5$ epochs, and update all the parameters for the remaining steps. The weight decay is set to $0$.

\section{Analysis with learning rate schedule and weight decay}
\label{appendixsec:lr_and_wd}

We analyze the norm growth of scale-invariant parameters and the corresponding change in effective step-size.
We provide extended results of this experiment by measuring norm growth and effective step-size for SGD, \sgdn, Adam and \adamn under various weight decay values.
{
The experiment is based on ResNet18 trained on the ImageNet dataset, and the network was trained for 100 epoch in the standard setting as in \ref{appendixsec:classification}.
We have analyzed the impact of learning rate schedule and weight decay for the scale-invariant parameters.}
Figure~\ref{fig:appendix-SGD-norm-step} and Figure~\ref{fig:appendix-SGD-norm-cosine} show the results of SGD and \sgdn under the step-decay and cosine-annealing learning rate schedules, respectively.
The same results for Adam and \adamn are shown in Figures~\ref{fig:appendix-Adam-norm-step} and Figure~\ref{fig:appendix-Adam-norm-cosine}.
We have used the optimal weight decay value of the baseline as the reference point and changed the weight decay values.
We write the weight decay in each experiment in relative values with respect to the corresponding optimal values.

In all considered settings, \sgdn and \adamn effectively prevent the norm growth, which prevents the rapid decrease of the effective step sizes. 
\sgdn and \adamn shows better performances than the baselines.
Another way to prevent the norm growth is to control the weight decay.
However, this way of norm adjustment is sensitive to the weight decay value and results in poor performances as soon as non-optimal weight decay values are used.
Figure~\ref{fig:appendix-SGD-norm-step} and \ref{fig:appendix-Adam-norm-step} shows that the learning curves are generally sensitive to the weight decay values even showing abnormalities such as the gradual increase of the effective step sizes.
On the other hand, \sgdn and \adamn prevent rapid norm growth without weight decay, leading to smooth effective step size reduction. \sgdn and \adamn are not sensitive to the weight decay values.

\begin{figure}[!hb]
    \centering
    \includegraphics[width=.9\linewidth]{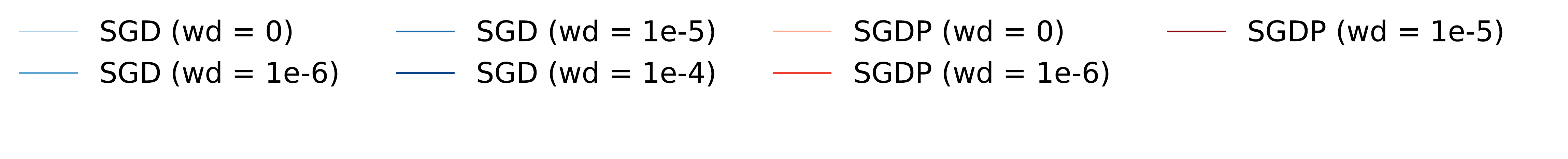}\vspace{-1.5em}
    \includegraphics[width=.33\linewidth]{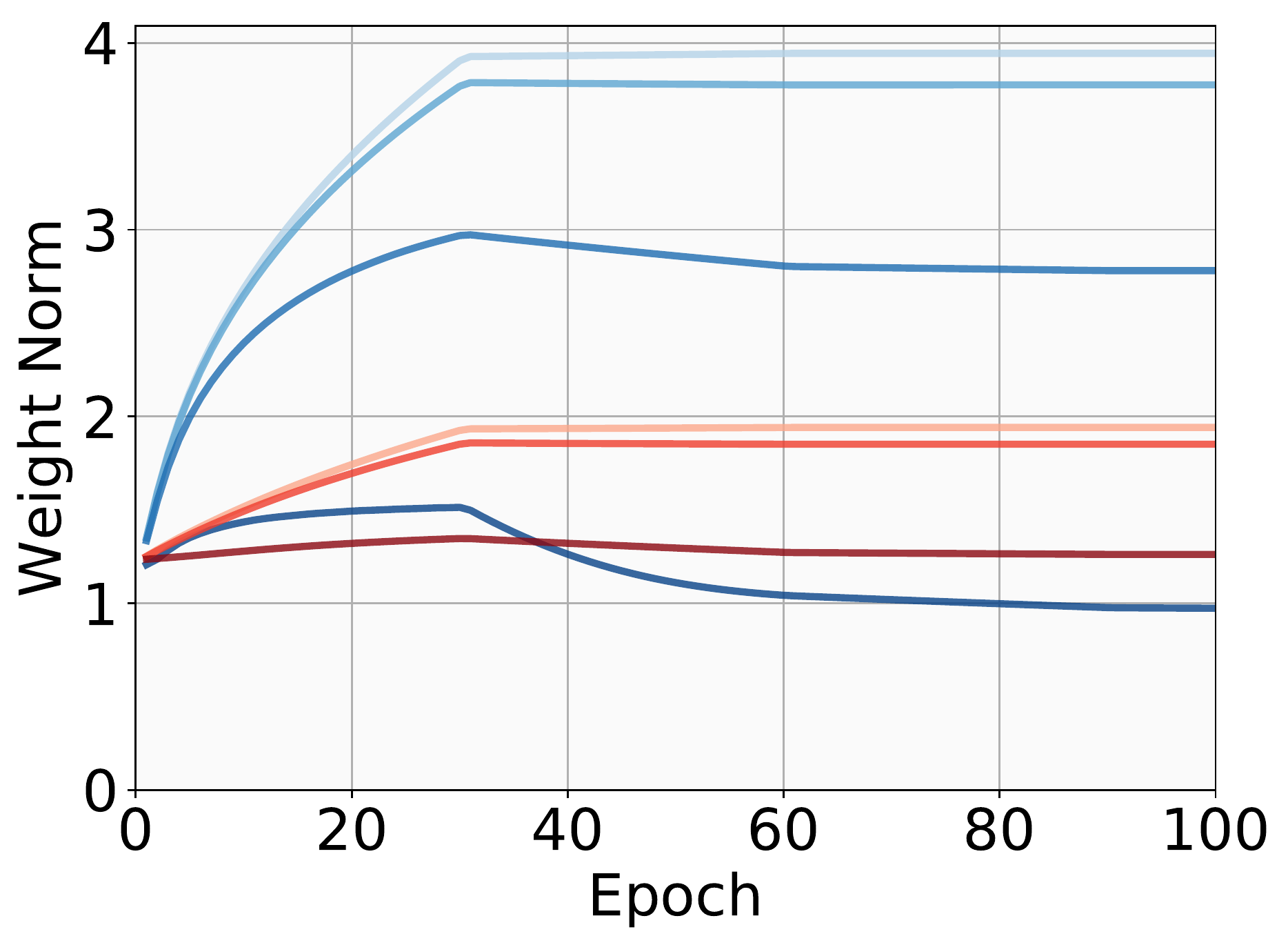}%
    \includegraphics[width=.33\linewidth]{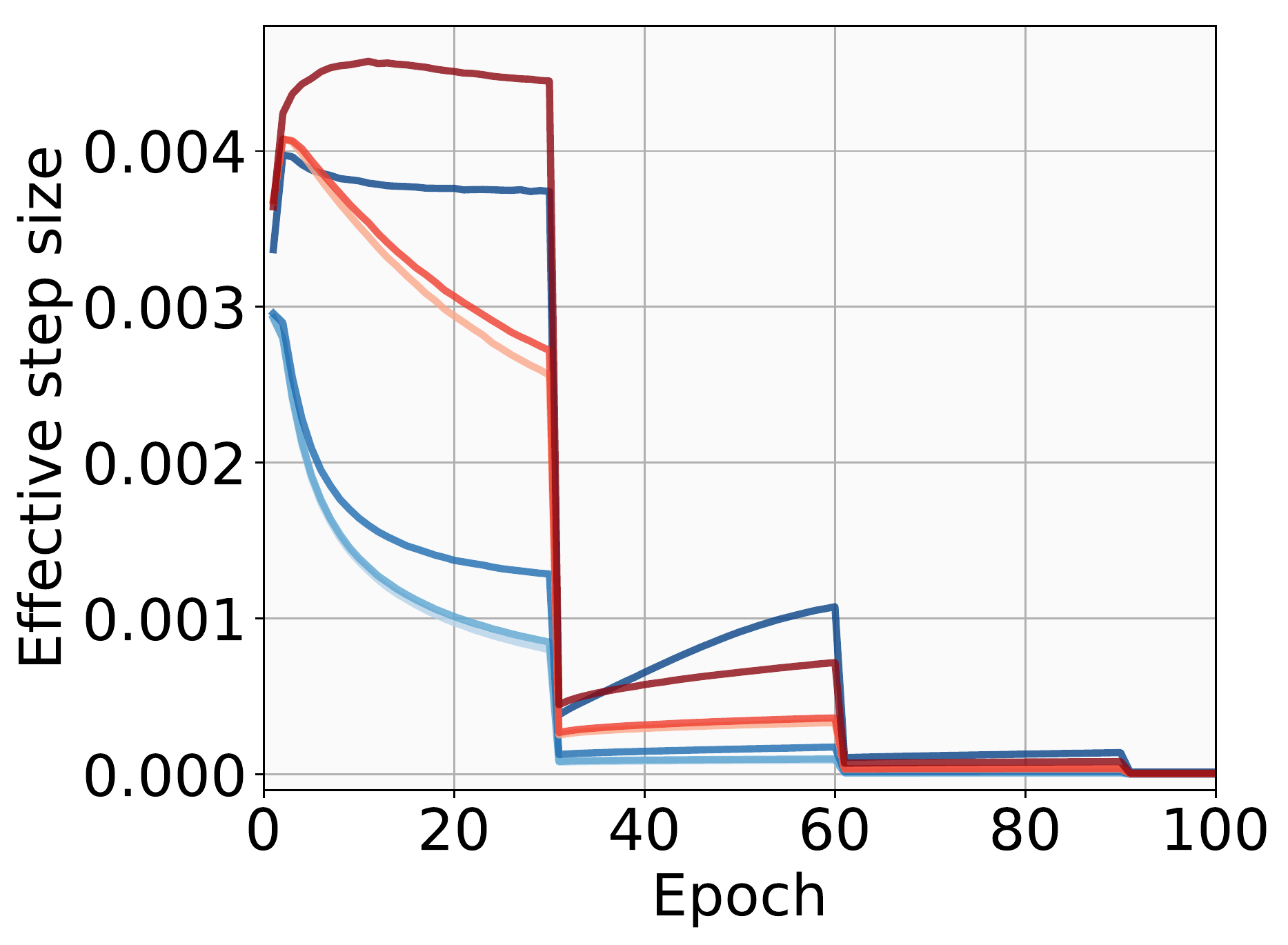}%
    \includegraphics[width=.33\linewidth]{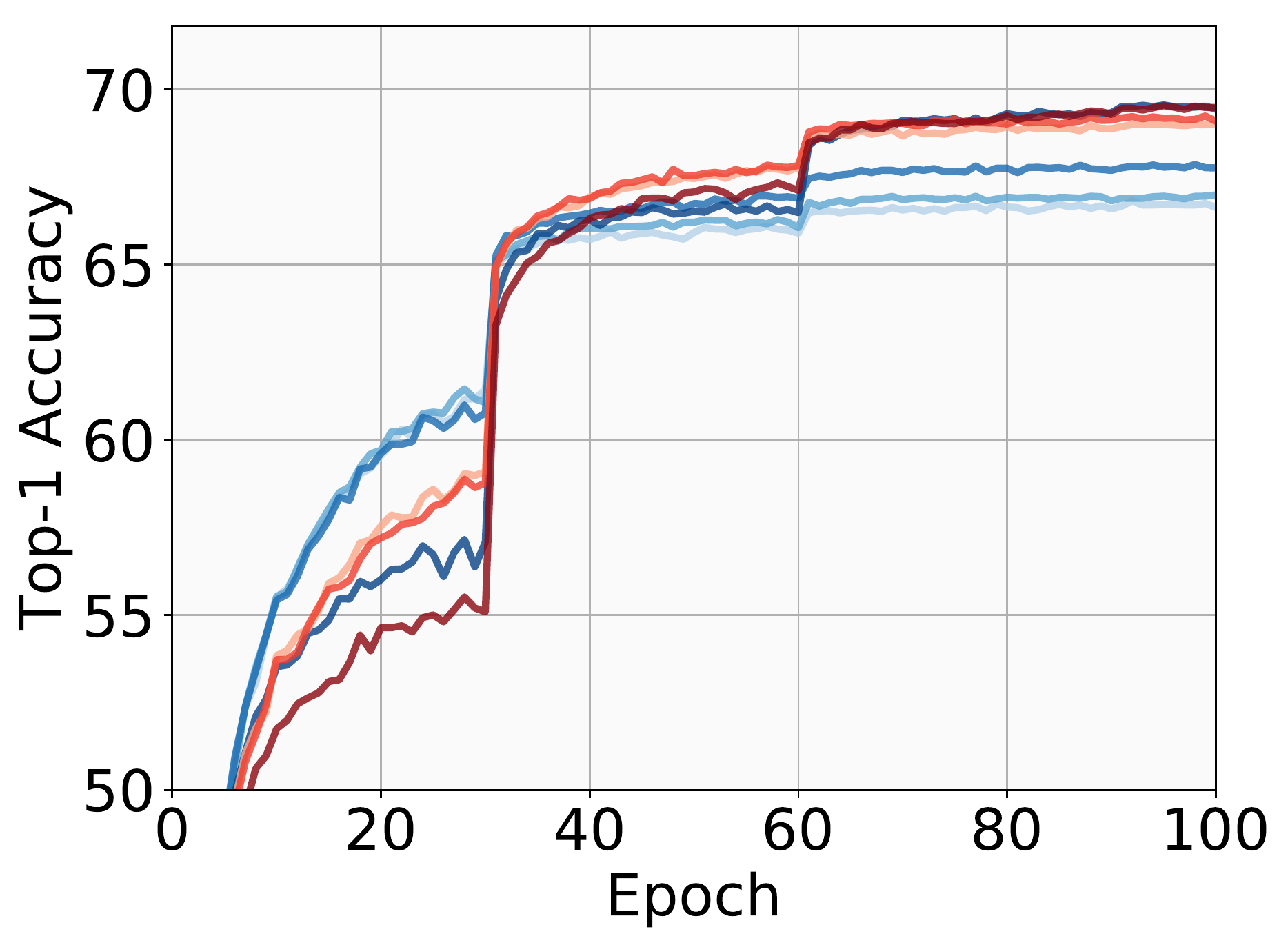}%
    \vspace{-.5em}
    \caption{\small Norm value analysis: SGD + step learning rate decay.}
    \label{fig:appendix-SGD-norm-step}
    \vspace{-1em}
\end{figure}

\begin{figure}[!hb]
    \centering
    \includegraphics[width=.9\linewidth]{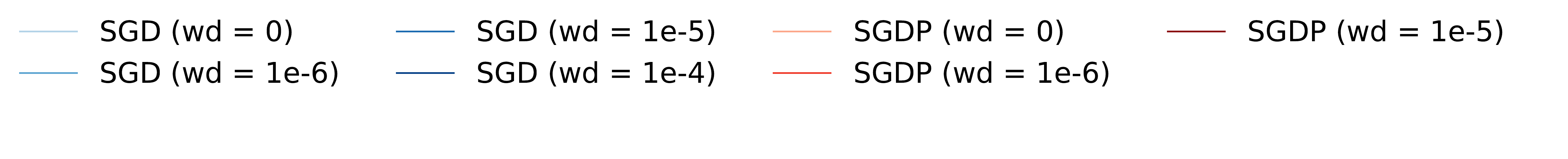}\vspace{-1.5em}
    \includegraphics[width=.33\linewidth]{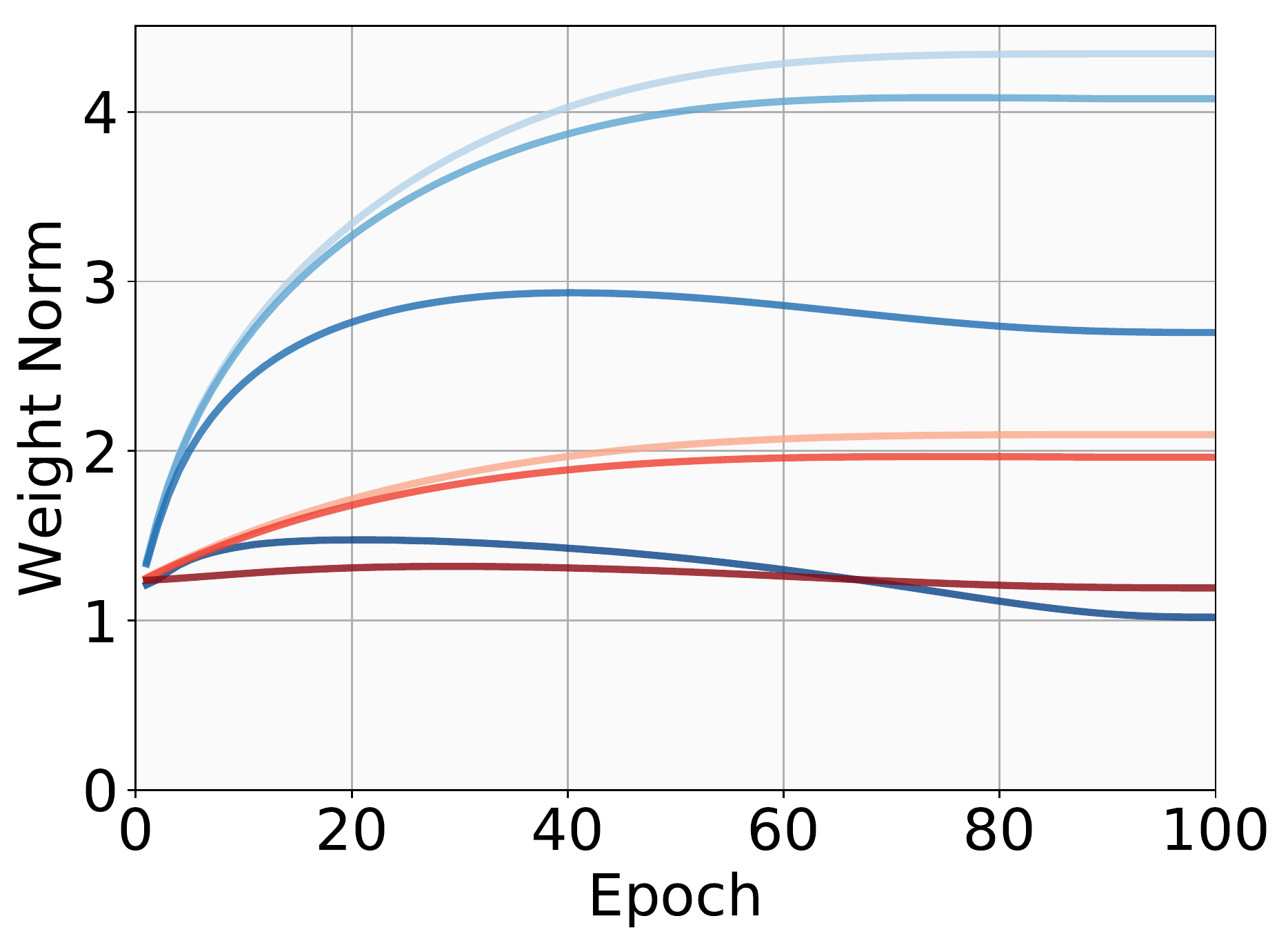}%
    \includegraphics[width=.33\linewidth]{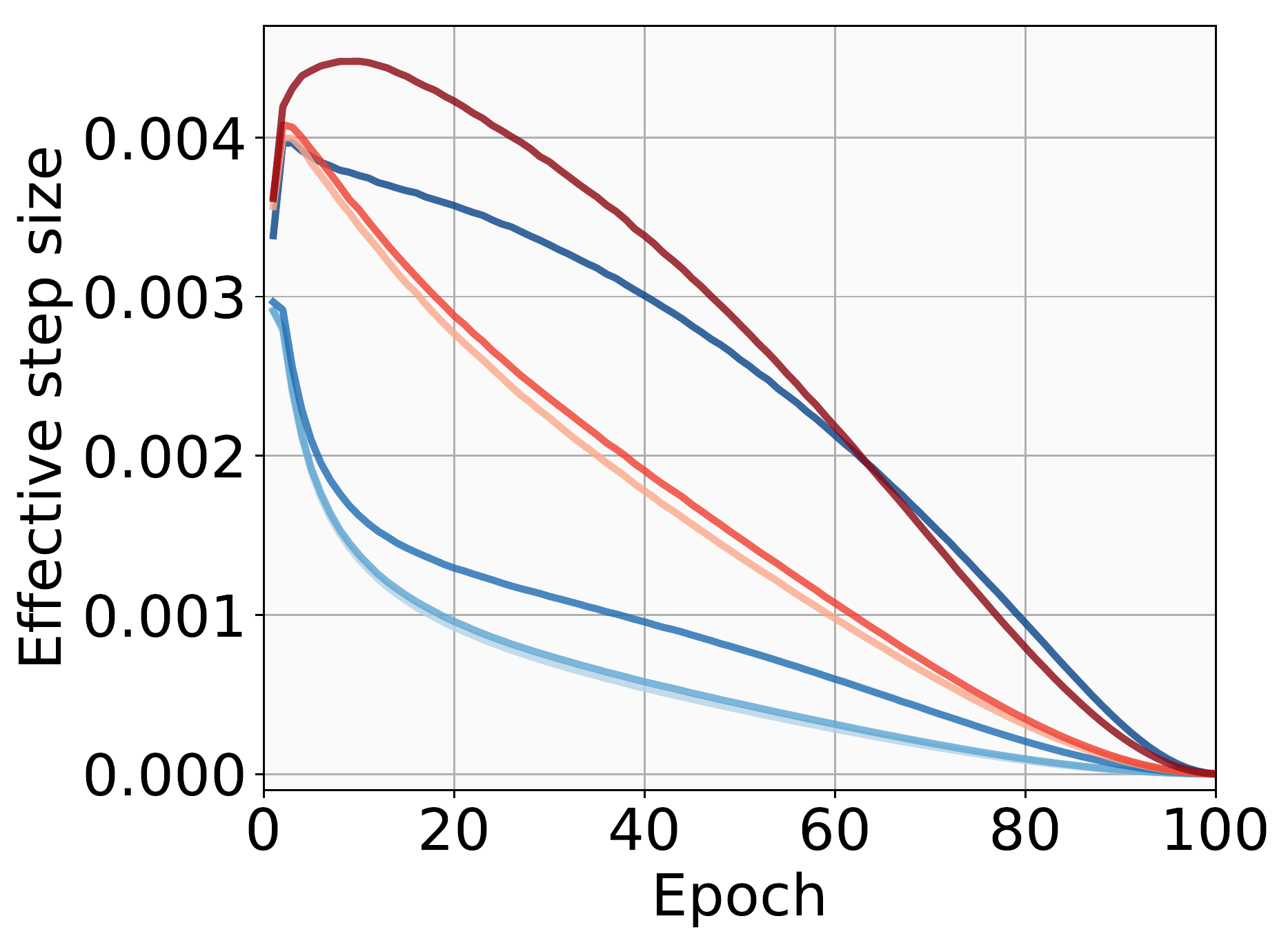}%
    \includegraphics[width=.33\linewidth]{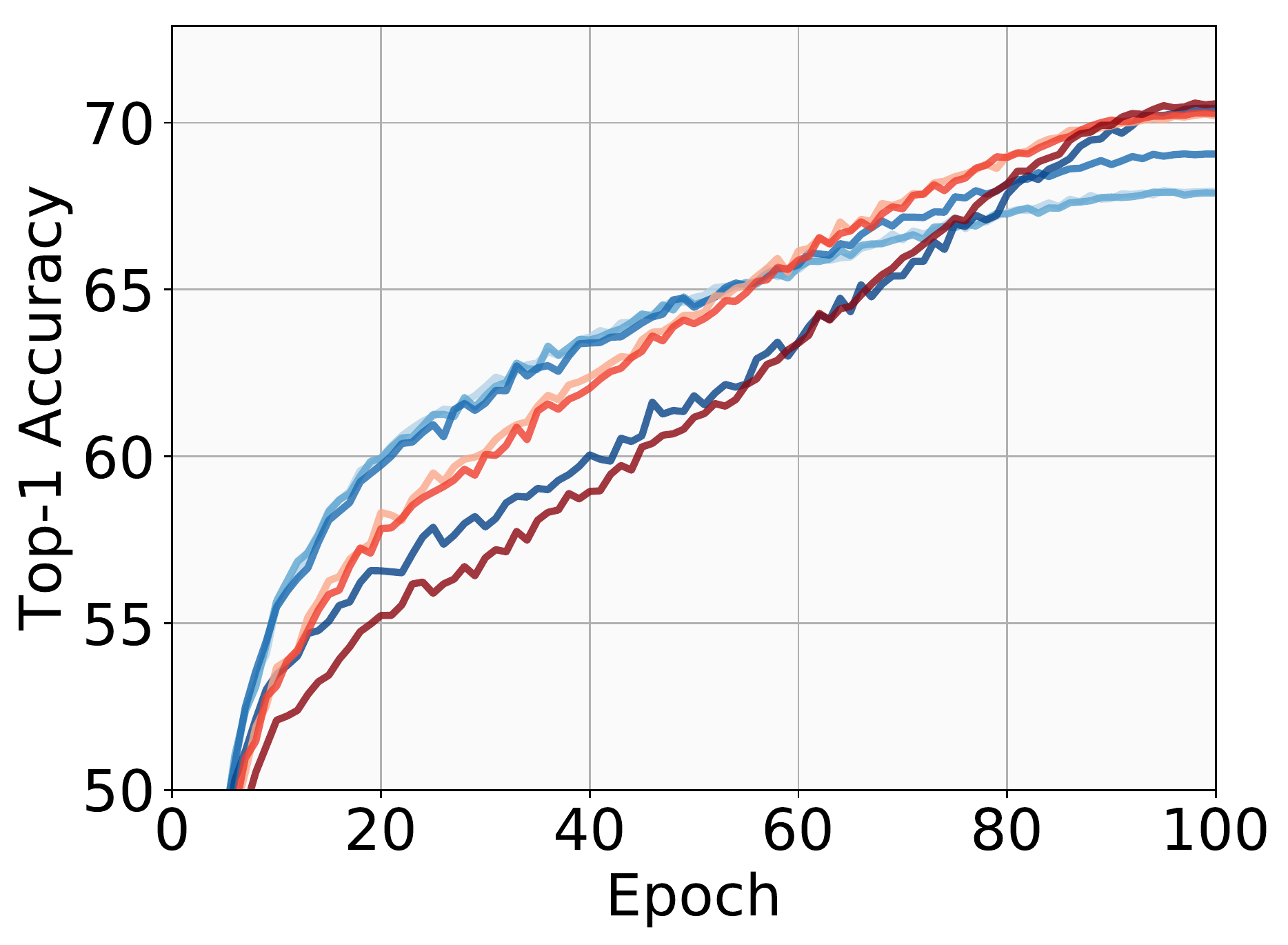}%
    \vspace{-.5em}
    \caption{\small Norm value analysis: SGD + cosine learning rate decay. SGD (wd=$10^{-4}$) and SGDP (wd=$10^{-5}$) are the same setting as the reported numbers in Table~\ref{table:classification}}
    \label{fig:appendix-SGD-norm-cosine}
    \vspace{-1em}
\end{figure}

\begin{figure}[!hb]
    \centering
    \includegraphics[width=.9\linewidth]{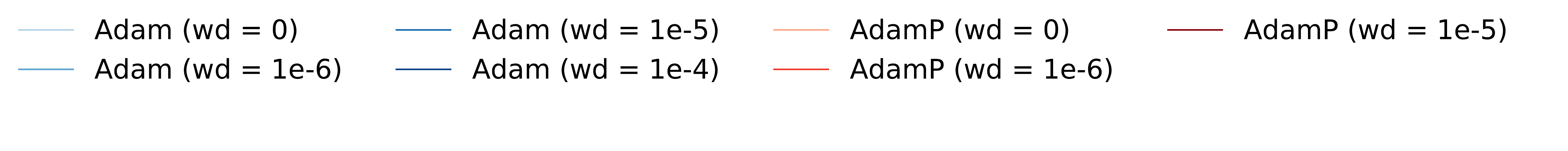}\vspace{-1.5em}
    \includegraphics[width=.33\linewidth]{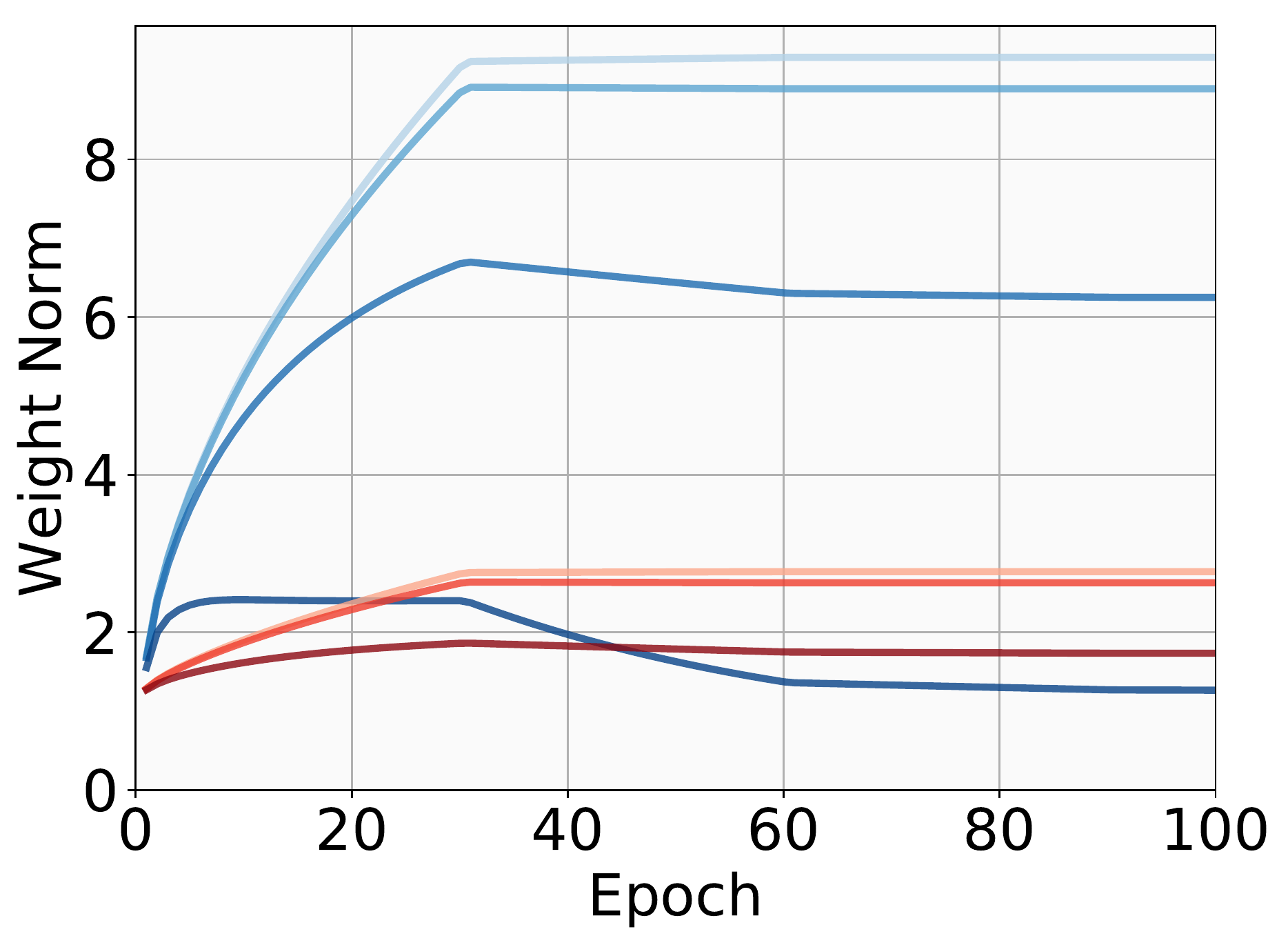}%
    \includegraphics[width=.33\linewidth]{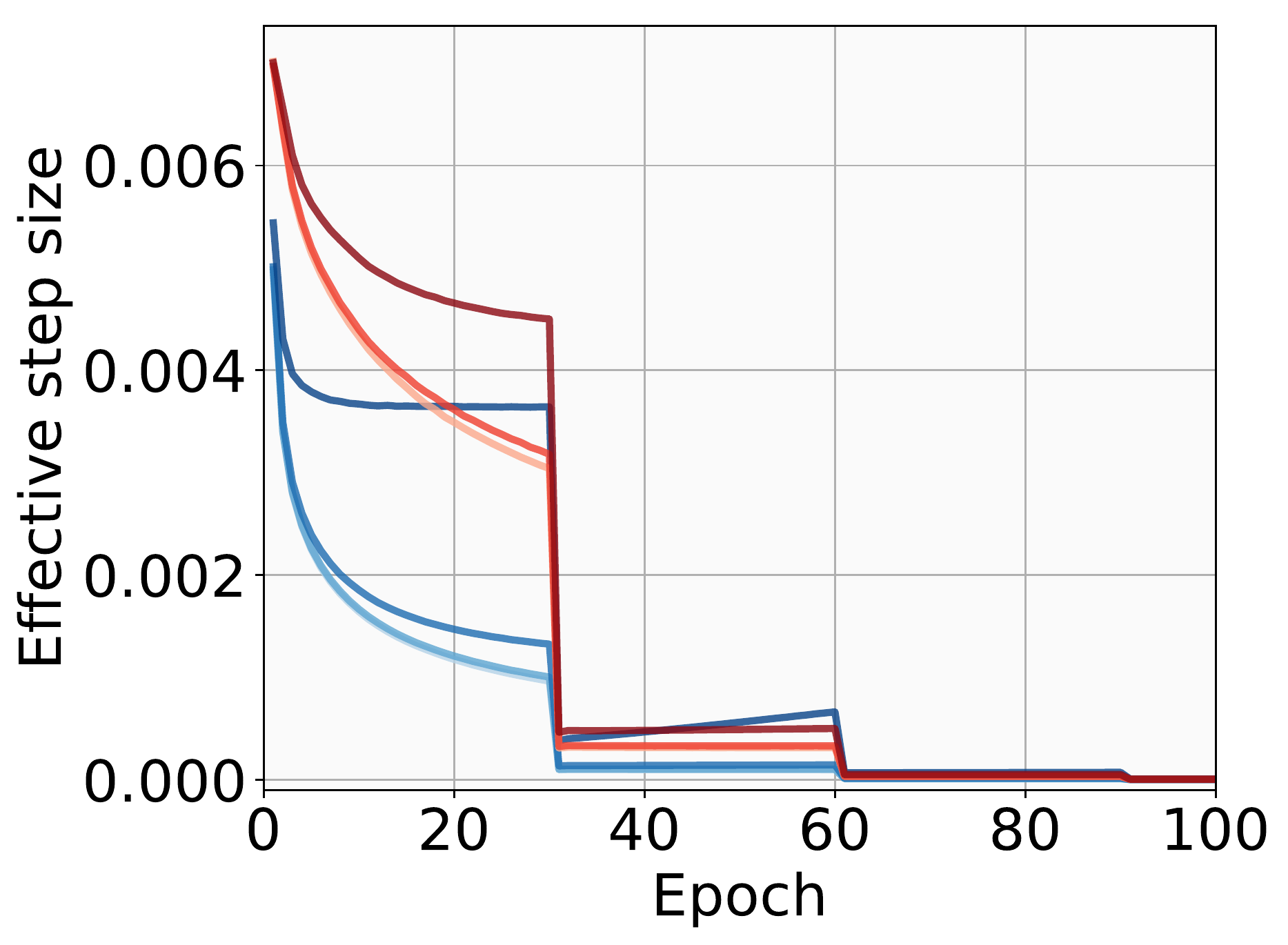}%
    \includegraphics[width=.33\linewidth]{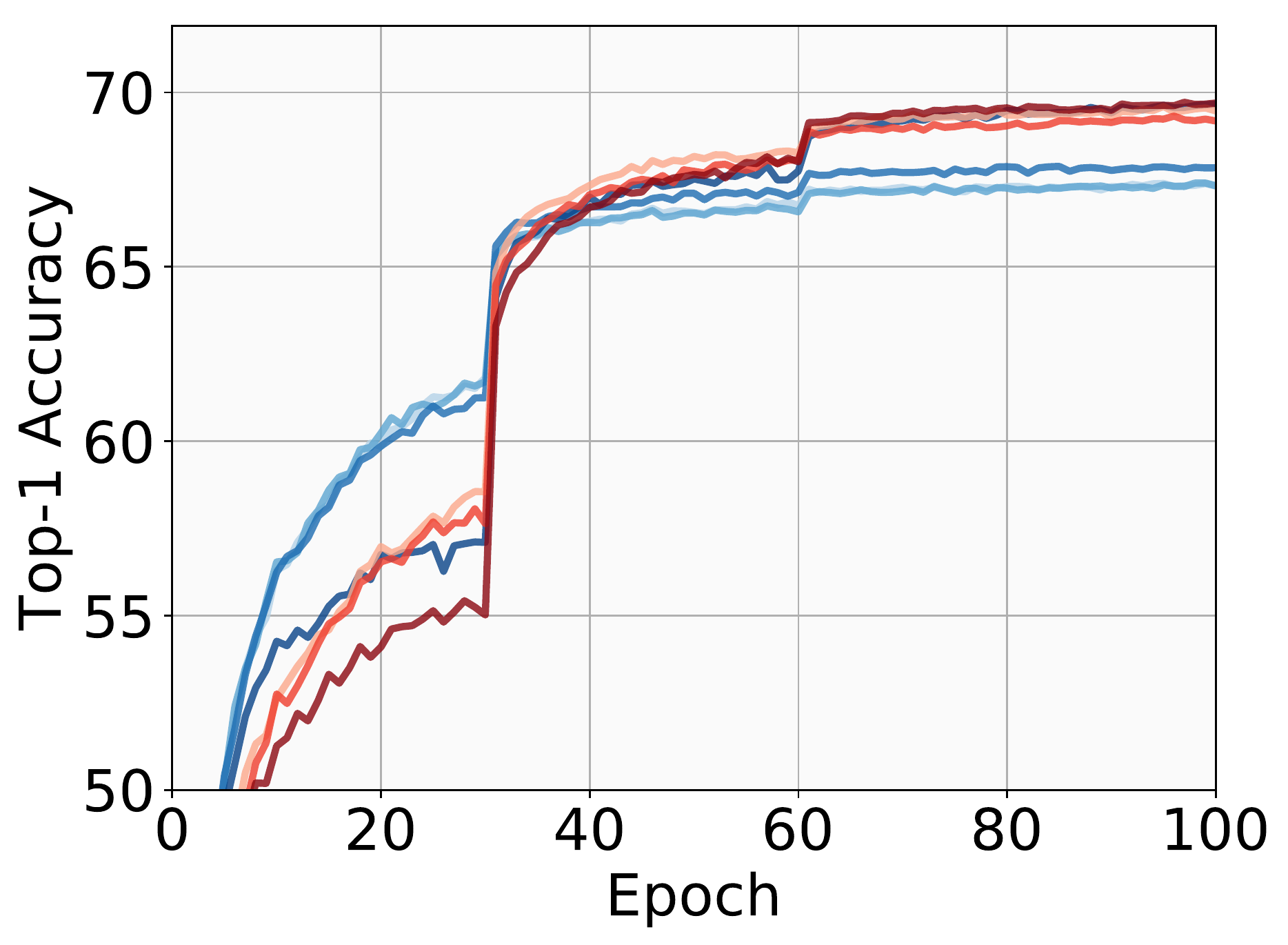}%
    \vspace{-.5em}
    \caption{\small Norm value analysis: AdamW + step learning rate decay.}
    \label{fig:appendix-Adam-norm-step}
\end{figure}

\begin{figure}[!hb]
    \centering
    \includegraphics[width=.9\linewidth]{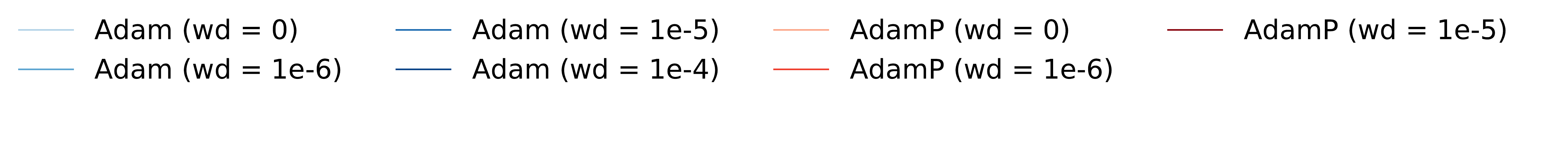}\vspace{-1.5em}
    \includegraphics[width=.33\linewidth]{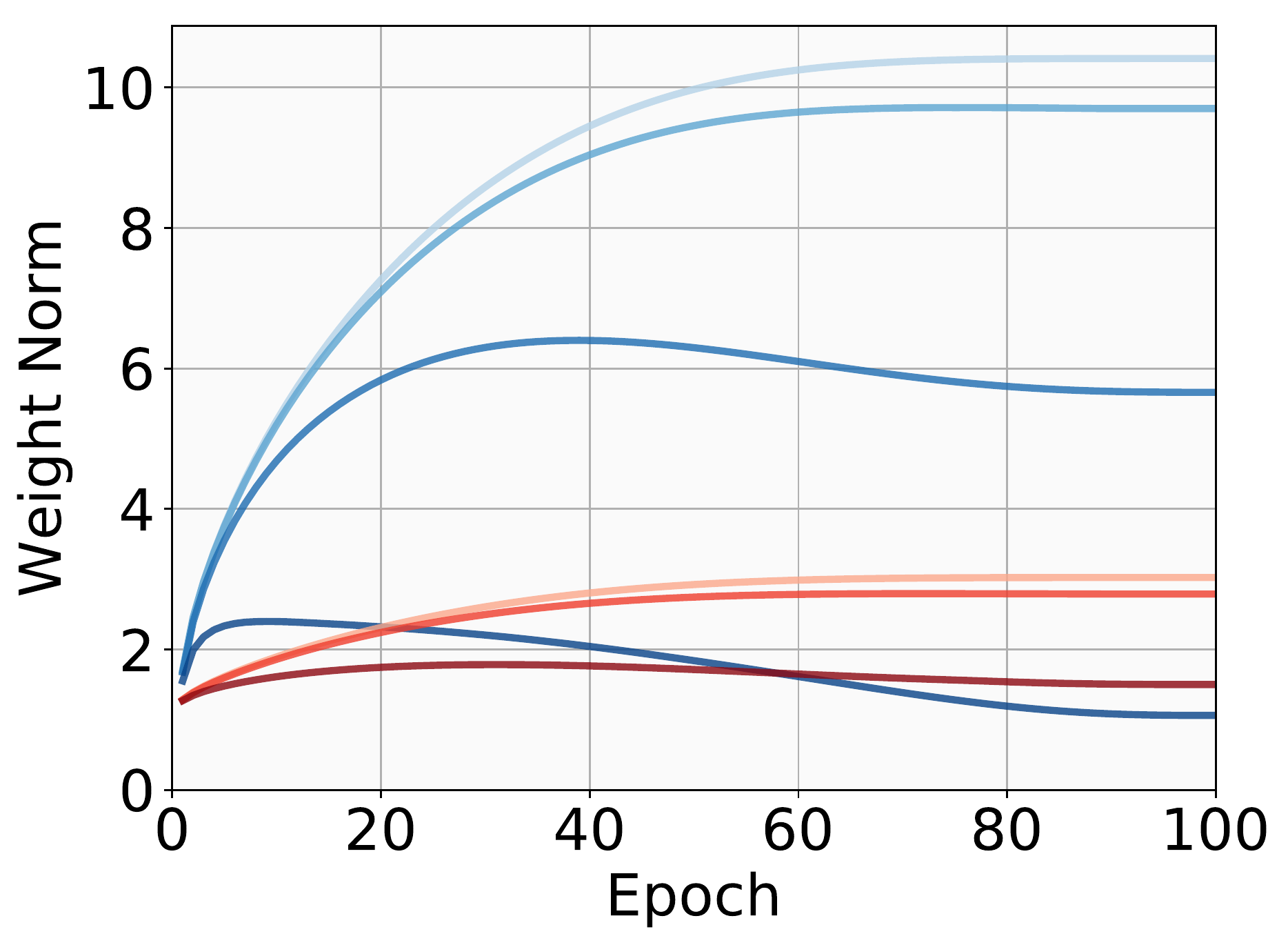}%
    \includegraphics[width=.33\linewidth]{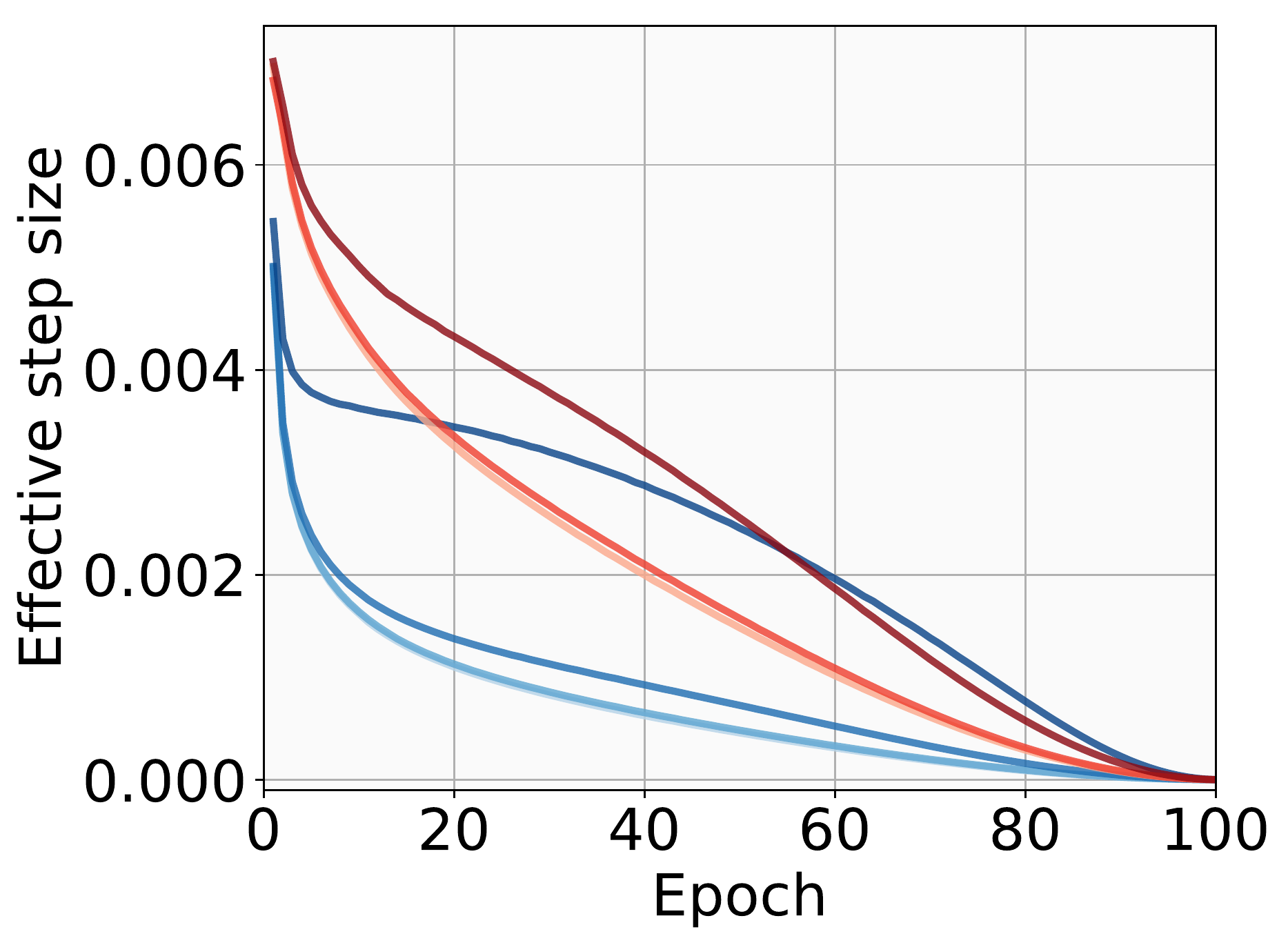}%
    \includegraphics[width=.33\linewidth]{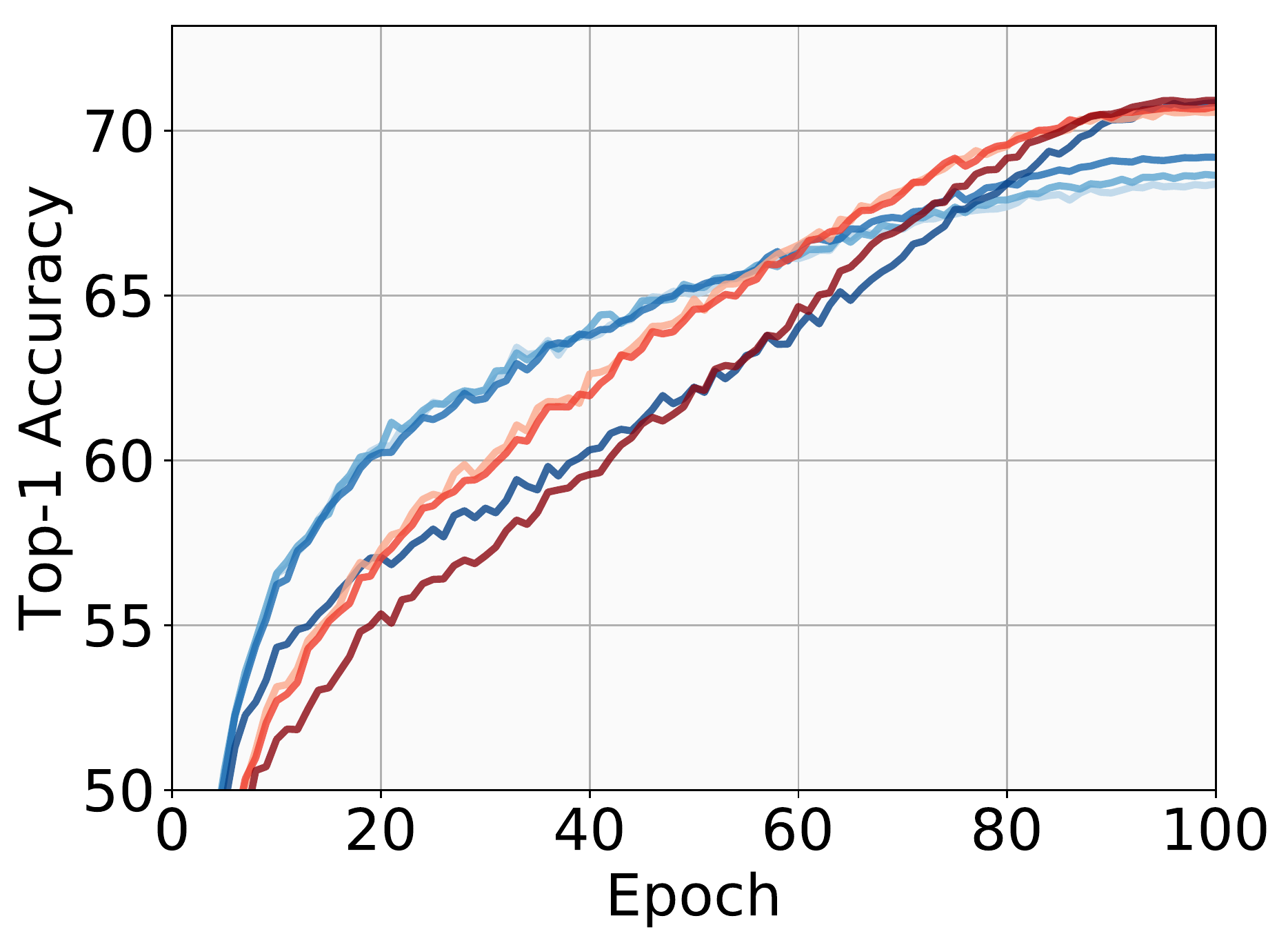}%
    \vspace{-.5em}
    \caption{\small Norm value analysis: AdamW + cosine learning rate decay. AdamW (wd=$10^{-4}$) and AdamP (wd=$10^{-6}$) are the same setting as the reported numbers in Table~\ref{table:classification}}
    \label{fig:appendix-Adam-norm-cosine}
\end{figure}

\subsection{Analysis at high weight decay}
In the previous Figures, we only showed the case where the weight decay is less than 1e-4. This is because when the weight decay is large, the scale of the graph changes and it is difficult to demonstrate the difference in small weight decay. Therefore, we separately report the large weight decay cases through Figure~\ref{fig:appendix-SGD-norm-high_wd} and \ref{fig:appendix-AdamW-norm-high_wd}. The high weight decay further reduces the weight norm and increases the effective step-size, but it does not lead to an improvement in performance. Also, this result shows that the weight decay used in Section~\ref{sec:exp-imgclassification} (1e-4) is the best value for the baseline optimizers.

\begin{figure}[!htb]
    \centering
    \includegraphics[width=.33\linewidth]{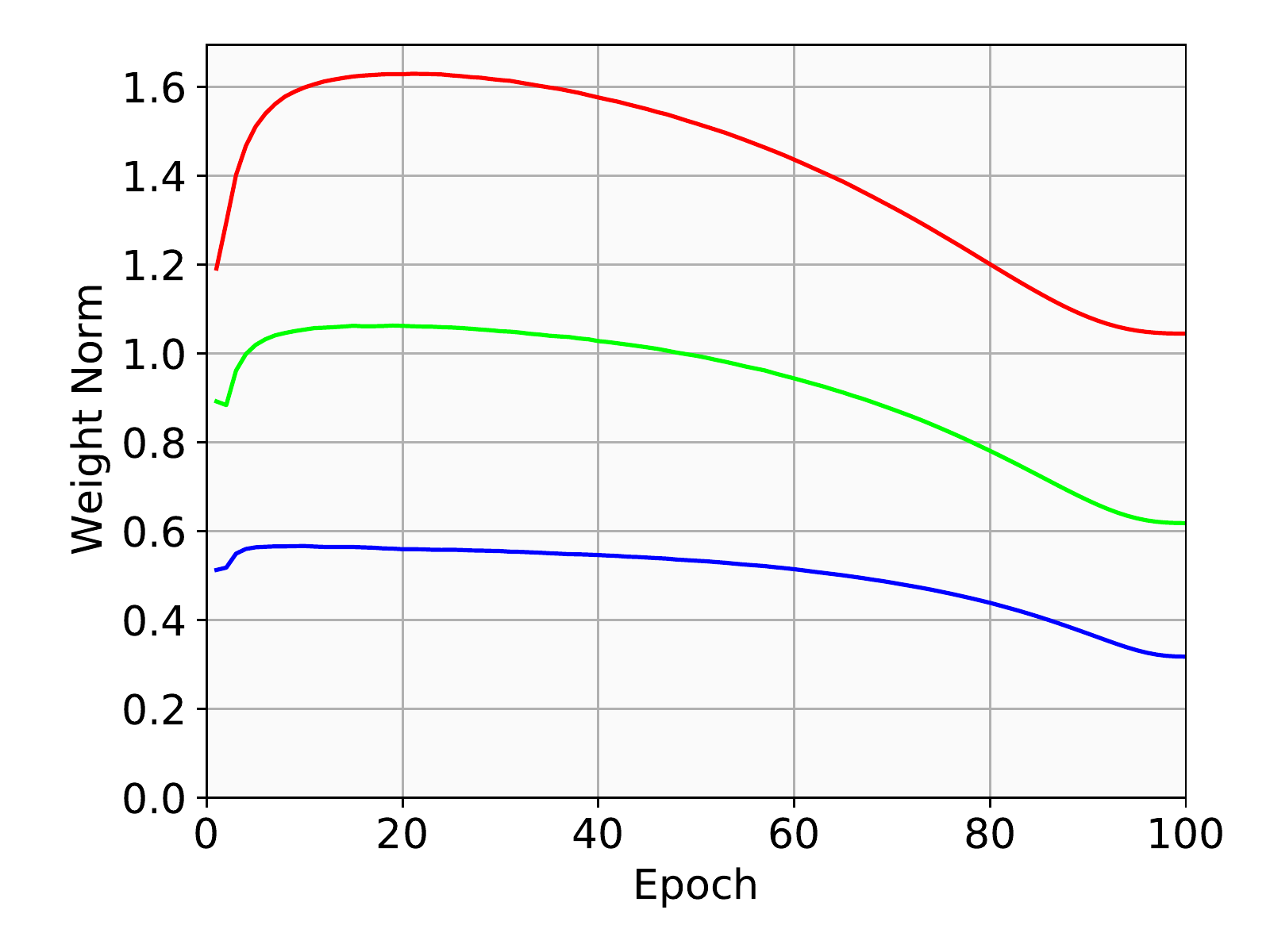}%
    \includegraphics[width=.33\linewidth]{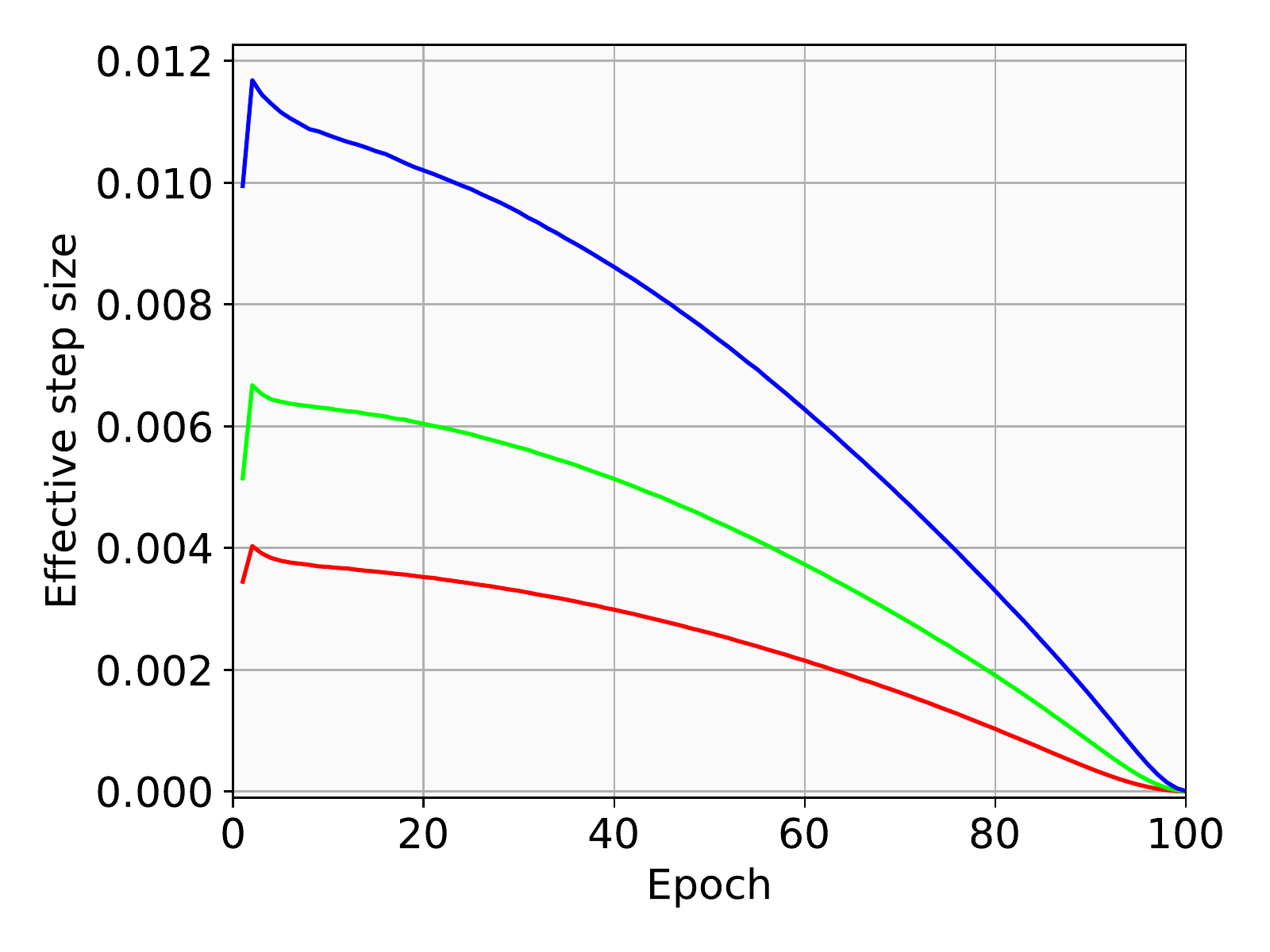}%
    \includegraphics[width=.33\linewidth]{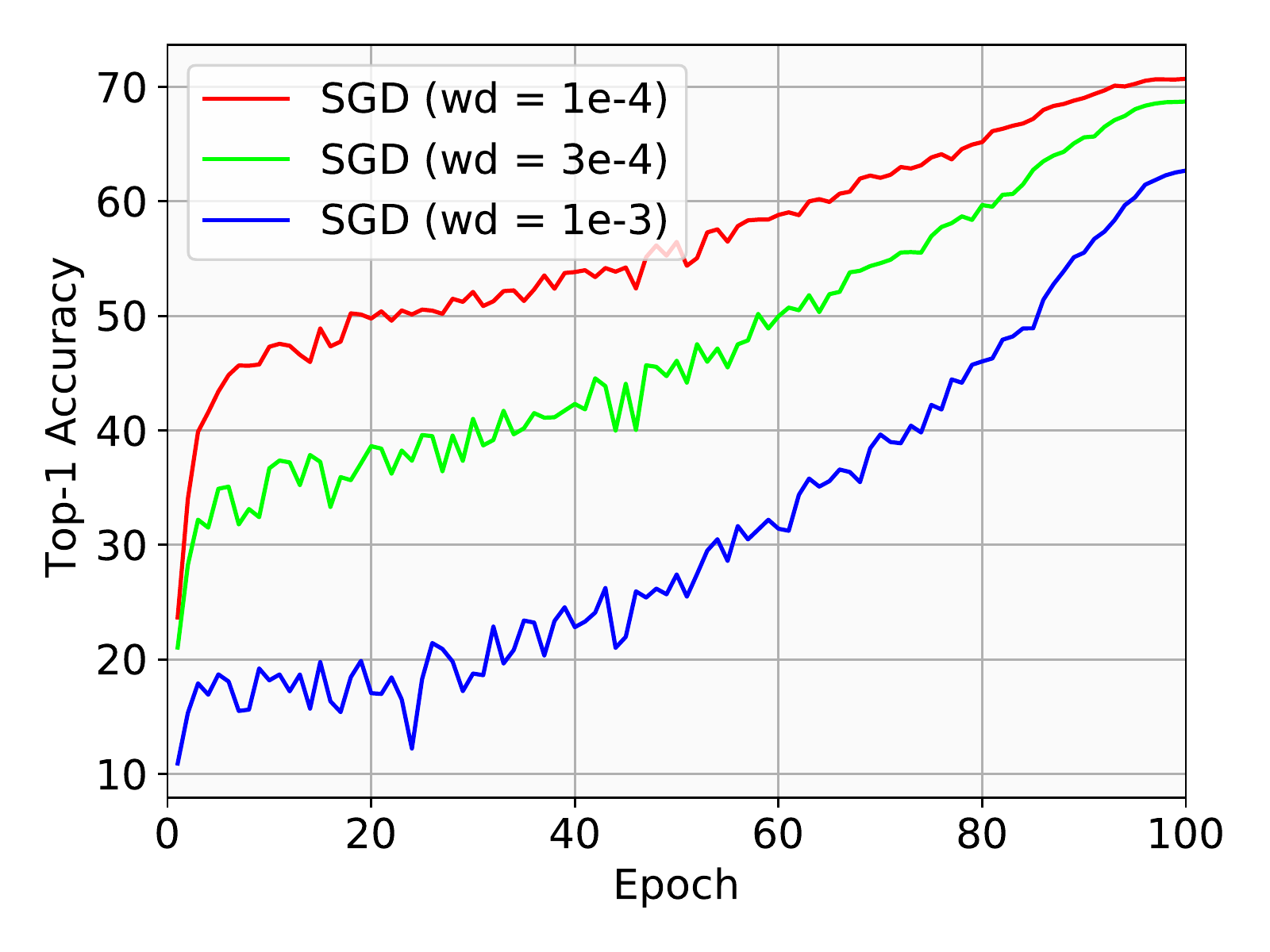}%
    \vspace{-.5em}
    \caption{\small Norm value analysis for high weight decay: SGD + cosine learning rate decay.}
    \label{fig:appendix-SGD-norm-high_wd}
\end{figure}

\begin{figure}[!htb]
    \centering
    \includegraphics[width=.33\linewidth]{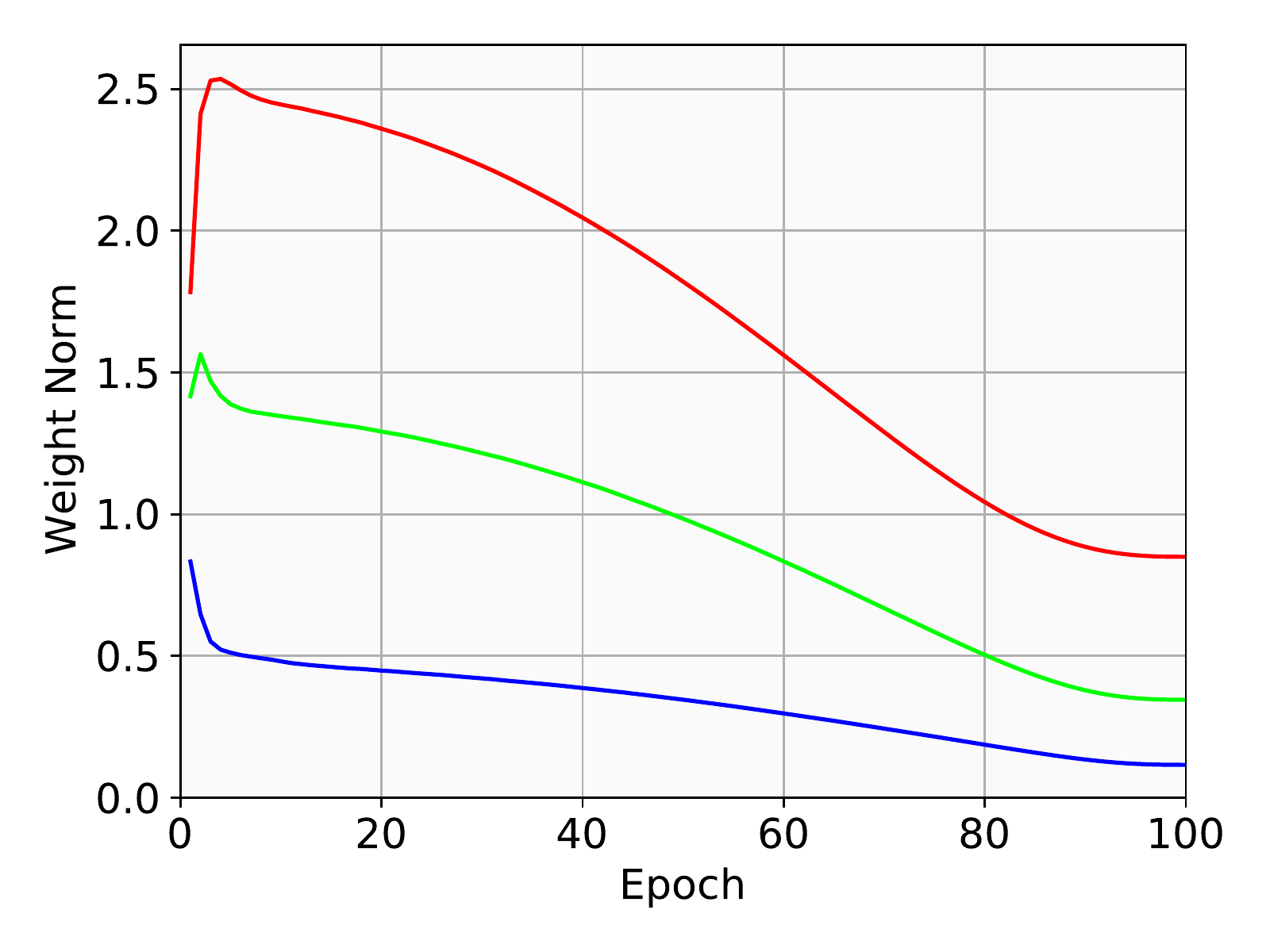}%
    \includegraphics[width=.33\linewidth]{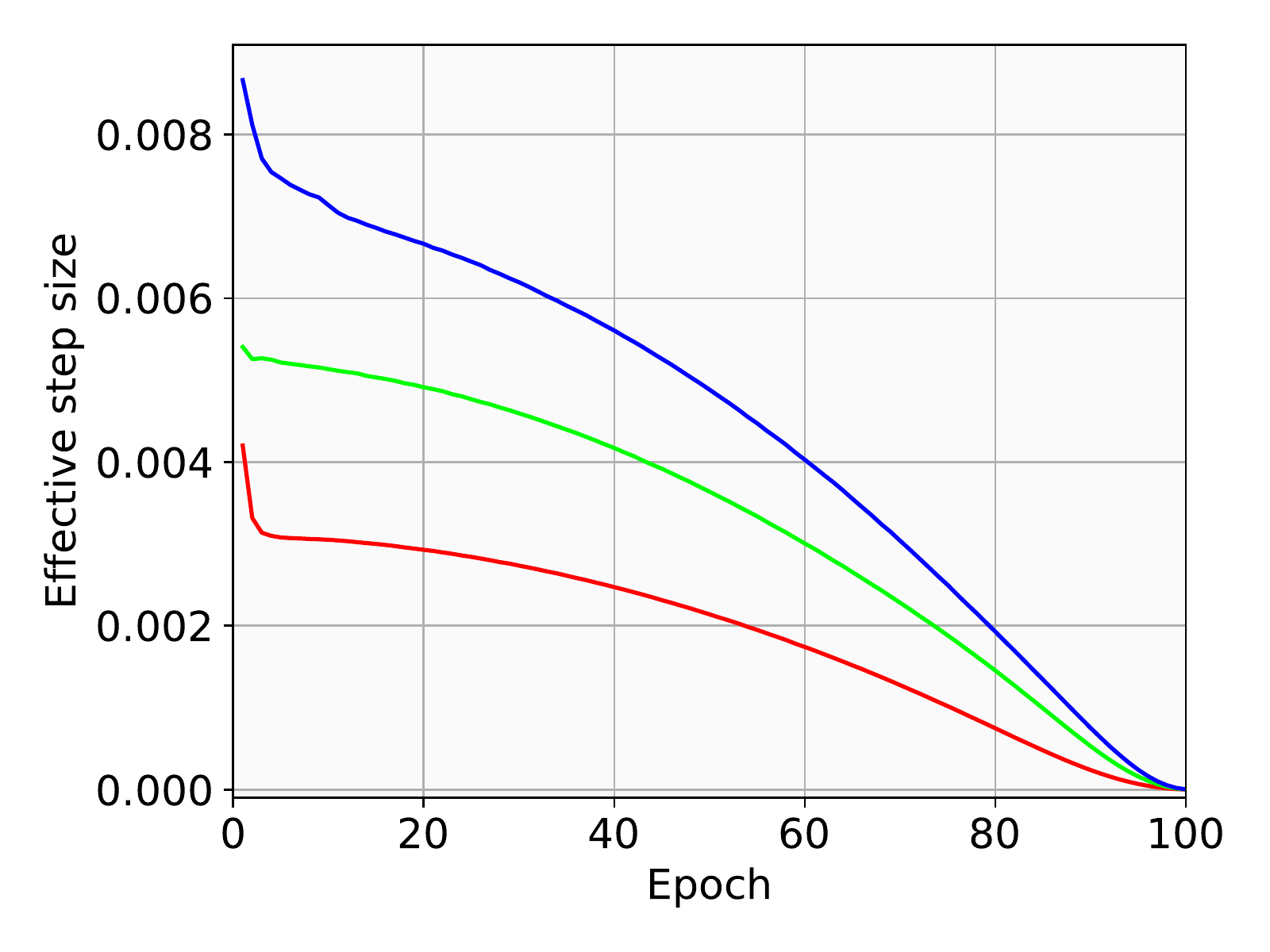}%
    \includegraphics[width=.33\linewidth]{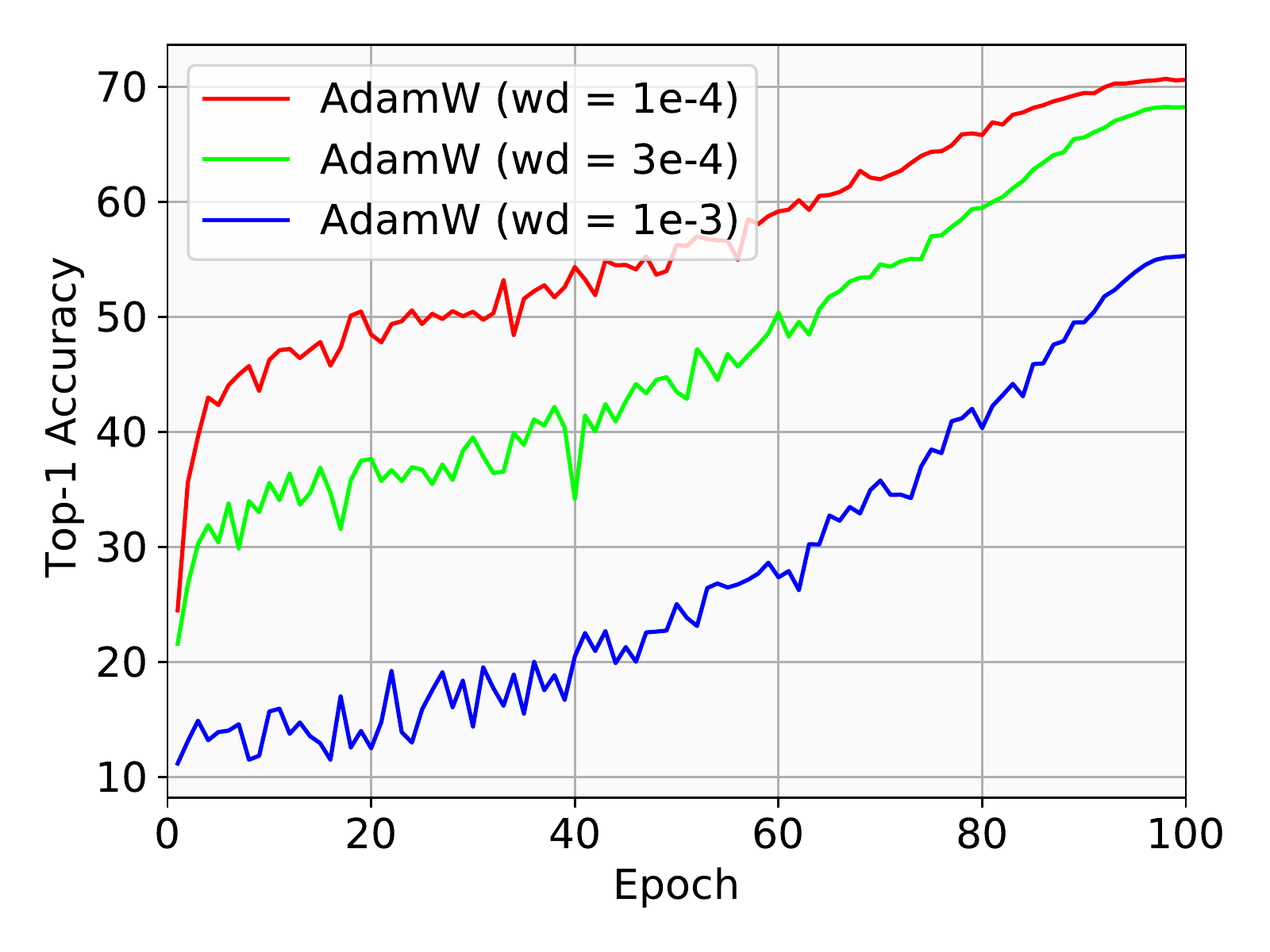}%
    \vspace{-.5em}
    \caption{\small Norm value analysis for high weight decay: AdamW + cosine learning rate decay.}
    \label{fig:appendix-AdamW-norm-high_wd}
\end{figure}

\section{Related work}
\label{sec:related}

We provide a brief overview of related prior work.
A line of work is dedicated to the development general and effective optimizers, such as Adagrad~\citep{adagrad}, Adam~\citep{adam}, and RMSprop.
Researchers have sought strategies to improve Adam through \eg improved convergence~\citep{reddi2019convergence}, warmup learning rate~\citep{radam}, moving average~\citep{lookahead}, Nesterov momentum~\citep{nadam}, rectified weight decay~\citep{adamw}, and variance of gradients~\citep{zhuang2020adabelief}. 
Another line of researches studies existing optimization algorithms in greater depth. For example, \citep{hoffer2018norm,arora2018theoretical,zhang2019three} have delved into the effective learning rates on scale-invariant weights.
This paper at the intersection between the two. We study the issues when momentum-based optimizers are applied on scale-invariant weights. We then propose a new optimization method to address the problem.

\subsection{Comparison with \citet{cho2017riemannian}}
\label{subsec:cho2017riemannian}

\citet{cho2017riemannian} have proposed optimizers that are similarly motivated from the scale invariance of certain parameters in a neural network. They have also propose a solution that reduces the radial component of the optimization steps. Despite the apparent similarities, the crucial difference between \citet{cho2017riemannian} and ours is in the space where the optimization is performed. \citet{cho2017riemannian} performs the gradient steps on the Riemannian manifold. Ours project the updates on the tangent planes of the manifold. Thus, ours operates on the same Euclidean space where SGD and Adam operate on. From a theory point of view, \citet{cho2017riemannian} has made contributions to the optimization theory on a Riemannian manifold. Our contributions are along a different axis: we focus on the norm growth when the updates are projected onto the tangent spaces (\S\ref{sec:2.4_momentum_norm_growth}, \S\ref{subsec:our-method}, and \S\ref{subsec:ours-norm-growth}). We contribute present theoretical findings that are not covered by \citet{cho2017riemannian}.

From the practicality point of view, we note that changing the very nature of space from Euclidean to Riemannian requires users to find the sensible ranges for many optimization hyperparameters again. For example, \citet{cho2017riemannian} has ``used different learning rates for the weights in Euclidean space and on Grassmann [Riemannian] manifolds'' (page 7 of \citep{cho2017riemannian}), while in our case hyperparameters are largely compatible between scale-invariant and scale-variant parameters, for they are both accommodated in the same kind of space. We have shown that SGDP and AdamP outperform the SGD and Adam baselines with exactly the same optimization hyperparameters (Section~\ref{appendixsec:classification}). The widely used Xavier~\citep{glorot2010xavier_init} or Gaussian initializations are no longer available in the spherical Riemannian manifold, necessitating changes in the code defining parameters and their initialization schemes: \eg \citet{cho2017riemannian} has employed a dedicated initialization based on truncated normals. Finally, \citet{cho2017riemannian} requires users to manually register scale-invariant hyperparameters. This procedure is not scalable, as the networks nowadays are becoming deeper and more complex, and the architectures are becoming more machine-designed than handcrafted. Our optimizers automatically detect scale invariances through the orthogonality test (Equation~\ref{eq:modified_ours}), and users do not need to register anything by themselves. 
The shift from linear to curved coordinates introduces non-trivial changes in the optimization settings; our optimizers does not introduce such a shift and it is much easier to apply our method on a new kind of model, from a user's perspective.

\begin{figure}[!tb]
    \centering
    \small
    \includegraphics[width=.33\linewidth]{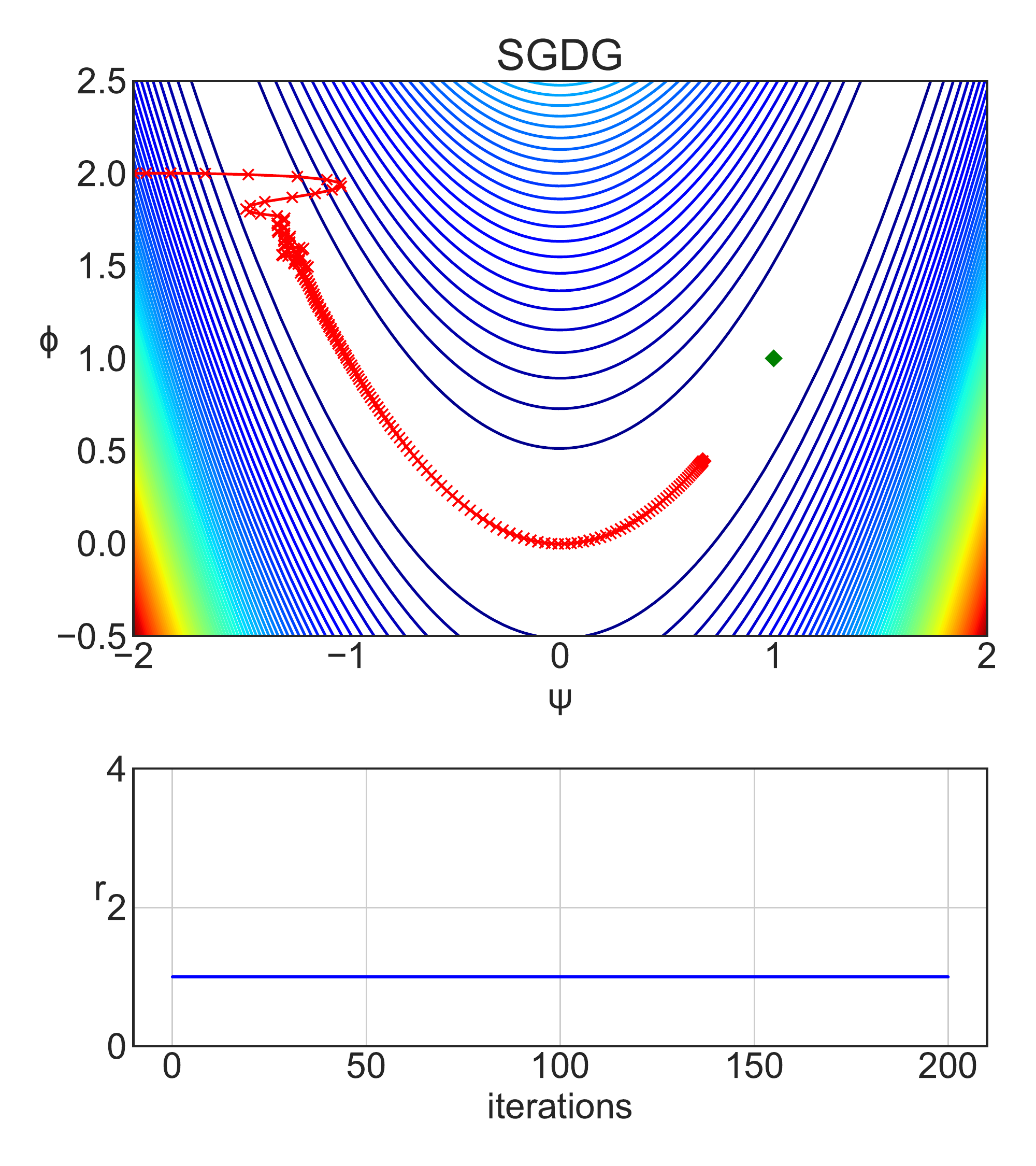}%
    \includegraphics[width=.33\linewidth]{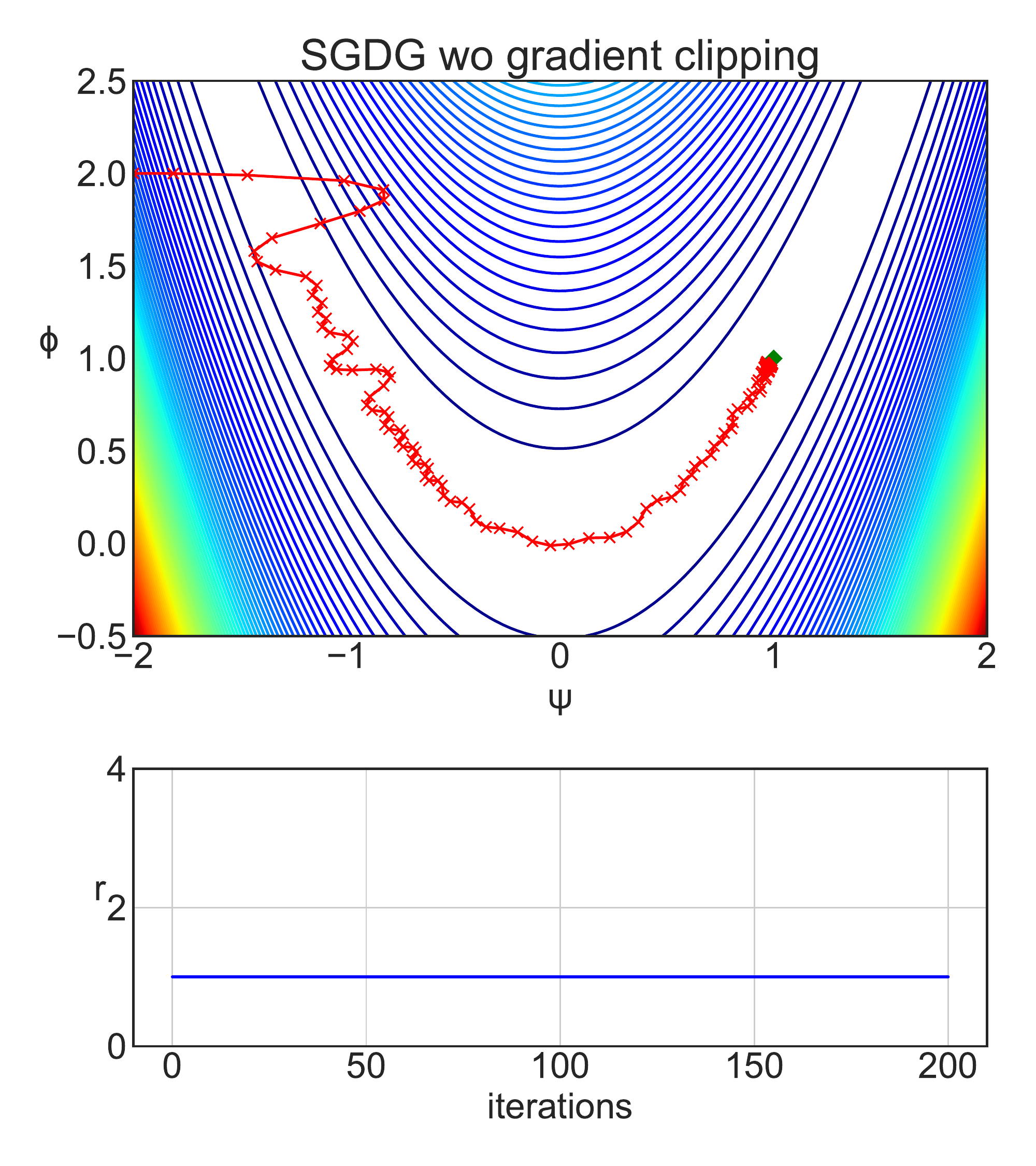}%
    \includegraphics[width=.33\linewidth]{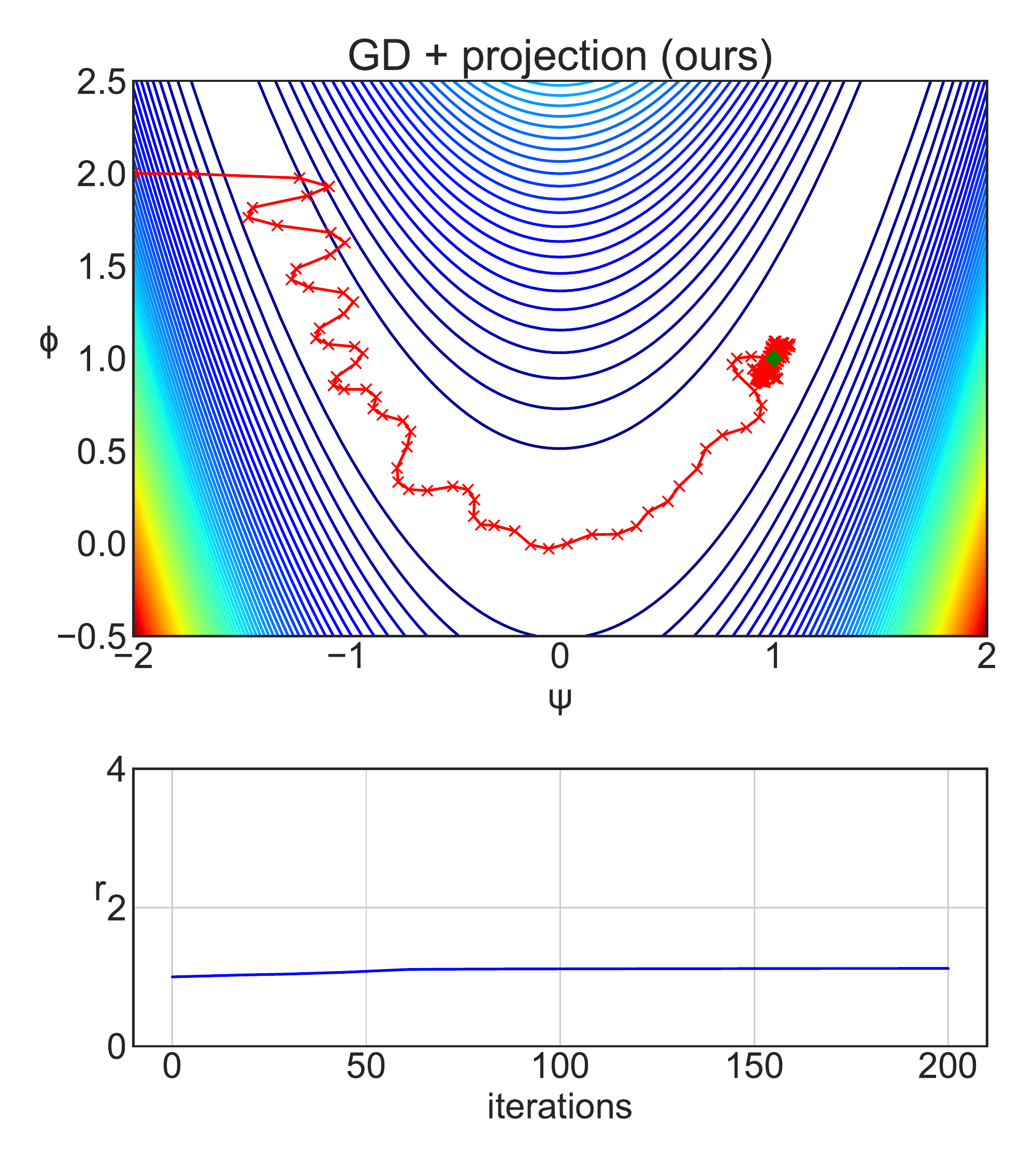}
    \vspace{-1em}
    \caption{\small \textbf{3D scale-invariant Rosenbrock with SGDG optimizers.}
    3D toy experiments based on SGDG optimizers. Upper row: loss surface and optimization steps. Lower row: norm $r$ of parameters over the iterations.}
    \vspace{-.5em}
    \label{fig:spherical-toy-sgdg}
\end{figure}

\begin{figure}[!tb]
    \centering
    \small
    \includegraphics[width=.33\linewidth]{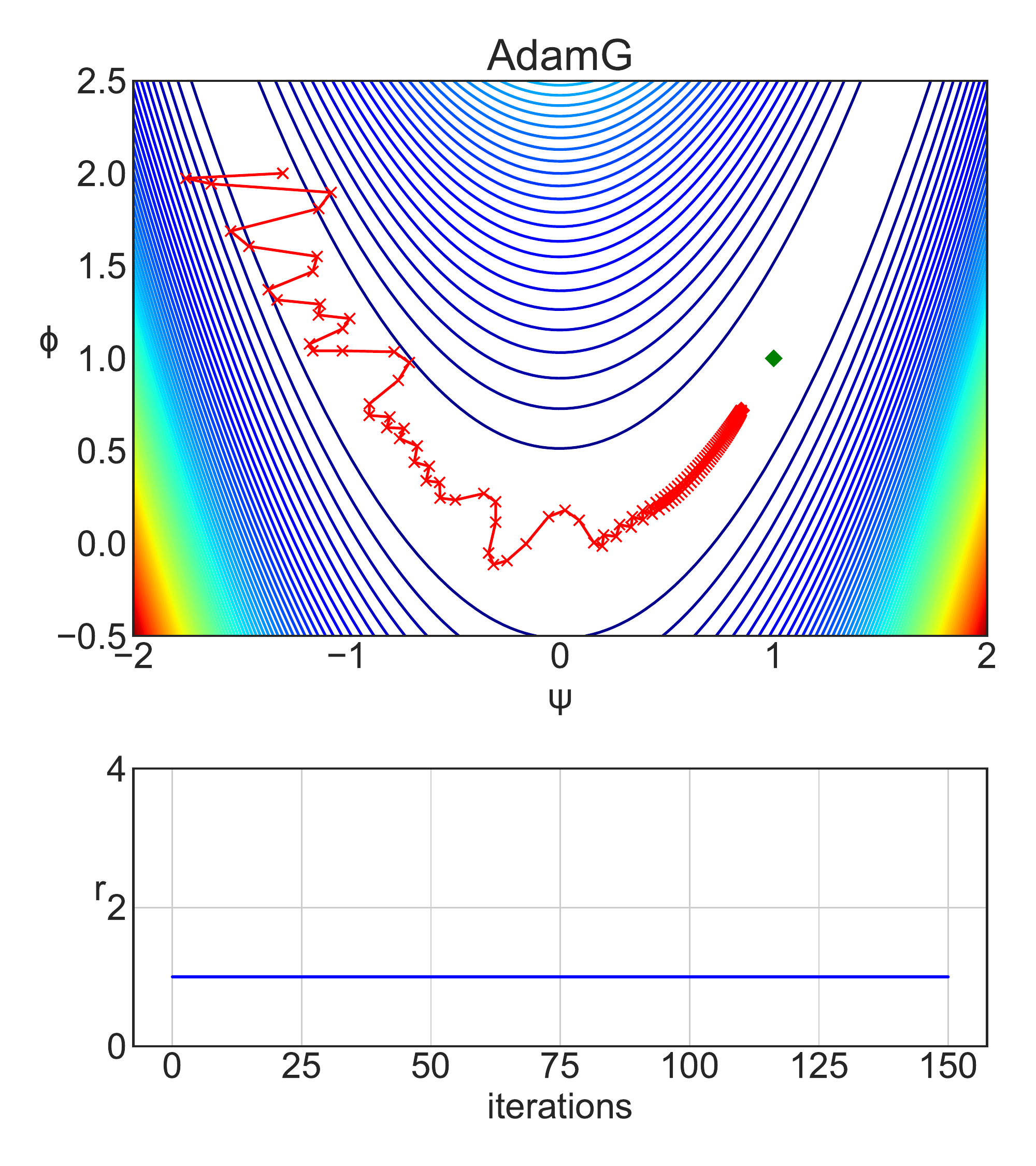}%
    \includegraphics[width=.33\linewidth]{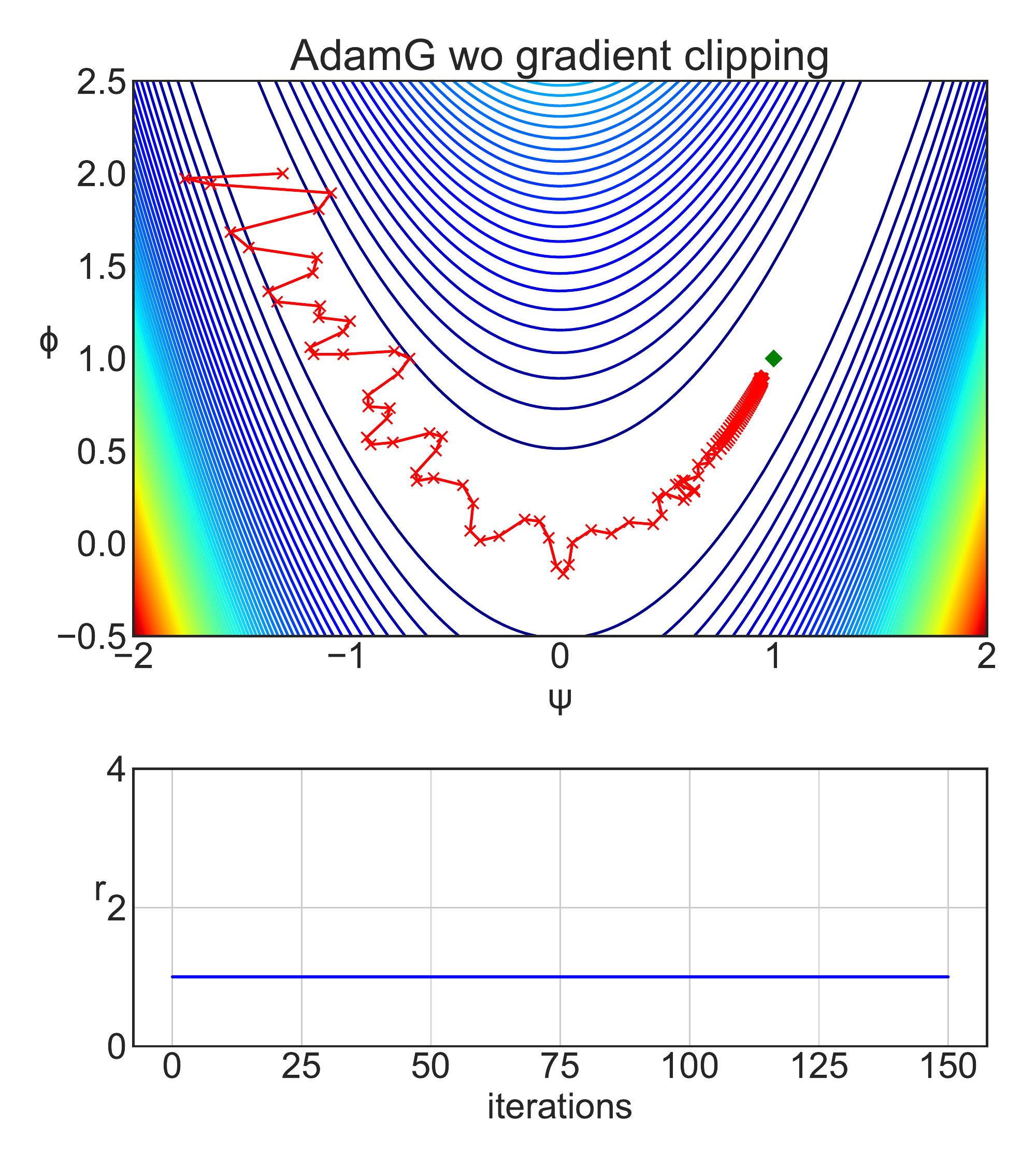}%
    \includegraphics[width=.33\linewidth]{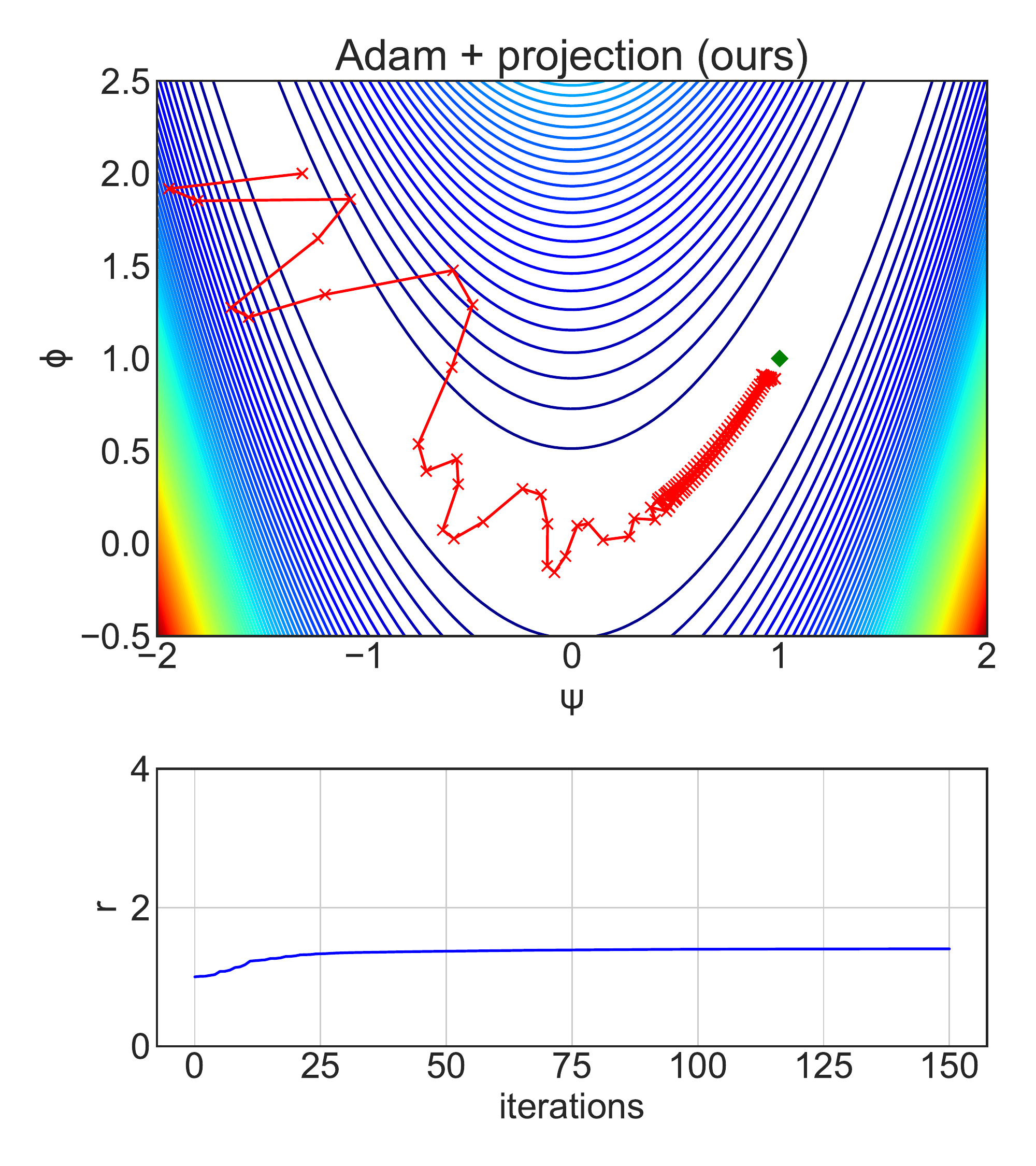}
    \vspace{-1em}
    \caption{\small \textbf{3D scale-invariant Rosenbrock with AdamG optimizers.}
    3D toy experiments based on AdamG optimizers. Upper row: loss surface and optimization steps. Lower row: norm $r$ of parameters over the iterations.}
    \vspace{-.5em}
    \label{fig:spherical-toy-adamg}
\end{figure}

In addition to the above conceptual considerations, we compare the performances between ours and the Grassmann optimizers~\citep{cho2017riemannian}. We first compare them
in the 3D scale-invariant Rosenbrock example (see \S\ref{subsec:ours-norm-growth} for a description).
Figure~\ref{fig:spherical-toy-sgdg} and \ref{fig:spherical-toy-adamg} show the results for the Grassmann optimizers: SGDG and AdamG, respectively. 
It can be seen that SGDG and AdamG optimizer do not introduce any norm increase by definition: they operate on a spherical space. However, rate of convergence seems slower than our SGDP and AdamP (the rightmost plots in each figure).
We note that SGDG and AdamG include a gradient clipping operation
to restrict the magnitude of projected gradients on the Grassmann manifold to $\nu$, where $\nu$ is empirically set to $0.1$ in \citet{cho2017riemannian}.
We have identified an adverse effect of the gradient clipping, at least on our toy example.
As the clipping is removed, SGDG and AdamG come closer to the fast convergence of ours (the middle plots in each figure). We may conclude for the toy example that the Grassmann optimizers also address the unnecessary norm increase and converge as quickly as our optimizers do.

\begin{table}[!tb]
    \caption{\small \textbf{Hyperparameter search of the Grassmann optimizers.} This talbe shows the performance of ResNet18 in ImageNet trained with Grassmann optimizers (SGDG and AdamG) with various hyperparameters. Except for the hyperparameters specified in the table, the setting is the same as the experiment in Table~\ref{table:classification}.}
    \begin{subtable}{.5\linewidth}
      \centering
        \caption{SGDG (SGDP accuracy : \textbf{70.70})}
\small
\begin{tabular}{@{}cccccc@{}}
\toprule
{\scriptsize \begin{tabular}[c]{@{}c@{}}Search\\target\end{tabular}} & $\alpha$ & $\nu$ & \begin{tabular}[c]{@{}c@{}}{\scriptsize lr}\\{\tiny Euclidean}\end{tabular} & \begin{tabular}[c]{@{}c@{}}{\scriptsize lr}\\{\tiny Grassmann}\end{tabular} & {\scriptsize Accuracy} \\ \midrule
\multirow{3}{*}{$\alpha$,$\nu$} & 0.1 & 0.1 &0.01  & 0.2 & 66.00    \\
& 0 & 0.1 & 0.01  & 0.2 &  \textbf{67.85}    \\ 
& 0.1 & None & 0.01  & 0.2 &  66.35    \\ \midrule
\multirow{4}{*}{lr}& 0 & 0.1 &0.01  & 0.2 & 67.85    \\
& 0 & 0.1 &0.1  & 2 & 63.28    \\
& 0 & 0.1 &0.005  & 0.1 & 68.73  \\ 
& 0 & 0.1 & 0.1  & 0.1 &  \textbf{69.74}    \\ \midrule
\multirow{4}{*}{\begin{tabular}[c]{@{}c@{}}lr\\tuning\end{tabular}}& 0 & 0.1 & 0.1  & 0.1 &  69.74    \\
& 0 & 0.1 & 0.3  & 0.3 &  68.33    \\
& 0 & 0.1 & 0.03  & 0.03 &  \textbf{70.39}    \\
& 0 & 0.1 & 0.01  & 0.01 &  69.21  \\
\bottomrule
\end{tabular}
    \end{subtable}%
    \begin{subtable}{.5\linewidth}
      \centering
        \caption{AdamG (AdamP accuracy : \textbf{70.82})}
\small
\begin{tabular}{@{}cccccc@{}}
\toprule
{\scriptsize \begin{tabular}[c]{@{}c@{}}Search\\target\end{tabular}} & $\alpha$ & $\nu$ &\begin{tabular}[c]{@{}c@{}}{\scriptsize lr}\\{\tiny Euclidean}\end{tabular} &  \begin{tabular}[c]{@{}c@{}}{\scriptsize lr}\\{\tiny Grassmann}\end{tabular} & {\scriptsize Accuracy} \\ \midrule
\multirow{3}{*}{$\alpha$,$\nu$}  & 0.1 & 0.1 &0.01  & 0.05 & 67.40  \\
 & 0 & 0.1 &0.01  & 0.05 & \textbf{68.43}  \\
 & 0.1 & None &0.01  & 0.05 & 67.36  \\ \midrule
\multirow{4}{*}{lr}  & 0 & 0.1 &0.01  & 0.05 & \textbf{68.43}  \\
 & 0 & 0.1 &0.001  & 0.005 & 66.97    \\ 
 & 0 & 0.1 &0.0002  & 0.001 & 55.9 \\ 
& 0 & 0.1 & 0.001  & 0.001 & 59.36     \\ \midrule
\multirow{4}{*}{\begin{tabular}[c]{@{}c@{}}lr\\tuning\end{tabular}}  & 0 & 0.1 &0.01  & 0.05 & 68.43  \\
  & 0 & 0.1 &0.03  & 0.15 &  63.96 \\
  & 0 & 0.1 &0.003  & 0.015 & \textbf{69.45}  \\
  & 0 & 0.1 &0.1  & 0.5 & 59.39  \\
\bottomrule
\end{tabular}
    \end{subtable} 
    \label{table:grassmann_classification}
\end{table}

For a more practical setup, we present experiments on ImageNet with the Grassmann optimizers (SGDG and AdamG) and report the top-1 accuracies. We have conducted the experiments with ResNet18 following the settings in Table~\ref{table:classification}. In addition to the learning rate of optimizers ($\text{lr}_\text{Euclidean}$), \citet{cho2017riemannian} introduces three more hyperparameters: (1) learning rate on the Grassmann manifold ($\text{lr}_\text{Grassmann}$), (2) degree of the regularization on the Grassmann manifold ($\alpha$), which replaces the $L_2$ regularization in Euclidean space, and (3) the gradient clipping threshold ($\nu$). Since \citet{cho2017riemannian} have not reported results on ImageNet training, we have tuned the hyperparameters ourselves, following the guidelines of \cite{cho2017riemannian}.
Table~\ref{table:grassmann_classification} shows the exploration of the hyperparameters that are considered above.
The first rows show the performance with the recommended hyperparameters in the paper ($\alpha=\nu=0.1$ and $\text{lr}_\text{Euclidean}=0.01$ for SGDG and AdamG; $\text{lr}_\text{Grassmann}=0.2$ for SGDG and $\text{lr}_\text{Grassmann}=0.05$ for AdamG). 
We first tested the effects of regularization ($\alpha$) and gradient clipping ($\nu$), which were additionally introduced in the Grassmann optimizers. The first block of the Table~\ref{table:grassmann_classification} shows the result. The gradient clipping ($\nu$) did not have a significant effect, but the regularization ($\alpha$) decrease the performance. Therefore, we turned off regularization ($\alpha=0$) in the optimizers and started learning rate search. In Table~\ref{table:classification}, the baseline optimizers and our optimizers are compared in the fixed learning rate (SGD: 0.1, Adam: 0.001) setting. So, for fair comparison, we design four learning rate (lr) candidates for grassmann optimizers: 1) Use the learning rates of the paper~\citep{cho2017riemannian}. 2) \& 3) Use the fixed learning rates for $\text{lr}_\text{euclidean}$ or $\text{lr}_\text{grassmann}$, and adjust the other following the learning rate ratio of the paper. 4) Use baseline learning rates for both learning rates ($\text{lr}_\text{euclidean}$ and $\text{lr}_\text{grassmann}$). The result is reported in the second block of Table~\ref{table:grassmann_classification}. SGDG shows the best performance at option 4) and AdamG is the best in option 1). However, the performances of Grassmann optimizers are still much lower than our optimizers: SGDP (70.70) and AdamP (70.82). We further tuned the learning rate of the Grassmann optimizer, which is shown in the last block of the Table~\ref{table:grassmann_classification}. After learning rate tuning, Grassmann optimizers show comparable performance with the baseline optimizer (SGD: 70.47, Adam: 68.05, AdamW: 70.39). However, learning rate tuning is essential for the performance, and our optimizer has a higher performance. So, it can be said that our optimizer is more practical and effective for ImageNet training than the Grassmann optimizer.

\section{Additional experiments}

\subsection{3D toy experiments for Adam}
\label{subsec:3d-for-adam}

We also evaluated our 3D toy experiments for Adam optimizer. The results are shown in Fig.~\ref{fig:spherical-toy-adam} Adam optimizer shows quiet different steps in early stage. However, the fact that norm growth reduces the rate of late convergence is the same as SGD. The weight decay and our projection mitigate the norm growth and helps fast convergence.

\begin{figure}[!htb]
    \centering
    \small
    \includegraphics[width=.33\linewidth]{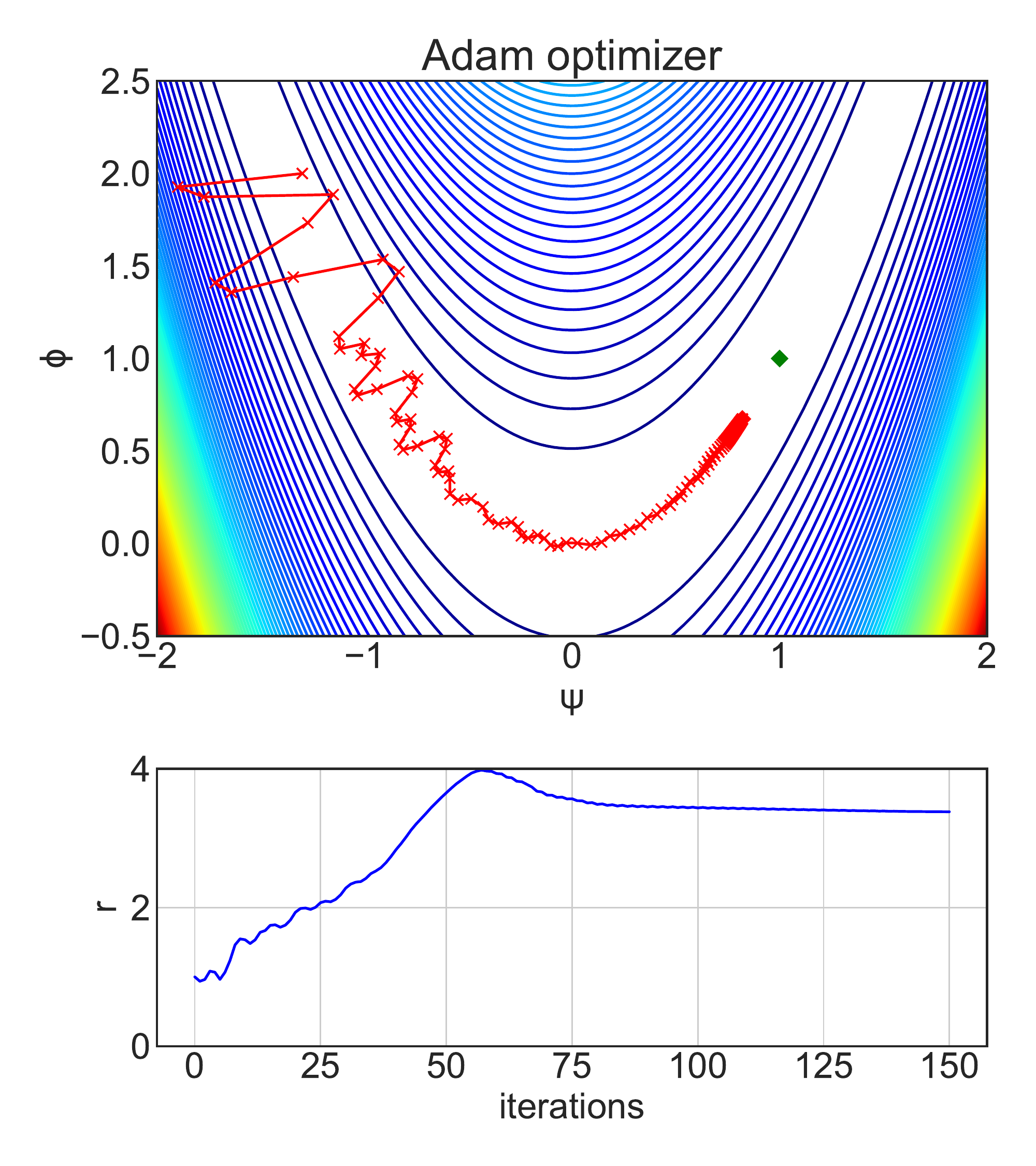}%
    \includegraphics[width=.33\linewidth]{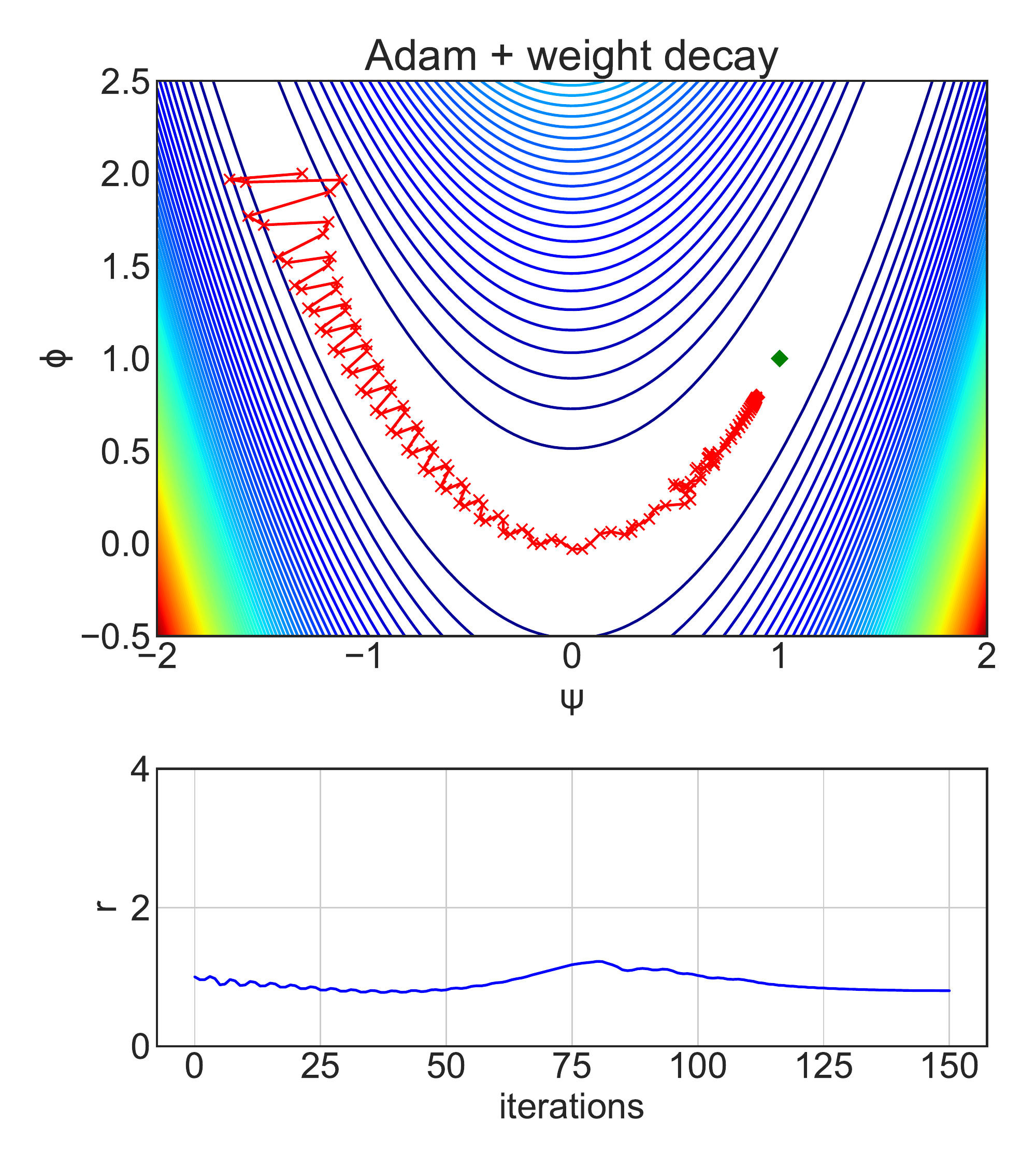}%
    \includegraphics[width=.33\linewidth]{figures/spherical_toy/rosenbrock_3D_lr_decay_AdamP.pdf}
    \vspace{-1em}
    \caption{\small \textbf{3D scale-invariant Rosenbrock with Adam optimizers.}
    3D toy experiments based on Adam optimizers. Upper row: loss surface and optimization steps. Lower row: norm $r$ of parameters over the iterations.}
    \vspace{-.5em}
    \label{fig:spherical-toy-adam}
\end{figure}

\subsection{Standard deviation of experiments}
\label{subsec:std-dev}

The most experiments in the paper were performed three times with different seed. The mean value was reported in the main paper and the standard deviation value is shown in the Table~\ref{table:classification-std}, \ref{table:audio-std}, \ref{table:language-std}, \ref{table:retrieval-std}, \ref{tab:rebias-std} accordingly.
In case of ImageNet classification, mean values are shown in Table~\ref{table:classification} and standard deviation values are shown in Table~\ref{table:classification-std}. In most cases, the improvement of our optimizer is significantly larger than the standard deviation. Even in the worse case, (SGDP on ResNet50), the performance increases are at the level of the standard deviations.
For the audio classification results (Table~\ref{table:audio} and \ref{table:audio-std}), the performance increases by AdamP in Music Tagging (ROC-AUC) and Keyword Spotting are much greater than the standard deviation values; for the other entries, the performance increases are at the level of the standard deviations.
In the language model, all experiments have similar standard deviation values as shown in Table~\ref{table:language-std}. In all cases, AdamP's improvement (Table~\ref{table:language}) is significantly larger than the standard deviation.
We observe a different tendency for the image retrieval tasks (Table~\ref{table:retrieval} and \ref{table:retrieval-std}). In many cases, the standard deviation values are large, so the performance boost is not clearly attributable to AdamP. However, the performance increases for InShop-PA and SOP-PA are still greater than the standard deviation values.
Finally, in the results for robustness against real-world biases (Table~\ref{tab:rebias} and \ref{tab:rebias-std}), our optimization algorithm brings about far greater performance gains than the randomness among trials.
In most experiments, the performance improvement of our optimizers is much greater than the standard deviation values.

\begin{table}[!htb]
\centering
\small
\vspace{-.5em}
\caption{\small \textbf{Standard deviation for ImageNet classification.} Standard deviation of accuracy of state-of-the-art networks trained with \sgdn and \adamn (Table~\ref{table:classification}).}
\label{table:classification-std}
\vspace{-1em}
\begin{tabular}{@{}llccccc@{}}
\toprule
Architecture                  & \# params & SGD & \sgdn (ours) & Adam & AdamW & \adamn (ours)\\ \midrule
MobileNetV2 &  3.5M     & ± 0.14 & ± 0.11  & ± 0.04  & ± 0.16 & ± 0.16 \\ 
ResNet18 &  11.7M    & ± 0.16 & ± 0.06  & ± 0.06  & ± 0.07 & ± 0.06 \\ 
ResNet50 &  25.6M    & ± 0.06 & ± 0.08  & ± 0.12  & ± 0.06 & ± 0.02 \\
ResNet50 + CutMix &  25.6M    & ± 0.11 & ± 0.05  & ± 0.08  & ± 0.09 & ± 0.09 \\ \bottomrule
\end{tabular}
\vspace{-.5em}
\end{table}

\begin{table}[!htb]
\centering
\small
\vspace{0em}
\caption{\small \textbf{Standard deviation for audio classification.} Standard deviation for results on the audio tasks with Harmonic CNN~\citep{won2020harmonic} (Table~\ref{table:audio}).}
\label{table:audio-std}
\vspace{-1em}
\begin{tabular}{@{}lcccc@{}}
\toprule
\multirow{2}{*}{Optimizer} & \multicolumn{2}{c}{Music Tagging}    & Keyword Spotting & Sound Event Tagging \\ \cmidrule(l){2-5} 
& ROC-AUC       & PR-AUC                & Accuracy         & F1 score \\ \midrule
Adam + SGD~\citep{won2019attention} & ± 0.06 & ± 0.03 & - & - \\
AdamW & ± 0.08 & ± 0.19 & ± 0.06 & ± 0.39 \\
\adamn (ours) & ± 0.07 & ± 0.27 & ± 0.06 & ± 0.83 \\ \bottomrule
\end{tabular}
\vspace{-.5em}
\end{table}

\begin{table}[!htb]
\centering
\small
\caption{\small\textbf{Standard deviation for Language Modeling.} Standard deviation of the perplexity values on WikiText103 (Table~\ref{table:language}).}
\label{table:language-std}
\vspace{-1em}
\begin{tabular}{@{}lcc@{}}
\toprule
Model        & AdamW   & \adamn (ours)  \\ \midrule
Transformer-XL & ± 0.05  & ± 0.05 \\
Transformer-XL + WN  & ± 0.06  & ± 0.06 \\
\bottomrule
\end{tabular}
\vspace{-1em}
\end{table}

\begin{table}[!htb]
\setlength{\tabcolsep}{4pt}
\centering
\small
\vspace{0em}
\caption{\small \textbf{Standard deviation for image retrieval.} Standard deviation of Recall@1 on the retrieval tasks (Table~\ref{table:retrieval}).}
\label{table:retrieval-std}
\vspace{-1em}
\begin{tabular}{@{}lcccccccc@{}}
\toprule
\multicolumn{1}{c}{\multirow{2}{*}{Optimizer}}   & \multicolumn{2}{c}{CUB} & \multicolumn{2}{c}{Cars-196} & \multicolumn{2}{c}{InShop} & \multicolumn{2}{c}{SOP} \\ \cmidrule(l){2-9} 
 & Triplet  & PA  & Triplet   & PA  & Triplet    & PA   & Triplet  & PA  \\ \midrule
AdamW     & ± 0.91    & ± 0.24        & ± 0.82     & ± 0.09        & ± 0.52       & ± 0.08          & ± 0.71	 & ± 0.04         \\
\adamn (ours)     & ± 0.62   & ± 0.77  & ± 1.35 & ± 0.19  & ± 0.82    & ± 0.02          & ± 0.74     & ± 0.09         \\ \bottomrule
\end{tabular}
\vspace{-.5em}
\end{table}

\begin{table}[!htb]
\setlength{\tabcolsep}{3.5pt}
\centering
\caption{\small \textbf{Standard deviation for robustness against real-world biases.} Standard deviation for ReBias~\citep{bahng2019rebias} performances on Biased MNIST and 9-Class ImageNet benchmarks (Table~\ref{tab:rebias}).}
\label{tab:rebias-std}
\vspace{0em}
\small
\begin{tabular}{@{}lccccccccc@{}}
\toprule
\multirow{2}{*}{Optimizer} & \multicolumn{5}{c}{Biased MNIST Unbiased acc. at $\rho$} & & \multicolumn{3}{c}{9-Class ImageNet} \\ \cmidrule(l){2-6} \cmidrule(l){8-10}
      & .999 & .997 & .995 & .990 & avg. & & Biased     & UnBiased     & ImageNet-A     \\ \midrule
Adam  & ± 6.04           & ± 0.48           & ± 0.70           & ± 0.48       & -          & & ± 0.27      & ± 0.40        & ±1.06    \\
\adamn (ours) & ± 1.73           & ± 1.31           & ± 2.16           & ± 0.40  & -    &      & ± 0.21      & ± 0.27        & ± 0.82    \\ \bottomrule
\end{tabular}
\end{table}

\subsection{Analysis with momentum coefficient}
\label{appendixsec:momentum_coefficient}
Since our optimizer is deeply involved with the momentum of optimizers, we measure and analyze the effect of our optimizer on several momentum coefficients. The experiment was conducted using ResNet18 in ImageNet with the setting of Section~\ref{sec:exp-imgclassification}, and weight decay was not used to exclude norm decrease due to weight decay. 
The results are shown in Table~\ref{tab:momentum-coefficient}.
According to the difference between equation~\ref{eq:norm_growth_momentum} and equation~\ref{eq:norm_growth_ours}, the effect of preventing norm growth of our optimizer is affected by the momentum coefficient.
It can be observed in this experiment. The larger the momentum coefficient, the greater the norm difference between SGD and SGDP.
In addition, it can be seen that the improvement in accuracy of SGDP also increases as the momentum coefficient increases.
The experiment shows that our optimizer can be used with most of the momentum coefficients, especially when the momentum coefficient is large, the effect of our optimizer is significant and essential for the high performance.
\begin{table}[!htb]
\centering
\caption{\small \textbf{Analysis with momentum coefficient.} We measured the difference between our SGDP and SGD with various momentum coefficients.}
\label{tab:momentum-coefficient}
\vspace{0em}
\small
\begin{tabular}{@{}ccccccc@{}}
\toprule
\multirow{2}{*}{Momentum} & \multicolumn{2}{c}{SGD} & \multicolumn{2}{c}{SGDP} & \multicolumn{2}{c}{Difference} \\ \cmidrule(l){2-7} 
                          & Norm$^\text{last}$     & Accuracy     & Norm$^\text{last}$      & Accuracy     & Norm$^\text{last}$         & Accuracy        \\ \midrule
0.5                       & 7.60      & 67.57        & 6.88      & 67.75       & -0.7        & 0.18           \\
0.8                       & 8.31     & 67.64        & 6.57      & 68.55        & -1.7        & 0.91            \\
0.9                       & 8.53     & 67.42       & 6.33      & 68.95       & -2.2         & 1.53           \\
0.95                      & 8.76     & 67.67       & 6.23      & 69.12       & -2.5        & 1.45            \\
0.975                     & 8.73     & 67.56       & 6.14      & 69.17       & -2.6        & 1.60           \\ \bottomrule
\end{tabular}
\end{table}

\subsection{Comparison at the same computation cost}
\label{appendixsec:computation_cost}
As specified in Section~\ref{subsec:our-method}, our optimizers requires an additional computation cost, which increases the training cost by 8\%. In general, the optimizer's performance is compared in the same iteration, and we followed this convention in other experiments. However, the training cost is also an important issue, so we conduct further verification of our optimizer through comparison at the same training budget. The experimental setup is simple. 
We performed imagenet classification in Section~\ref{sec:exp-imgclassification} with only 92\% epochs for our optimizers (\sgdn and \adamn) and set the training budget of our optimizer and baseline optimizer to be the same.
The results are shown in Table~\ref{table:classification_less_epoch}.
Training iteration is reduced, so the performance of our optimizer is reduced, but it still outperforms the baseline optimizer.
Thus, it can be seen that our optimizer outperforms the baseline optimizer not only on the same iteration, but also on the same training budget.

\begin{table}[!htb]
\centering
\small
\vspace{-.0em}
\caption{\small \textbf{ImageNet classification comparison at the same computation cost.} Accuracies of state-of-the-art networks trained with \sgdn and \adamn. We also conduct training over 92\% epochs with \sgdn and \adamn for the comparison in the same computation cost.}
\label{table:classification_less_epoch}
\vspace{-0em}
\begin{tabular}{@{}llccccc@{}}
\toprule
Architecture                  &  SGD & \sgdn (92\%) & \sgdn &AdamW & \adamn (92\%) & \adamn \\ \midrule
MobileNetV2 &  71.55 & 71.85 \greenpscript{+0.30} & 72.09 \greenpscript{+0.54}  & 71.21 & 72.34 \greenpscript{+0.13} & 72.45 \greenpscript{+1.24} \\ 
ResNet18 &  70.47 & 70.54 \greenpscript{+0.07} & 70.70 \greenpscript{+0.23}  & 70.39 & 70.64 \greenpscript{+0.25} & 70.82 \greenpscript{+0.43} \\ 
ResNet50 & 76.57 & 76.58 \greenpscript{+0.01} & 76.66 \greenpscript{+0.09} & 76.54 & 76.84 \greenpscript{+0.30} & 76.92 \greenpscript{+0.38} \\
\bottomrule
\end{tabular}
\vspace{-.5em}
\end{table}

\end{document}